\theoremstyle{plain}
\newtheorem{theorem}{Theorem}[section]
\theoremstyle{definition}
\newtheorem{definition}[theorem]{Definition}
\theoremstyle{remark}
\newcommand{\Ac}{{\mathcal{A}}}
\newcommand{\Dc}{{\mathcal{D}}}
\newcommand{\Fc}{{\mathcal{F}}}
\newcommand{\Nc}{{\mathcal{N}}}
\newcommand{\Pc}{{\mathcal{P}}}
\newcommand{\Sc}{{\mathcal{S}}}
\newcommand{\Xc}{{\mathcal{X}}}
\newcommand{\Yc}{{\mathcal{Y}}}
\newcommand{\Eb}{{\mathbb{E}}}
\newcommand{\Nbb}{{\mathbb{N}}}
\newcommand{\Rbb}{{\mathbb{R}}}
\newcommand{\bfa}{{\mathbf{a}}}
\newtheorem{prop}{Proposition}
\DeclareMathOperator*{\argmax}{arg\,max}
\DeclareMathOperator*{\argmin}{arg\,min}
\def\expec#1#2{{\mathbb{E}}_{#1}[ #2 ]}
\def\expecf#1#2{{\mathbb{E}}_{#1}\left[ #2 \right]}
\def\truefunction{\mathbf{f}}
\def\eqref#1{Eq.~(\ref{#1})}
\newcommand{\tx}{{\mathfrak{x}}}
\newcommand{\ta}{{\mathfrak{a}}}
\title{Generalizing Bayesian Optimization with Decision-theoretic Entropies}
\author{
    Willie Neiswanger\thanks{The first two authors contributed equally to this work.}~~, Lantao Yu$^*$, Shengjia Zhao, Chenlin Meng, Stefano Ermon\\
    Computer Science Department, Stanford University\\
    Stanford, CA 94305\\
    \texttt{\{neiswanger,lantaoyu,sjzhao,chenlin,ermon\}@cs.stanford.edu}\\
}
\begin{document}

\maketitle

\vspace{-2mm}
\begin{abstract}
Bayesian optimization (BO) is a popular method for efficiently inferring optima of an expensive black-box function via a sequence of queries. Existing information-theoretic BO procedures aim to make queries that most reduce the uncertainty about optima, where the uncertainty is captured by Shannon entropy. However, an optimal measure of uncertainty would, ideally, factor in how we intend to use the inferred quantity in some downstream procedure. In this paper, we instead consider a generalization of Shannon entropy from work in statistical decision theory \citep{DeGroot1962-ob, rao1984convexity}, which contains a broad class of uncertainty measures parameterized by a problem-specific loss function corresponding to a downstream task. We first show that special cases of this entropy lead to popular acquisition functions used in BO procedures such as knowledge gradient, expected improvement, and entropy search. We then show how alternative choices for the loss yield a flexible family of acquisition functions that can be customized for use in novel optimization settings. Additionally, we develop gradient-based methods to efficiently optimize our proposed family of acquisition functions, and demonstrate strong empirical performance on a diverse set of sequential decision making tasks, including variants of top-$k$ optimization, multi-level set estimation, and sequence search.
\end{abstract}
% \vspace{-5mm}
\section{Introduction}
\label{sec:introduction}
% \vspace{-2mm}

\everypar{\looseness=-1}
Bayesian optimization (BO) is a popular method for efficient global
optimization of an expensive black-box function, which leverages a probabilistic
model to judiciously choose a sequence of function queries.
In BO, there are a few key paradigms that motivate existing methodologies.
% There are a few key paradigms that motivate existing BO methodology.
One paradigm is
decision-theoretic BO, which includes methods such as
\textit{knowledge gradient} \citep{frazier2009knowledge} and
\textit{expected improvement} \citep{movckus1975bayesian, jones1998efficient}.
At each iteration of BO, these methods aim to make a query that maximally increases
the expected value, under the posterior, of a final estimate of the optima
(sometimes referred to as a \textit{terminal action}).
Another common paradigm is
based on maximal uncertainty reduction and includes
information-based BO methods such as the family of \textit{entropy search} methods
\cite{Hennig2012-zu, hernandez2014predictive, wang2017max, neiswanger2021bayesian}.
At each iteration of BO, these methods aim to make a query that most
reduces the uncertainty, under the posterior, about a quantity of interest
(such as the location of the optima).

In the uncertainty-reduction paradigm, the information-based methods have predominantly
used Shannon entropy as the measure of uncertainty.
While Shannon entropy is one measure of uncertainty that we could aim to reduce at each
iteration of BO,
% it is a somewhat arbitrary choice, and it is unclear that a single
% fixed measure of uncertainty should be most-suitable for every task.
it is not the only measure, and it is not necessarily the most ideal measure
for every optimization task.
For instance, an optimal uncertainty function would, ideally,
factor in how we intend to
\textit{use} the final uncertainty about an inferred quantity
in some downstream procedure.

In this paper, we develop a framework that aims to first unify and then extend these two paradigms.
Specifically, we adopt a generalized definition of entropy from past work in Bayesian
decision theory \citep{DeGroot1962-ob, rao1984convexity, Grunwald2004-ai}, which
proposes a family of \textit{decision-theoretic entropies} parameterized
% in which a \textit{decision-theoretic entropy} is parameterized
% which defines a \textit{decision-theoretic entropy} parameterized
by a problem-specific loss function and action set.
% This family of \textit{decision theoretic entropies} 
This family includes Shannon entropy as a special case.
Using this generalized entropy, we can view information-based BO methods as
instances of decision-theoretic BO, with
a terminal action chosen from a different type of action set.
Similarly, this framework
also includes as special cases the decision-theoretic methods such as
expected improvement and knowledge gradient, which yields an uncertainty-reduction view of these methods. 
Beyond this unified view, our framework can be easily adapted to novel problem settings by choosing an appropriate loss and action set tailored to a given downstream use case. This allows for handling new optimization scenarios that have not previously been studied and where no BO procedure currently exists.  

As an example, there are many real-world problems where we want to estimate a set of optimal points, rather than a single global optimum.
Use cases include when we wish to find a set of highest-value points % with the greatest sum,
subject to some constraints on the similarity between these points
(e.g. to produce a diverse set of candidates in drug or materials design
\cite{pyzer2018bayesian, terayama2021black, tran2021computational}),
or points which satisfy some sequential relation (e.g. to construct a library of molecules
that attains a sequence of desired measurements \citep{fong2021utilization}).
Further, we may wish to estimate other properties of a black-box function, such as certain curves, surfaces, or subsets of the domain \cite{zhong2020accelerated, konakovic2020diversity, singh2008nonparametric}.
Due to the vast number of possibilities, 
most custom problem settings have not been explicitly studied in the literature.
A key advantage of our framework is that it provides a way to approach %handle
% lets a user handle
these problems where no suitable methods have been developed.

Additionally, since we define this family of generalized entropies in a standardized way,
we can develop a common acquisition optimization procedure, which 
applies generically to many members of this family
(where each member is induced  by a specific loss function and action set).
In particular, we develop a fully differentiable acquisition optimization method
inspired by recent work on one-shot knowledge gradient procedures \citep{balandat2020botorch}.
This yields an effective and computationally efficient algorithm for many optimization
and sequential decision making tasks, as long as the problem-specific loss function is differentiable.
In summary, our main contributions are the following:
\begin{itemize}[parsep=0pt, topsep=0pt, itemsep=5pt, leftmargin=8mm]
    \item We propose an acquisition function based on a family of
    \textit{decision-theoretic entropies} parameterized by a
    loss function $\ell$ and action set $\Ac$.
    Under certain choices of $\ell$ and $\Ac$, we can view multiple BO 
    acquisition functions in a single decision-theoretic perspective, which sheds light
    on the settings for which each is best suited.
    
    \item By selecting a suitable $\ell$ and $\Ac$, we can produce a problem-specific
    acquisition function, which is tailored to a given downstream use case.
    % custom problem setting.
    This yields a customizable BO method
    that can be applied to
    new optimization problems and other sequential decision making tasks,
    where no applicable methods currently exist.
    \item We develop an acquisition optimization procedure
    that applies generically to many instance of our framework.
    This procedure is computationally efficient, using a gradient-based approach.
    \item We demonstrate that our method shows strong empirical performance on a diverse set of
    tasks including top-$k$ optimization with diversity, multi-level set estimation, and sequence
    search.
\end{itemize}
% \vspace{-2mm}
\section{Setup}
\label{sec:setup}
% \vspace{-2mm}

Let $\truefunction: \Xc \rightarrow \Yc \subset \Rbb$ denote an expensive black-box
function that maps from an input search space $\Xc$ to an output space
$\Yc$, and $\truefunction \in \Fc$.
We assume that we can evaluate $\truefunction$ at an input $x \in \Xc$,
and will observe a  noisy function value $y_x = \truefunction(x) + \epsilon$,
where $\epsilon \sim \Nc(0, \eta^2)$.

We also assume that our uncertainty about $\truefunction$ is captured by a 
probabilistic model with prior distribution $p(f)$, which reflects
our prior beliefs about $\truefunction$. Given a dataset of observed function
evaluations $\Dc_t = \{(x_i, y_{x_i})\}_{i=1}^{t-1}$, our model gives a posterior
distribution over $\Fc$, denoted by $p(f | \Dc_t)$.

Suppose that, after a given BO procedure is complete, we intend to choose
a terminal action $a$ from some set of actions $\Ac$, and then incur
a loss based on both this action $a$ and the function $\truefunction$.
We denote this loss as $\ell: \Fc \times \Ac \rightarrow \mathbb{R}$.
As one example, after the BO procedure, suppose we 
make a single guess for the function maximizer, and then incur a 
loss based on the value of the function at this guess. In this case,
the action set is $\Ac = \Xc$ and the loss is
$\ell(\truefunction, a) = -\truefunction(a)$.

% make a single guess $\hat{x}$ of the function maximizer, and then incur a 
% loss based on the value of the function at $\hat{x}$. In this case,
% the action set is $\Ac = \Xc$ and the loss is
% $\ell(\truefunction, a) = \ell(\truefunction, \hat{x}) = -\truefunction(\hat{x})$.

\section{Decision-theoretic Entropy Search}
\label{sec:hentropysearch}
% \vspace{-2mm}

\everypar{\looseness=-1}

In this section, we first describe a family of \textit{decision-theoretic entropies} from
past work in Bayesian decision theory \citep{DeGroot1962-ob, rao1984convexity, Grunwald2004-ai},
which are parameterized by a problem-specific action set $\Ac$ and loss function $\ell$.
This family includes Shannon entropy as a special case.
We denote this family using the symbol $H_{\ell, \Ac}$, and refer to it as the
\textit{$H_{\ell, \Ac}$-entropy}.

\begin{definition} ($H_{\ell, \Ac}$-entropy of $f$).
\label{def:hentropy}
Given a prior distribution $p(f)$ on functions, and a dataset $\Dc$ of
observed function evaluations, the posterior $H_{\ell, \Ac}$-entropy with
loss $\ell$ and action set $\Ac$ is defined as
\begin{align}
    \label{eq:hentropy}
    H_{\ell, \Ac} \left[ f \mid \Dc \right] =
    \inf_{a \in \Ac} \expecf{p(f \mid \Dc)}{ \ell(f, a) }.
\end{align}
\end{definition}
Intuitively, after expending our budget of function queries, suppose
that we must make a terminal action $a^* \in \Ac$,
where this action incurs a loss $\ell(f, a^*)$ defined by the loss $\ell$
and function $f$. Given a posterior $p(f | \Dc)$ that
describes our belief about $f$ after observing $\Dc$,
we take the terminal action $a^*$ to be the \textit{Bayes action}, i.e.
the action that  minimizes the posterior expected loss,
$a^* = \arg\inf_{a \in \Ac} \expecf{p(f|\Dc)}{\ell(f, a)}$.
The $H_{\ell, \Ac}$-entropy can then be viewed as the posterior expected loss of the
Bayes action.
We next describe how this generalizes Shannon entropy, and why it is a reasonable definition
for an uncertainty measure. 

\vspace{-2mm}
\paragraph{Example: Shannon entropy}
Let $\Pc(\Fc)$ denote a set of probability distributions on a function space $\Fc$,
which we assume contains the posterior distribution $p(f \mid \Dc) \in \Pc(\Fc)$.
Suppose, for the $H_{\ell, \Ac}$-entropy, that
we let the action set $\Ac = \Pc(\Fc)$,
% the set of all probability distributions over the function space $\Fc$,
and loss function $\ell(f, a) = - \log a(f)$, for $a \in \Pc(\Fc)$.
Unlike the previous examples, note that the action set is now a set of distributions.

Then, the Bayes action will be $a^*$ $=$ $p(f \mid \Dc)$ (this can be shown by
writing out the definition of the Bayes action as a cross entropy,
see Appendix~\ref{sec:app-proof-es}), and thus
\begin{align}
H_{\ell, \Ac}[f \mid \Dc]
% = \expecf{p(\theta \mid \Dc)}{- \log p(\theta \mid \Dc)}
= \expecf{p(f \mid \Dc)}{- \log a^*(f) }
= H[f \mid \Dc],
\end{align}
where $H[f \mid \Dc] = - \int p(f \mid \Dc) \log p(f \mid \Dc)$ is the 
Shannon differential entropy.
Thus, the $H_{\ell, \Ac}$-entropy using the above $(\ell, \Ac)$ is equal to the
Shannon differential entropy.
% i.e. the $H_{\ell, \Ac}$-entropy of $p(f \mid \Dc)$ is equal to the
% Shannon entropy of $p(f \mid \Dc)$ .

Note that we have focused here on the Shannon entropy of the
posterior over functions $p(f \mid \Dc)$.
In Section~\ref{sec:existingacqfunctions} we show how this example can
be extended to the Shannon entropy
of the posterior over properties of $f$, such as
the location (or values) of optima,
which will provide a direct equivalence to entropy search methods in BO.

\vspace{-1mm}
\paragraph{Why is this a reasonable measure of uncertainty?}
The $H_{\ell, \Ac}$-entropy has been interpreted as a measurement of uncertainty in the
literature because it satisfies a few intuitive properties.
First, similar to Shannon differential entropy, the $H_{\ell, \Ac}$-entropy is a
\textit{concave uncertainty measure} \citep{DeGroot1962-ob, Grunwald2004-ai}.
Intuitively, if we have two distributions $p_1$ and $p_2$, and flip a coin to sample
from $p_1$ or $p_2$, then we should have less uncertainty if we were told the outcome
of the coin flip than if we weren't.
In other words, the average uncertainty of $p_1$ and $p_2$ (i.e. coin flip outcome \textit{known})
should be less than the uncertainty of $0.5p_1 + 0.5p_2$ (coin flip outcome \textit{unknown}).
Since $H_{\ell, \Ac}$ is concave, it has this property.
% Concave uncertainty measures, such as $H_{\ell, \Ac}$, have this property.
As a consequence---also similar to Shannon differential entropy---the
% we have the key property that
$H_{\ell, \Ac}$-entropy of the posterior is less than the
$H_{\ell, \Ac}$-entropy of the prior, in expectation.
Intuitively, whenever we make additional observations (i.e. gain more information),
the posterior entropy is expected to decrease.

\vspace{-1mm}
\paragraph{Acquisition function}
We propose a family of acquisition functions for BO based on the $H_{\ell, \Ac}$-entropy, 
which are similar in structure to information-theoretic acquisition
functions in the entropy search family.
% \citep{Hennig2012-zu, hernandez2014predictive, wang2017max}.
Like these, our acquisition function selects the query
$x_t \in \Xc$ that maximally reduces the uncertainty, as characterized by
the $H_{\ell, \Ac}$-entropy, in expectation. We refer to this quantity as the
\textit{expected $H_{\ell, \Ac}$-information gain} (EHIG).

\vspace{1mm}
\begin{definition} (Expected $H_{\ell, \Ac}$-information gain).
\label{def:ehig}
Given a prior $p(f)$ on functions and a dataset of observed function evaluations 
$\Dc_t$, the expected $H_{\ell, \Ac}$-information gain (EHIG), with loss $\ell$ and action
set $\Ac$, is defined as
\begin{align}
    \label{eq:ehig}
    \text{EHIG}_t(x; \ell, \Ac) = 
    H_{\ell, \Ac} \left[ f \mid \Dc_t \right]
    - \expecf{p(y_x | \Dc_t)}{
        H_{\ell, \Ac} \left[f \mid \Dc_t \cup \{(x, y_x)\} \right]
    }.
\end{align}
\end{definition}
There are multiple benefits to developing this acquisition function.
% We pause here to note a few things about this acquisition function.
% We pause here to note the benefits this EHIG family provides.
Though similar in form to entropy search acquisition functions, 
the EHIG yields (based on the definition of $H_{\ell, \Ac}$) the one-step
Bayes optimal query
for the associated decision problem specified by the given loss $\ell$ and action set $\Ac$.
We prove in Section~\ref{sec:existingacqfunctions} that the EHIG
casts both uncertainty-reduction and decision-theoretic acquisition
functions under a common
% decision-theoretic
umbrella, using different choices of $\ell$ and $\Ac$;
this standardization provides guidance on which acquisition function is
optimal for a given use case, based on details of the associated
terminal action.
% the decision problem associated with a  given use case.
More interestingly, in Section~\ref{sec:novelacqfunctions} we show how the
EHIG allows us to derive problem-specific acquisition functions tailored to novel
optimization and sequential decision making tasks.
And importantly, since we frame acquisition optimization of this family
% of acquisition functions
in a common way---as a bilevel optimization problem over the sample space and action space---we
% \williex{as a bilevel optimization problem involving a Bayes action}
can develop a single acquisition optimization method that can generically apply to many
custom tasks (Section~\ref{sec:acqopt}).
% ``one-shot'' gradient-based
% members of this family
% \williex{to many different tasks}

In Algorithm~\ref{alg:hes}, we present \textsc{$H_{\ell, \Ac}$-Entropy Search},
our full Bayesian optimization procedure using the EHIG acquisition function.
This procedure takes as input a loss $\ell$, action set $\Ac$, and prior model $p(f)$.
At each iteration, the procedure optimizes $\text{EHIG}_t(x; \ell, \Ac)$ to select a 
design $x_t \in \Xc$ to query, and then evaluates the black-box function on this
design to observe an outcome $y_{x_t} \sim \truefunction(x_t) + \epsilon$.
In Section~\ref{sec:acqopt} we describe methods for
optimizing the EHIG acquisition function via gradient-based
procedures, which provide a computationally efficient algorithm
for many
% certain cases of $\Xc$,
$\Ac$ and $\ell$.

\begin{algorithm}
    \caption{\textsc{$H_{\ell, \Ac}$-Entropy Search}}
    % \caption{\textsc{Decision-theoretic Entropy Search}}
    \label{alg:hes}
        % \hspace*{\algorithmicindent}\hspace{1pt}
    \textbf{Input:} initial dataset $\Dc_1$,
    prior $p(f)$, action set $\Ac$, loss $\ell$.
    \begin{algorithmic}[1]
      \For{$t = 1,\ldots,T$}
        \State $x_t \leftarrow \argmax_{x \in \Xc} \text{EHIG}_t(x; \ell, \Ac)$
            \Comment{Optimize the $\textrm{EHIG}$ acquisition function}
        \State $y_{x_t} \sim \truefunction(x_t) + \epsilon$
            \Comment{Evaluate the function $\truefunction$ at $x_t$}
        \State $\Dc_{t+1} \leftarrow \Dc_t \cup \{(x_t, y_{x_t})\}$
            \Comment{Update the dataset}
      \EndFor 
    %   \State Return $\dataset_{T+1}$.
    \end{algorithmic}
    % \hspace*{\algorithmicindent}\hspace{1pt}
    \textbf{Output:} distribution
    $p(f \mid \Dc_{T+1})$
\end{algorithm}
\section{A Unified View of Information-based and Decision-theoretic Acquisitions}
\label{sec:existingacqfunctions}
% \vspace{-2mm}

\everypar{\looseness=-1}

In this section, we aim to show how acquisition functions commonly used in 
BO are special cases of the proposed EHIG family,
for particular choices of $\ell$ and $\Ac$.
This will allow us to view each acquisition function (including information-based
ones) from the perspective of a common decision problem:
after the BO procedure is complete, we choose a terminal action from 
action set $\Ac$ and then incur a loss defined by $\ell$.
Each acquisition function can be viewed as reducing
the posterior uncertainty over $f$ in a way that yields a 
terminal action with lowest expected loss.

This unified view provides two main benefits.
First, it sheds light on the particular scenarios in which one of the existing 
acquisition functions is optimal over the others (which we focus on in this section).
Second, it shows how using the EHIG with other choices for $\ell$ and $\Ac$
provides new acquisition functions for a broader set of optimization scenarios and
related tasks (which is the focus of Section~\ref{sec:novelacqfunctions}).

% \vspace{-1mm}
\paragraph{Information-based acquisition functions}
% \paragraph{Entropy search (ES, PES, MES)}
We state the family of entropy search acquisitions function
in a general way that includes the entropy search (ES) \citep{Hennig2012-zu},
predictive entropy search (PES) \citep{hernandez2014predictive},
and max-value entropy search (MES) \citep{wang2017max} algorithms.
Let $\theta_f \in \Theta$ denote a property of $f$ we would like to infer.
For example, we could set $\theta_f = \argmax_{x \in \Xc} f(x) = x^* \in \Xc$,
i.e. the location of the global maximizer of $f$, or
$\theta_f = \max_{x \in \Xc} f(x) \in \Rbb$, i.e. the maximum value achieved by $f$ in $\Xc$.
This family of entropy search acquisition function can then be written as:
\begin{align}
    \text{ES}_t(x) = H \left[ \theta_f | \Dc_t \right]
    - \expecf{p(y_x | \Dc_t)}{
        H \left[ \theta_f | \Dc_t \cup \{(x, y_x)\} \right]
    }. \nonumber
\end{align}
We can view this acquisition function as a special case of the $\mathrm{EHIG}$
in the following way.
Suppose, after the BO procedure is complete, we choose a distribution $q$ from
a set of distributions $\Pc(\Theta)$ and then incur a loss 
equal to the negative log-likelihood of $q$ for the true value of
$\theta_f$.
In this case, we view the action set as $\Ac = \Pc(\Theta)$ and the loss
function as $\ell(f, a) = -\log a(\theta_f)$, where $a \in \Ac$.
To visualize this, in the case where $\theta_f = x^*$, see
Figure~\ref{fig:bayesactions} (left), which shows the terminal action 
(gold density function) and corresponding loss (horizontal dashed line).

Under this choice, the $H_{\ell, \Ac}$-entropy of $f$ will be equal to the
Shannon entropy of $\theta_f$, and thus the $\mathrm{EHIG}_t$ will be equal
to $\mathrm{ES}_t$.
We formalize this in the following proposition.

\vspace{1mm}
\begin{prop}
\label{prop:equiv_es} 
If we choose $\Ac = \Pc(\Theta)$ and $\ell(f, q) = - \log q( \theta_f )$, then the $\mathrm{EHIG}$ is equivalent to the entropy search acquisition function,
i.e. $\mathrm{EHIG}_t(x; \ell, \Ac) = \mathrm{ES}_t(x)$.
\end{prop}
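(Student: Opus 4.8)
The plan is to show that, for this particular choice of $(\ell,\Ac)$, the $H_{\ell,\Ac}$-entropy of $f$ coincides with the Shannon entropy of the property $\theta_f$; once this is established, the equality $\mathrm{EHIG}_t(x;\ell,\Ac) = \mathrm{ES}_t(x)$ follows termwise from Definitions~\ref{def:hentropy} and~\ref{def:ehig}.

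First I would fix a dataset $\Dc$ and let $p(\theta_f\mid\Dc)$ denote the law of $\theta_f$ under $p(f\mid\Dc)$ (the pushforward of the posterior along the map $f\mapsto\theta_f$). Since $\ell(f,q) = -\log q(\theta_f)$ depends on $f$ only through $\theta_f$, a change of variables gives, for every $q\in\Pc(\Theta)$,
\begin{align}
\expecf{p(f\mid\Dc)}{\ell(f,q)} = \expecf{p(f\mid\Dc)}{-\log q(\theta_f)} = \expecf{p(\theta_f\mid\Dc)}{-\log q(\theta_f)},
\end{align}
which is precisely the cross-entropy between $p(\theta_f\mid\Dc)$ and $q$. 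Next I would minimize over $q$: by Gibbs' inequality (equivalently, nonnegativity of the KL divergence),
\begin{align}
\expecf{p(\theta_f\mid\Dc)}{-\log q(\theta_f)} = H[\theta_f\mid\Dc] + \KL\!\left(p(\theta_f\mid\Dc)\,\big\|\,q\right) \ \geq\ H[\theta_f\mid\Dc],
\end{align}
with equality when $q = p(\theta_f\mid\Dc)$. Because $p(\theta_f\mid\Dc)\in\Pc(\Theta)$, the infimum in Definition~\ref{def:hentropy} is attained, the Bayes action is $a^\ast = p(\theta_f\mid\Dc)$, and hence $H_{\ell,\Ac}[f\mid\Dc] = H[\theta_f\mid\Dc]$. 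This is the same cross-entropy argument used in the Shannon-entropy example above, now applied to the pushforward $p(\theta_f\mid\Dc)$ rather than to $p(f\mid\Dc)$ directly (it is the computation deferred to Appendix~\ref{sec:app-proof-es}).

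Finally I would apply this identity once with $\Dc = \Dc_t$ and once, inside the outer expectation, with $\Dc = \Dc_t\cup\{(x,y_x)\}$, and substitute into Definition~\ref{def:ehig}:
\begin{align}
\mathrm{EHIG}_t(x;\ell,\Ac)
&= H_{\ell,\Ac}[f\mid\Dc_t] - \expecf{p(y_x\mid\Dc_t)}{H_{\ell,\Ac}[f\mid\Dc_t\cup\{(x,y_x)\}]} \nonumber \\
&= H[\theta_f\mid\Dc_t] - \expecf{p(y_x\mid\Dc_t)}{H[\theta_f\mid\Dc_t\cup\{(x,y_x)\}]} = \mathrm{ES}_t(x).
\end{align}
The only genuine obstacle is the cross-entropy minimization step: one must verify that $p(\theta_f\mid\Dc)$ really belongs to the action set $\Pc(\Theta)$ so that the $\inf$ is achieved, and that the relevant quantities are well defined (finiteness of the differential entropies, and the degenerate case where $q$ vanishes on a set of positive posterior mass, where the loss is $+\infty$ and the inequality is trivial). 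Granting the standard regularity assumptions implicit in the entropy-search literature, everything else is bookkeeping with the two definitions.
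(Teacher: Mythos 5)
Your proposal is correct and follows essentially the same route as the paper's proof: rewrite the posterior expected loss as the cross-entropy of $q$ against the pushforward $p(\theta_f\mid\Dc)$ via a change of variables, identify the Bayes action as $p(\theta_f\mid\Dc)$ (the paper invokes the cross-entropy minimization directly, you spell it out via Gibbs'/KL nonnegativity), and then substitute termwise into Definition~\ref{def:ehig}. The added remarks on attainment of the infimum and finiteness are fine but do not change the argument.
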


\vspace{-2mm}
\begin{proof}[Proof of Proposition~\ref{prop:equiv_es}]
See Appendix~\ref{sec:app-proof-es}.
% The proof is given in the appendix.
% \phantom\qedhere
\end{proof}

\vspace{-2mm}
\paragraph{Decision-theoretic acquisition functions}
We next describe how the $\mathrm{EHIG}$ generalizes decision-theoretic
acquisition functions such as knowledge gradient (KG) and
expected improvement (EI).
Since these acquisition functions are often motivated from a 
perspective of a terminal decision, it is straightforward to show
how they are a special case of the $\mathrm{EHIG}$.
However, the choice of $\Ac$ and $\ell$ here
is insightful to review before extending $\mathrm{EHIG}$ to other scenarios.

First, the KG acquisition function can be written
\begin{align*}
    % \label{eq:kgacq}
    \text{KG}_t(x) = \expecf{p(y_x \mid \Dc_t)}{\mu_{t+1}^*(x, y_x)} - \mu_t^*,
\end{align*}
where $\mu_t^* = \sup_{x' \in \Xc} \expecf{p(f \mid \Dc_t)}{f(x')}$ is the
max value of the posterior mean of $f$ given data $\Dc_t$, and
$\mu_{t+1}^*(x, y_x) = \sup_{x' \in \Xc}
\expecf{p(f \mid \Dc_t \cup \{(x, y_x)\})}{f(x')}$
is the  max value of the posterior mean,
given both data $\Dc_t$ and observation $(x, y_x)$.
Second, the EI acquisition function can be written
\begin{align*}
    \text{EI}_t(x) = \expecf{p(y_x \mid \Dc_t)}{\max(y_x - f_t^*, 0)},
\end{align*}
where $f_t^* = \max \{\hat{f}(x_i)\}_{i=1}^{t-1}$, for $x_i \in \Dc_t$
and $\hat{f}(x_i)$ is the posterior expected value of $f$ at $x_i$.
This definition is equal to
the standard formulation of EI in the noiseless setting
(i.e. when $y_x = \truefunction(x)$ for queried $x$)
and the \textit{plug-in} formulation of EI in the noisy setting,
when $y_x = \truefunction(x) + \epsilon$ \citep{picheny2013benchmark, brochu2010tutorial}.

To view these decision-theoretic acquisition functions as special 
cases of the $\text{EHIG}$, suppose that after BO is complete,
we aim to make a single guess $x^* \in \Ac$ for the maximizer of $f$,
and then incur a loss equal to the value of the function at $x^*$, i.e.
$\ell(f, x^*) = -f(x^*)$.
For KG, we let $\Ac = \Xc$, in which case the
Bayes action $a^*$ is the maximizer of the posterior mean,
and for EI, we let $\Ac = \{x_i\}_{i=1}^{t-1}$,
in which case $a^*$ is the best queried point.
We visualize these as gold vertical lines in Figure~\ref{fig:bayesactions} (center panels).
In Appendix~\ref{sec:app-proofs}, we prove this equivalence with $\textrm{EHIG}_t$,
for $\text{KG}_t$ and $\text{EI}_t$.
% In the appendix,
% we include proofs that formally show the
% equivalence between $\text{EHIG}_t$ and either $\text{KG}_t$ or $\text{EI}_t$.
Additionally, in Appendix~\ref{sec:app-proof-pi} we discuss connections to the probability of improvement (PI) acquisition function.

We thus see two key differences here, in comparison with information-based BO:
(i) the terminal action $a^*$ is a point estimate of the optimizer $x^*$
rather than a distribution over $\Xc$, and (ii) the loss does not depend on the particular value
of the true optima $x^*$
% (nor on the quality of our estimate of it),
(nor on how accurately $a^*$ provides an estimate of $x^*$),
but rather only depends on the function value of the terminal action, $f(a^*)$.

\begin{figure*}[t]
\centering
\includegraphics[width=0.242\linewidth]{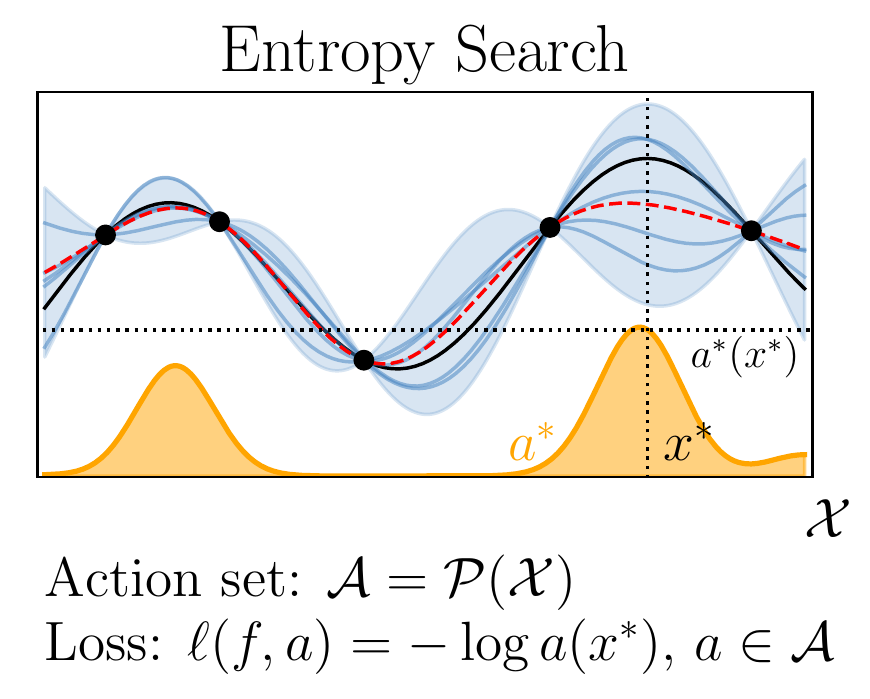}
\includegraphics[width=0.242\linewidth]{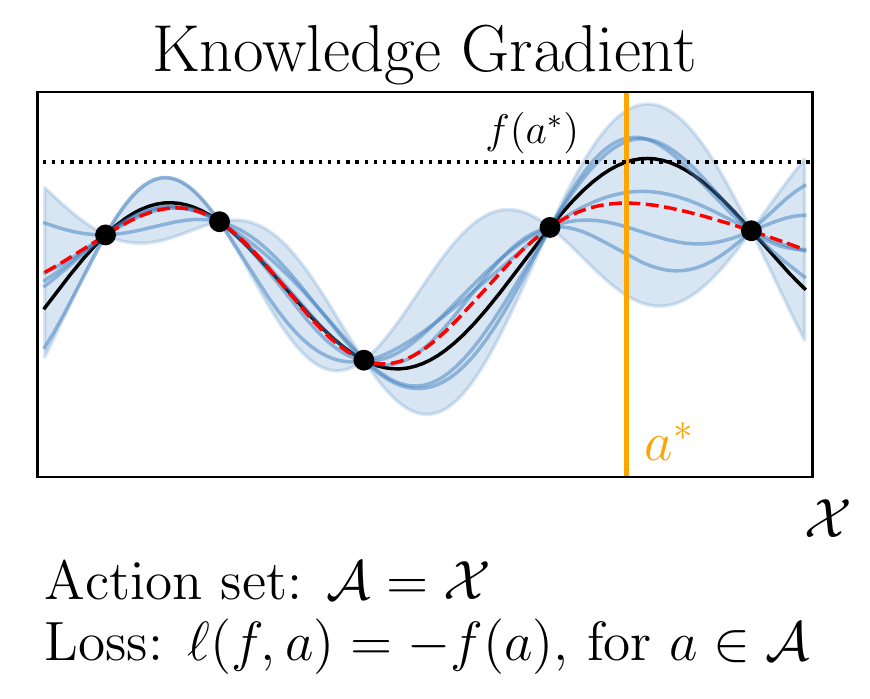}
\includegraphics[width=0.242\linewidth]{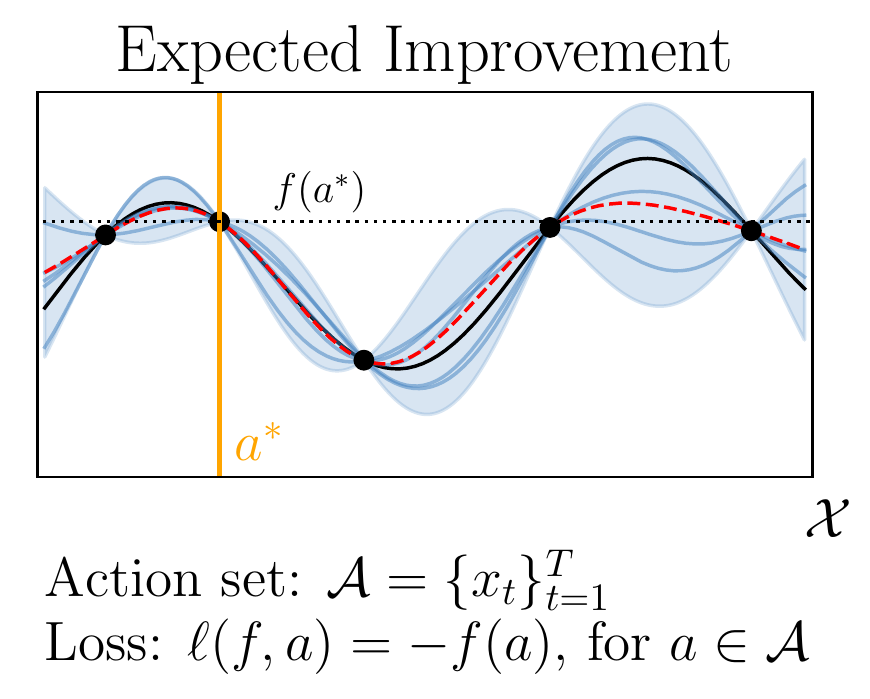}
\includegraphics[width=0.251\linewidth]{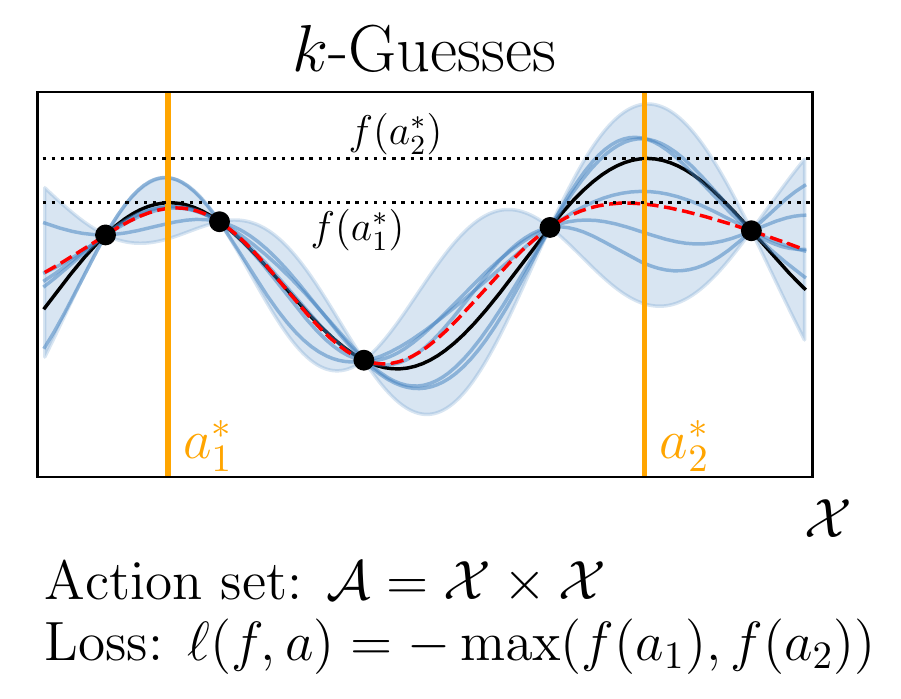}\\
\vspace{-1mm}
\caption{
\small
% \textbf{Visualizations of Bayes actions.}
Example acquisition functions, and their corresponding Bayes actions
$a^*$ visualized. For each, we write the associated action set $\Ac$ and
loss function $\ell$ below the plot. In each plot, the true function is
a solid black line, the posterior mean is a red dashed line, the
observed data are black dots, and the Bayes action is shown in gold.
See Section~\ref{sec:existingacqfunctions} for further discussion.
}
\label{fig:bayesactions}
\vspace{-3mm}
\end{figure*}

\section{A Framework to Derive New Acquisition Functions for Custom Tasks}
\label{sec:novelacqfunctions}
% \vspace{-1mm}

\everypar{\looseness=-1}

There are many real-world problems that go beyond simple black-box optimization,
which have not been explicitly studied in the literature, and for which there does not exist a suitable acquisition function.
For these use cases, we can define an action set and loss based on details of the problem,
and use the $\textrm{EHIG}$ to provide a corresponding problem-specific acquisiton function.
As examples of this, below we apply the $\textrm{EHIG}$ to a number of relevant
problems where, as far as we are aware, no corresponding acquisition function has
been developed by prior work.

\vspace{-1mm}
\paragraph{Illustrative example: $k$-guesses}

% As a simple example,
As a simple example to illustrate our framework,
suppose, after optimization is complete, we are allowed
to make a batch of $k$ guesses for the function maximizer $x^*$,
and then recieve a final reward based on the best guess.
This setup appears in cases where, after BO is complete,
we can make a \textit{batch} of final designs
(e.g. synthesize a final set of materials \cite{terayama2021black}
or train a final set of models \cite{wu2016parallel, snoek2012practical}),
and only care about the single \textit{best} design of the batch.
We can thus view the action set as
$\Ac = \Xc^k$, and loss as
$\ell(f, a) = - \max \left( f(a_1), \ldots, f(a_k) \right)$.

Figure~\ref{fig:bayesactions} (far right) provides a visualization of this for $k=2$.
In this scenario, one of the Bayes actions ($a_2^*$) is near the
maximizer of the posterior mean (similar to KG), while
the other ($a_1^*$) is separated from the first.
% There is a good reason for this: intuitively,
Intuitively, to minimize the expected loss, the second guess should have
both a high posterior mean, and also a low correlation
to the first guess---the first guess has a better chance of a
low loss, but in cases where it fails, we want the second
guess to \textit{not} fail (i.e. not match the first guess),
while also achieving a low loss.
In practice, this yields an $\textrm{EHIG}$ acquisition function that
has a similar but distinct exploration strategy from KG.
% In addition to acquiring information about the best estimate (top guess), it
% spends a small portion of the budget on queries that given information about the other $k$-$1$ guesses as well.
It spends a small portion of the budget on queries that give some 
information about not just the first guess, but the other $k$-$1$ guesses as well.

\vspace{-1mm}
\paragraph{Top-$k$ optimization with diversity}
Instead of a single
optimal point in $\Xc$,
% optimum of $\Xc$,
there are applications where we wish to estimate a set of top-$k$ optima, i.e. the subset of
$\Xc$ of size $k$ that has the highest sum of values under $f$.
Examples of this can be found in materials discovery \citep{liu2017materials, terayama2021black},
sensor networks \citep{abbasi2008mote, garnett2010bayesian}, and medicine \citep{xie2018diversity}.
Note that the goal of this problem is distinct from the $k$-guesses
% (``best-of-$k$'')
example described above.
When $\Xc$ is continuous, to avoid redundant solutions, we may wish to
carry out the task of \textit{top-$k$ optimization with diversity}, which aims to
find the top-$k$ optima such that $d(x_i, x_j) \geq c$, %for all
$\forall i, j \in \{1, \ldots, k\}$, for a problem-specific distance $d$.
As one example of our $\mathrm{EHIG}$ framework, suppose that we choose 
% To approach this task in our EHIG framework, we can define
$\Ac = \Xc^k$ (where $a = (a_1, \ldots, a_k) \in \Ac$ denotes a set of top-$k$ points)
and incur the loss %take the loss function to be
% $\ell(f, a) = - \sum_i f(a_i) - \sum_{1 \leq i < j \leq k} d(a_i, a_j).$
\begin{align}
    \label{eq:topkloss}
    % \ell(f, \bfa) = -\left( \sum_i f(\bfa_i) + \sum_{1 \leq i < j \leq k} d(\bfa_i, \bfa_j) \right).
    \ell(f, a) = - \sum_i f(a_i) - \sum_{1 \leq i < j \leq k} d(a_i, a_j).
\end{align}
Note that we can select the distance function $d$ here to match the problem-specific
constraint.
Intuitively speaking, this choice of $(\ell, \Ac)$ yields an EHIG acquisition function that
makes a sequence of queries which rotate focus between multiple diverse optimal points in the domain.

\vspace{-1mm}
\paragraph{Multi-level set estimation}
The goal of level set estimation (LSE) is to estimate a subset of the 
 design space $\Xc$, where function values are larger than
 a given threshold $c$, $\Sc_c = \{x \in \Xc : f(x) > c\}$.
% with value $f(x) > c$, where $c$ is a given threshold.
This task appears in a number of applications, including
catalyst design \citep{zhong2020accelerated},
interactive learning \citep{boecking2020interactive},
and environmental monitoring \citep{singh2008nonparametric}.
While many prior works have studied standard LSE, here we consider the task of 
multi-level set estimation (MLSE), where we are given $m$ thresholds satisfying $c_1 < \ldots < c_m$ and
want to estimate $m+1$ sets:
$\Sc_{i} = \{x \in \Xc : c_i < f(x) < c_{i+1}\}$
for $i=\{0, \ldots, m\}$ (where $c_0 := -\infty$ and $c_{m+1} := +\infty$).
This is useful in the above applications when we have more than one
threshold of interest---for example, public health policy makers must estimate regions where disease
prevalence exceeds 1\%, 2\%, etc., for graded reopening decisions \cite{oh2021design, yiannoutsos2021bayesian}.

As one approach to MLSE using the $\textrm{EHIG}$,
we focus on settings with a discrete set of design points
$\Xc_0 \subset \Xc$, $|\Xc_0| = J$ \citep{gotovos2013active, kandasamy2019myopic}.
Suppose, after querying is complete, we must choose a set of values
$a \in \Ac = [0,1]^{J \times m}$ (one for each $x \in \Xc_0$ and $i \in \{0, \ldots, m\}$),
which represent level set identity variables.
Suppose we then incur a loss with the following form, that depends on these identity variables $a$,
as well as on a flexible relation $r(f(x), c_i)$ between function values $f(x)$ and thresholds $c_i$, i.e.
\begin{align}
    \ell(f, a(x)) = - \sum_{i=1}^m \sum_{x \in \Xc_0}  a_i(x) r( f(x), c_i).
    \label{eq:mlseloss}
\end{align}
For instance, if $r(f(x), c_i) = f(x) - c_i$, then
the optimal $a_i(x)$ should specify the $c_i$-super level set for each
$i \in \{1, \ldots, m\}$, i.e. $a_i(x)=1$
for each $x \in \Xc_0$ with $f(x) > c_i$ and $a_i(x)=0$ otherwise.
This example loss yields an acquisition function that, empirically, focuses samples around the boundaries
of multiple level sets of a black-box function simultaneously.

\vspace{-1mm}
\paragraph{Sequence search}
We define \textit{sequence search} as the task of estimating a 
sequence of inputs $(x_1, \ldots, x_m) \in \Xc^m$
with outputs values matching a set of problem-specific criteria.
For example, we may wish to estimate a sequence of inputs corresponding to a
predefined set of function values
$(y^\circledast_1, \ldots, y^\circledast_m)$.
This finds applications in materials science, such as in the task of
synthesizing a nanoparticle library \citep{fong2021utilization} (i.e. finding a
set of input conditions that yield a set of nanoparticles of different pre-defined sizes).
As another example, in the context of public health, we may be interested in a set of
locations where vaccination rates equal some pre-specified values
(e.g. $(20\%, \ldots, 80\%)$) when making decisions involving vaccine allocations,
as we describe in Section~\ref{sec:experiments}.
As an example of these applications in our $\textrm{EHIG}$ framework,
we might have the action set $\Ac = \Xc^m$ and loss 
\vspace{-0.7mm}
\begin{align}
    \ell(f, \bfa) = \sum_{i=1}^m (f(\bfa_m) - y^\circledast_m)^2.
    \label{eq:mvsloss}
\end{align}
These examples all aim to show that the $\textrm{EHIG}$ can be used to define a problem-specific
acquisition function, which can be tailored to the details of a particular use case.
As a result, when used in Algorithm~\ref{alg:hes}, we gain a customizable optimization framework that can
be applied to a variety of novel problem settings with special-purpose losses.
\section{Gradient-based Acquisition Optimization}
\label{sec:acqopt}
% \vspace{-3mm}

\everypar{\looseness=-1}

At each iteration of \textsc{$H_{\ell, \Ac}$-Entropy Search} (Algorithm~\ref{alg:hes}),
we optimize the acquisition function to select the next query
$x_t = \argmax_{x \in \Xc} \text{EHIG}_t(x; \ell, \Ac)$.
Classically, zeroth order optimization routinues have been used
for acquisition optimization in BO. However, recent work
has developed gradient-based methods for optimizing certain
acquisition functions \citep{wilson2018maximizing, balandat2020botorch},
which can allow for efficient acquisition optimization over $\Xc$.
% These methods are especially useful for efficient acquisition optimization 
% in high dimensional design spaces $\Xc$.
We work on similar methodology here---namely, we develop a
gradient-based acquisition optimization procedure for
appropriate settings (i.e. assuming continuous $\Xc$ and $\Ac$, and certain conditions on $\ell$).
We can, for example, apply this gradient-based optimization to each of the acquisition functions
described in Section~\ref{sec:novelacqfunctions}, for which we show experimental
results in Section~\ref{sec:experiments}.

Similar to related work \citep{wilson2018maximizing, balandat2020botorch},
we give the following derivation with a focus on Gaussian process (GP) models,
though the methodology
can be extended to other models in which we can apply the reparameterization
procedure described below to differentiate through posterior model parameters.

\paragraph{Differentiable loss function}
We first describe a few assumptions that must be satisfied to carry out the
gradient-based optimization procedure.
Denote the posterior expected loss given $\Dc$ by
$L(\Dc, a) := \expecf{p(f \mid \Dc)}{\ell(f, a)}$.
We assume that this loss function depends only on the function value of
$f$ at a finite number of points, i.e. there exists functions
$\tx_1(a), \cdots, \tx_K(a)$, and a function $\ell': \Rbb^K \times \Ac \to \Rbb$,
for $K \in \Nbb$, such that  
\begin{align}
    \ell(f, a) = \ell'(f(\tx_1(a)), f(\tx_2(a)), \cdots, f(\tx_K(a)), a).
    \label{eq:loss_function_requirement}
\end{align}
This requirement is satisfied by the loss functions in
Section~\ref{sec:novelacqfunctions}. 
For brevity, denote the sequence $\tx_1(a), \cdots, \tx_K(a)$ by $\tx_{1:K}(a)$
and $f(\tx_1(a)), \cdots, f(\tx_K(a))$ by $f(\tx_{1:K}(a))$. 
We assume that the functions $\tx_k$ and $\ell'$ are differentiable with respect to all arguments.
Given a dataset $\Dc$ and GP prior, the posterior distribution of $f(\tx_K(a))$ is also
Gaussian. In particular, there exist functions
\begin{align*} 
    \mu: \tx_{1:K}(a) \times \Dc \mapsto \Rbb^K
    \hspace{2mm}\text{and}\hspace{2mm}
    U: \tx_{1:K(a)}  \times \Dc \mapsto \Rbb^{K \times K}
\end{align*} 
such that
$f(\tx_{1:K}(a)) = \mu(\tx_{1:K}(a); \Dc) +  U(\tx_{1:K}(a); \Dc) \epsilon$
where $\epsilon$ is drawn from a $K$-dimensional standard normal distribution.
We can combine the above results to get
\begin{align*}
    % \label{eq:newexpectedloss}
    L(\Dc, a) = \expecf{\epsilon}{\ell'(\mu(\tx_{1:K}(a); \Dc) + U(\tx_{1:K}(a); \Dc) \epsilon, a) }. 
\end{align*}
A key property is that we can compute unbiased gradients of this
with respect to both $\Dc$ and $a$, as
\begin{align*}
    \nabla L(\Dc, a) = \expecf{\epsilon}{ \nabla \ell'(\mu(\tx_{1:K}(a); \Dc)
    + U(\tx_{1:K}(a); \Dc) \epsilon, a) }. 
\end{align*}

\vspace{-1mm}
\paragraph{Differentiable acquisition function}
For a given input $x \in \Xc$, let $y(x, \Dc)$ denote the posterior predictive distribution of our model.
Note that there exists a deterministic function $\bar{y}(x, \Dc, \lambda)$ such that
$y(x, \Dc) = \bar{y}(x, \Dc, \lambda)$, where $\lambda$ is drawn from a standard normal distribution. 
Hence, if $\ell$ satisfies \eqref{eq:loss_function_requirement},
then we can optimize $\mathrm{EHIG}_t$ with gradient descent. In particular, we can write
\begin{align}
    \inf_{x \in \Xc} -\mathrm{EHIG}_t(x; \ell, \Ac)
    = \inf_{x \in \Xc} \inf_{\ta: \lambda \mapsto \Ac}
        \expec{\lambda, \epsilon}{ \ell'(\hat{\mu}(x, \ta(\lambda))
        + \hat{U}(x, \ta(\lambda)) \epsilon, \ta(\lambda)) }
    \label{eq:very_long}
\end{align}
where in \eqref{eq:very_long}, to avoid clutter, we use the shorthand
$\hat{\mu}(x, \ta(\lambda)) := \mu(\tx_{1:K}(\ta(\lambda)); \Dc\cup \bar{y}(x, \Dc, \lambda))$,
and $\hat{U}(x, \ta(\lambda)) := U(\tx_{1:K}(\ta(\lambda)); \Dc\cup \bar{y}(x, \Dc, \lambda))$.
Importantly, we can compute the unbiased gradient
of the quantity $\expec{\lambda, \epsilon}{ \ell'(\hat{\mu}(x, \ta(\lambda))
+ \hat{U}(x, \ta(\lambda)) \epsilon, \ta(\lambda)) }$.
In practice, we can also take gradients of a Monte Carlo estimate of
\eqref{eq:very_long} \citep{balandat2020botorch}, by
% use the variance reduction trick of Botorch and
fixing samples of $\lambda, \epsilon$ throughout the optimization.
Specifically, we can sample $\lambda_1, \cdots, \lambda_M$ and $\epsilon_1, \cdots, \epsilon_N$
and approximate \eqref{eq:very_long} via
\begin{align}
    \inf_{x \in \Xc} -\mathrm{EHIG}_t(x; \ell, \Ac)
    \approx
    \inf_{x \in \Xc} \inf_{a_1, \ldots, a_M}
    \frac{1}{NM} \sum_{m, n} \ell'(\hat{\mu}(x, a_m) + \hat{U}(x, a_m) \epsilon_n, a_m),
\end{align}
where we use $a_m = a(\lambda_m)$ for brevity.
Under the assumptions above, we can compute the unbiased gradient of this quantity, and 
by using systems such as GPyTorch \citep{gardner2018gpytorch} and BoTorch \citep{balandat2020botorch}
% and PyTorch \citep{paszke2019pytorch},
we can compute this gradient efficiently via automatic differentiation.
\section{Experiments}
\label{sec:experiments}
% \vspace{-2mm}

\everypar{\looseness=-1}

We evaluate our proposed method on the example tasks described
in Section~\ref{sec:novelacqfunctions}:
top-$k$ optimization with diversity, multi-level set estimation, and sequence search.
For these applications, we show comparisons against a set of baselines on real and synthetic black-box functions.

\begin{figure*}[t]
\centering
\includegraphics[width=0.325\linewidth]{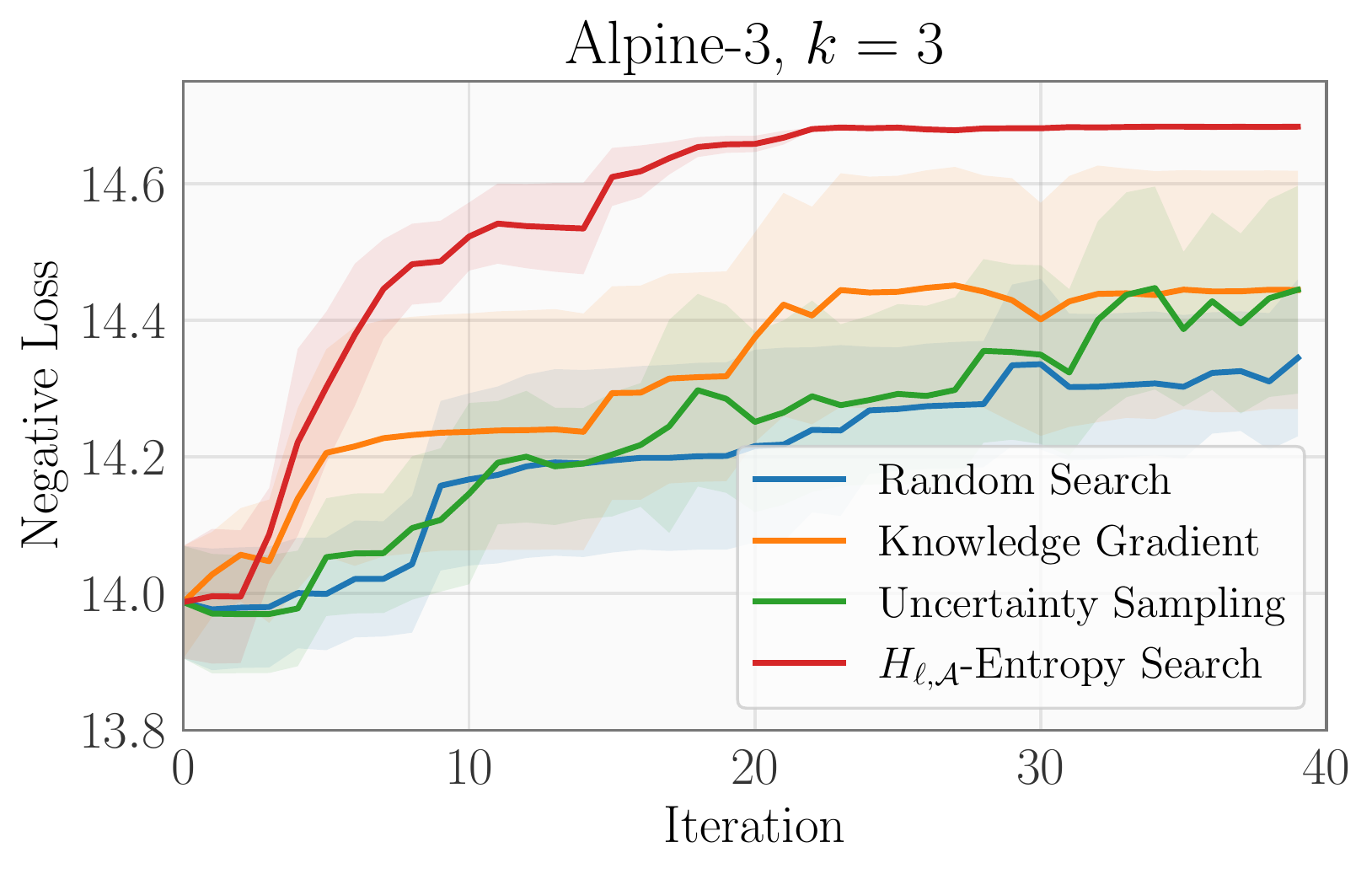}
\includegraphics[width=0.325\linewidth]{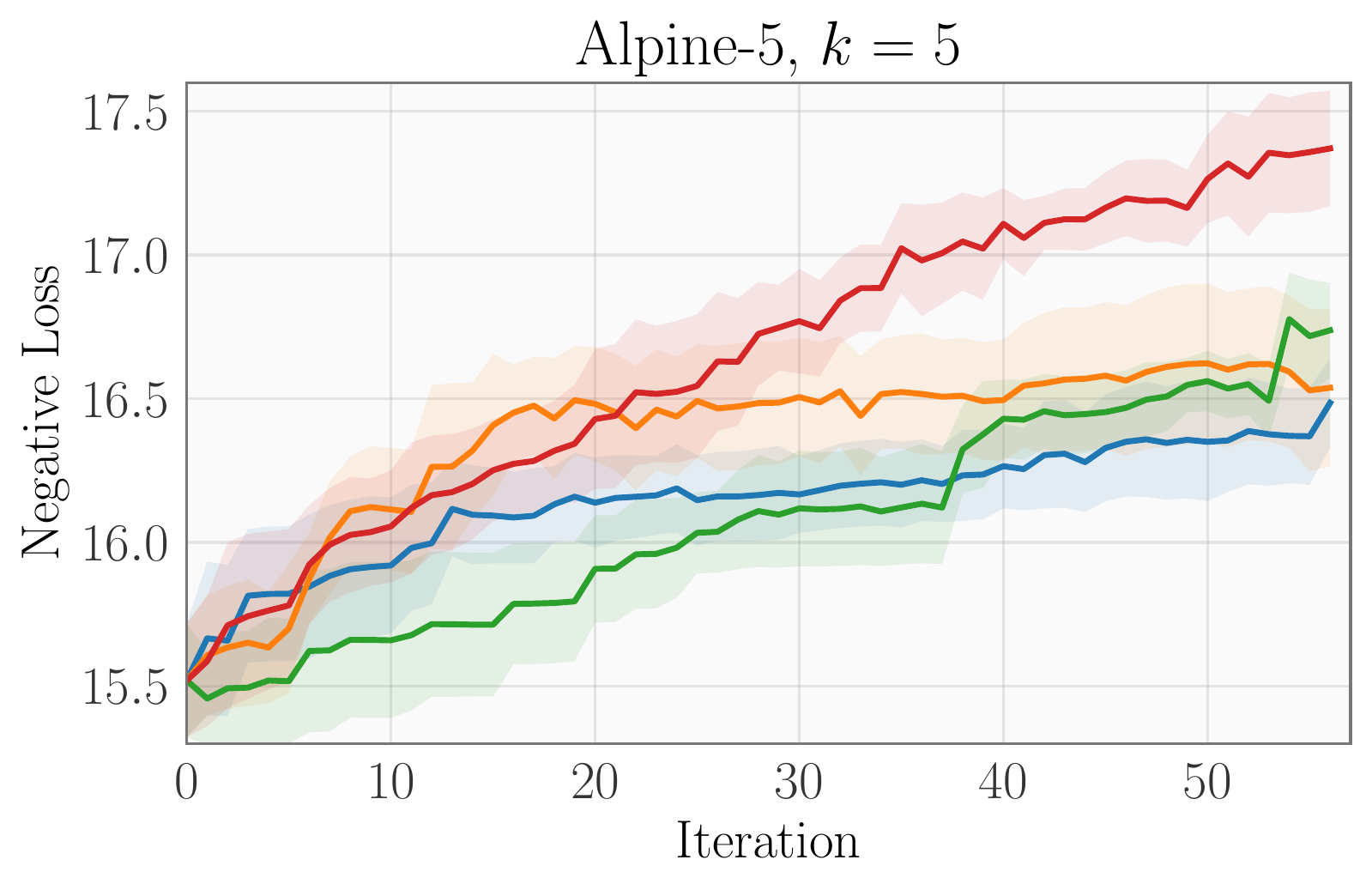}
\includegraphics[width=0.325\linewidth]{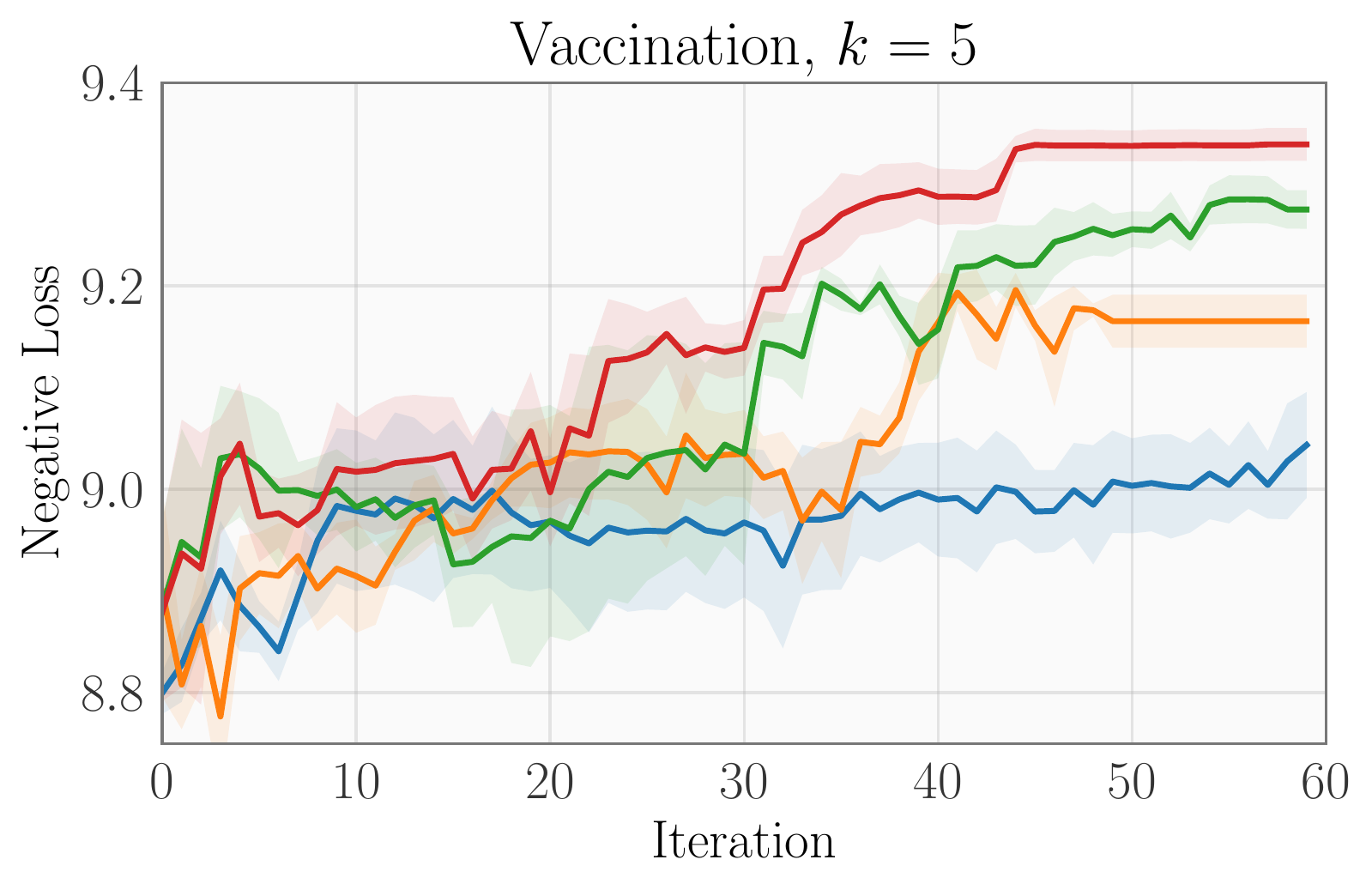}
\mbox{\hspace{1mm}}
\includegraphics[width=0.245\linewidth]{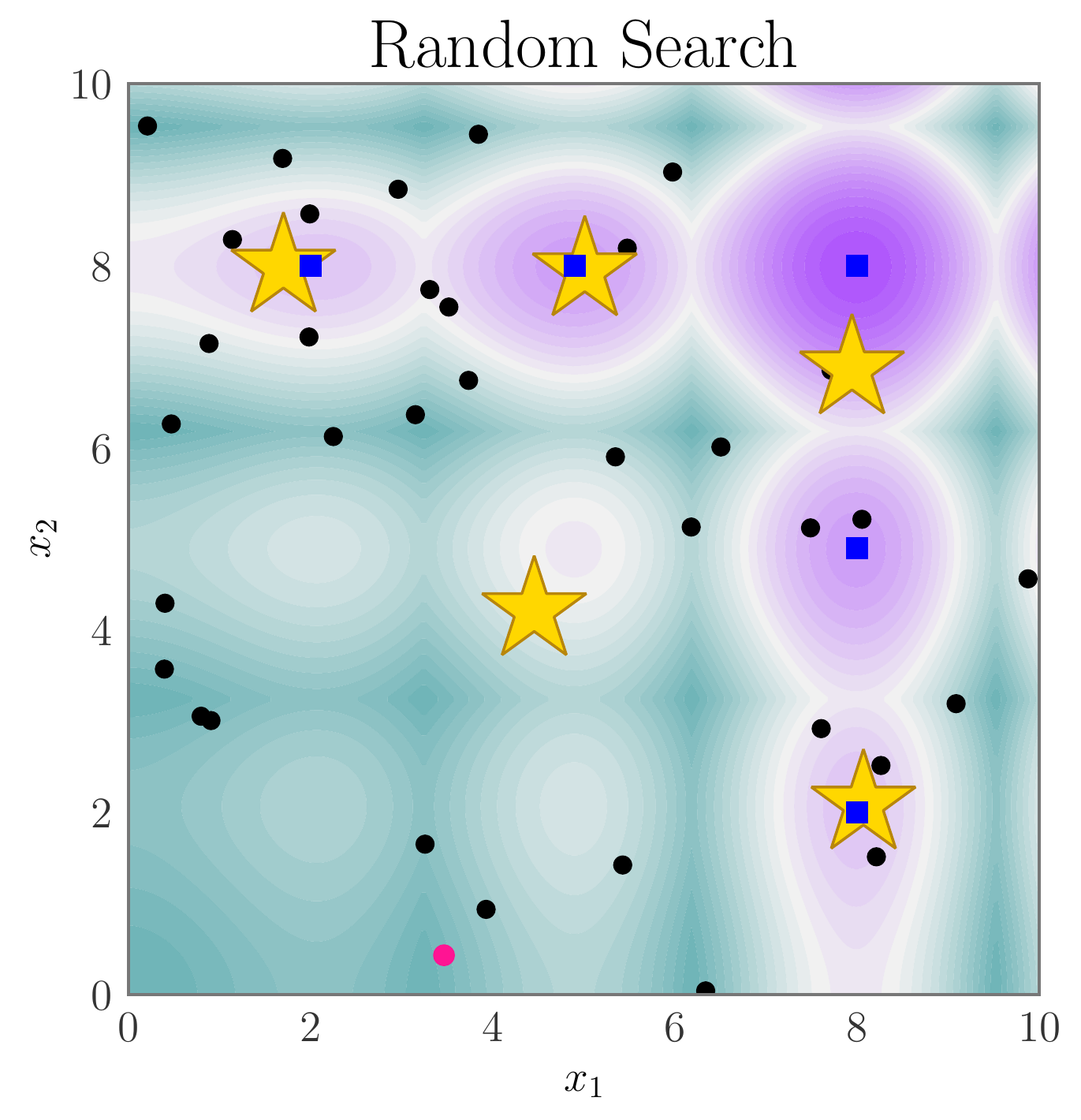}
\includegraphics[width=0.245\linewidth]{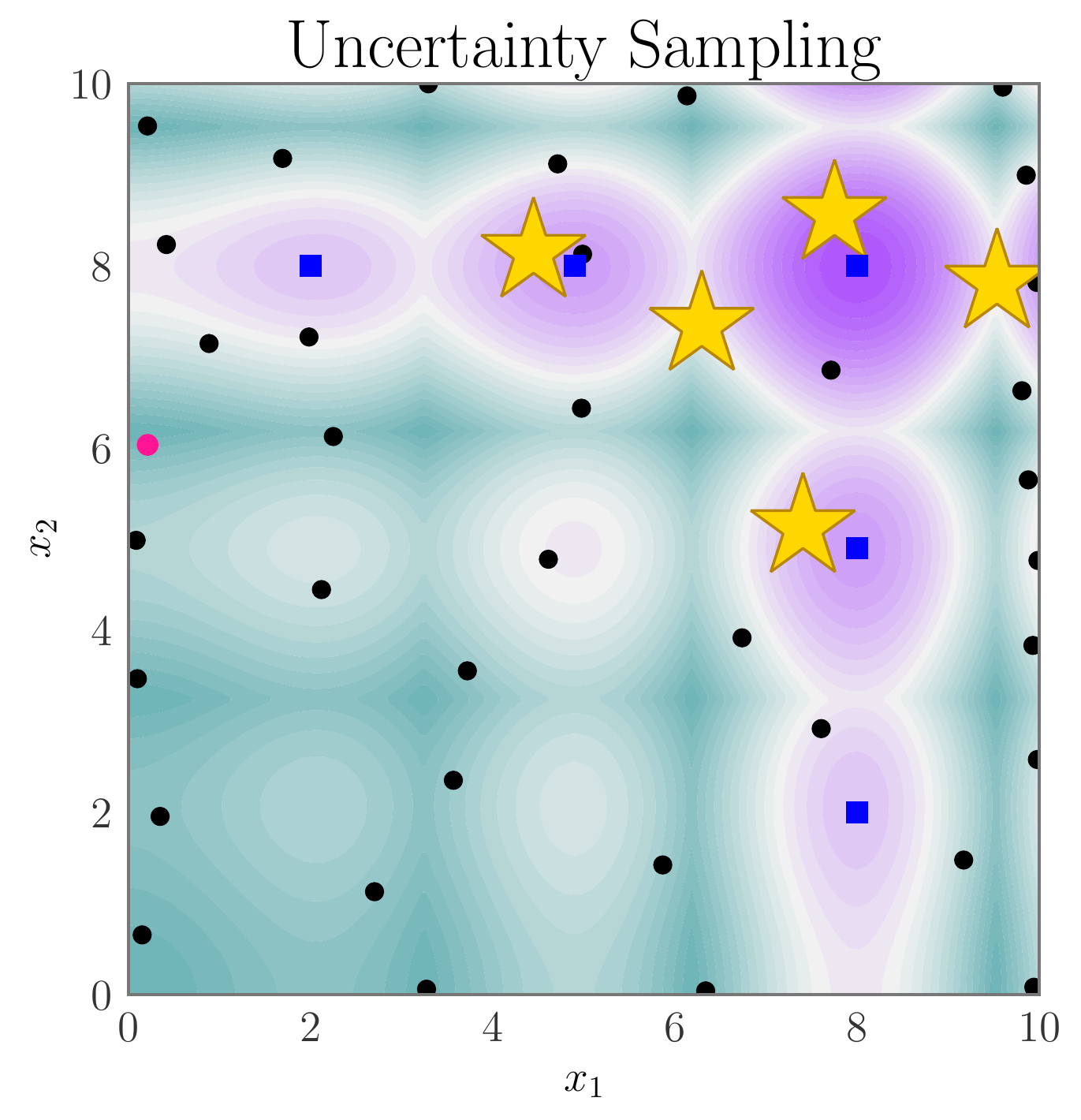}
\includegraphics[width=0.245\linewidth]{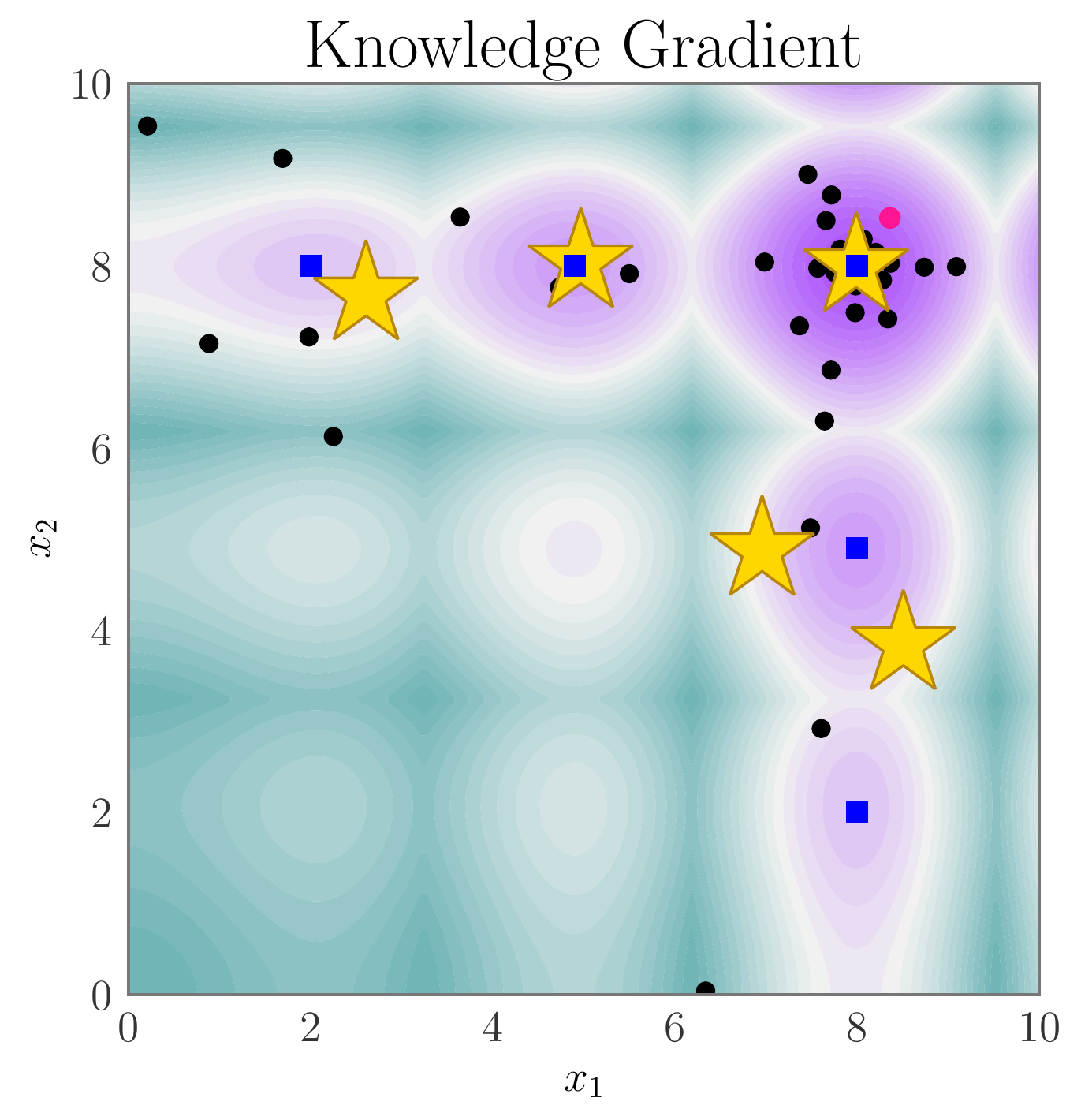}
\includegraphics[width=0.245\linewidth]{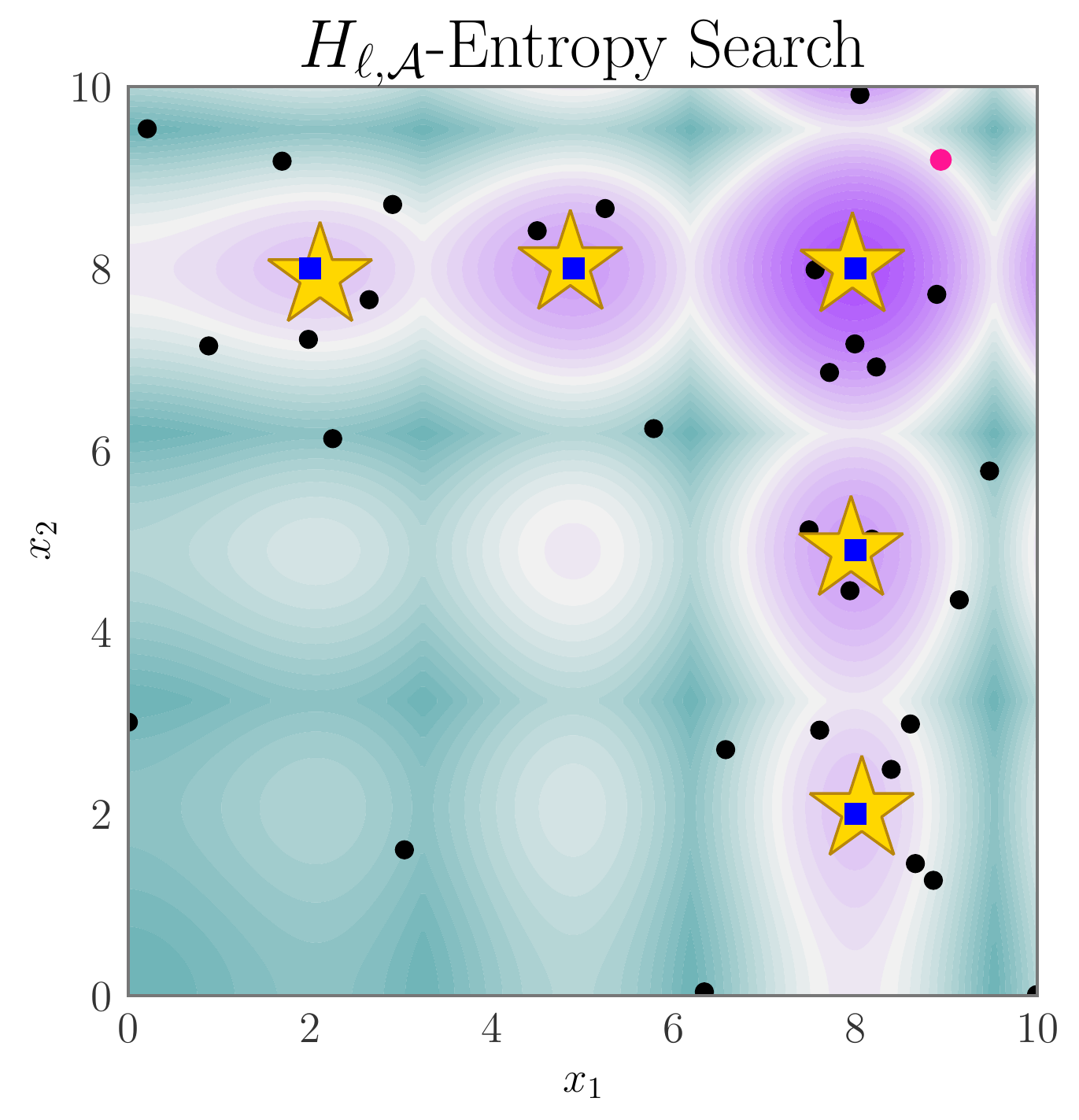}\\
\vspace{-2mm}
\caption{
\small
\textbf{Top-$k$ optimization with diversity.}
\textit{Top row:} Plots of the negative loss $-\ell(\truefunction, a^*)$ versus iteration
for all methods, on the \textit{Alpine-3}, \textit{Alpine-5} and
\textit{Vaccination} functions, where error bars
represent one standard error.
\textit{Bottom row:} Visualization of methods in two dimensions,
% comparing methods on the \textit{Alpine-2} function,
showing the set of ground-truth top-$5$ diverse design points (blue squares),
queries $\Dc_t$ taken (black dots), acquisition function optimizer (pink dot),
and the estimated set of top-$5$ diverse design points (gold stars).
}
\label{fig:topk}
\vspace{-5mm}
\end{figure*}

\textbf{Comparison methods.} In our experiments, we compare the
following set of acquisition strategies:
\begin{itemize}[parsep=0pt, topsep=0pt, itemsep=3pt, leftmargin=4mm]
    \item \textsc{$H_{\ell, \Ac}$-Entropy Search (HES).} We follow Algorithm~\ref{alg:hes}, using the loss
    and action set for each task as described in Section~\ref{sec:novelacqfunctions}, and the
    gradient-based procedure outlined in Section~\ref{sec:acqopt}.
    \item \textsc{Random Search (RS).} At each iteration, we draw a sample $x_t$ uniformly at
    random from $\Xc$. 
    \item \textsc{Uncertainty Sampling (US).} At each iteration, we select
    the point that maximizes the posterior predictive variance, i.e.
    $x_t = \argmax_{x \in \Xc} \mathrm{Var}[p(y_x \mid \Dc_t)]$.
    \item \textsc{Knowledge Gradient (KG).} We show KG as a representative
    method for standard BO. KG allows us to carry out a similar gradient-based
    procedure as in \textsc{HES}.
    \item \textsc{Probability of Misclassification (POM).} This is a common acquisition function
    for level set estimation \citep{bryan2005active}.
    We predict whether a point is above a threshold, represented by a binary variable $z$, and select the design with maximal label uncertainty $x_t  = \argmin_{x \in \Xc} \max_{z \in \{0, 1\}} p(z|x)$.
\end{itemize}
\vspace{-1mm}
Note that we are restricted to comparing against relatively general-purpose baseline methods,
as more-specific acquisition functions have not previously been developed for the tasks below.

\vspace{-3mm}
\paragraph{Top-$k$ Optimization with Diversity}
In our first task, the goal is to find a set of $k$ diverse elements in $\Xc$, each with a 
high value of $\truefunction$.
% There are a number of example applications where we want to find the
% top-$k$ elements of a domain, often which display some diversity.
To assess each method, at each iteration we record the negative loss $-\ell(\truefunction, a^*)$
using  \eqref{eq:topkloss}---i.e. the
\textit{negative top-$k$ with diversity loss} of the Bayes action
$a^* = \arg\inf_{a \in \Ac} \expecf{p(f \mid \Dc_t)}{ \ell(f, a) }$
on the  true function $\truefunction$---using the set of queries $\Dc_t$
produced by the given method.
Intuitively, if a method makes a set of queries that yield a good estimate
of diverse top-$k$ elements, 
% allows us to estimate a good set of top-$k$ elements,
it will score a high value under this metric.

In Figure~\ref{fig:topk} (\textit{bottom row}) we visualize results on the multimodal
\textit{Alpine-$d$} benchmark function (see appendix for details).
Here, we can see that HES concentrates queries over five local optima of this function,
while KG allocates a majority of samples on only the highest peak, and both US 
and RS distribute their queries over the full domain $\Xc$.
In Figure~\ref{fig:topk} (\textit{top row}), we compare performance by
plotting the negative loss versus iteration on two higher dimensional examples.
We also compare each method on the \textit{Vaccination} function (provided by \cite{yuan2021mobility}),
which returns the vaccination rate for locations in the continential United States, given
an input \textit{(latitude, longitude)}.
The goal of this task is to efficiently find a set of five diverse locations
with highest vaccination rates.
We show results in Figure~\ref{fig:topk} (\textit{top row, right}), and see a similar
advantage of HES over comparison methods.

\begin{figure*}[t]
\centering
\includegraphics[width=0.325\linewidth]{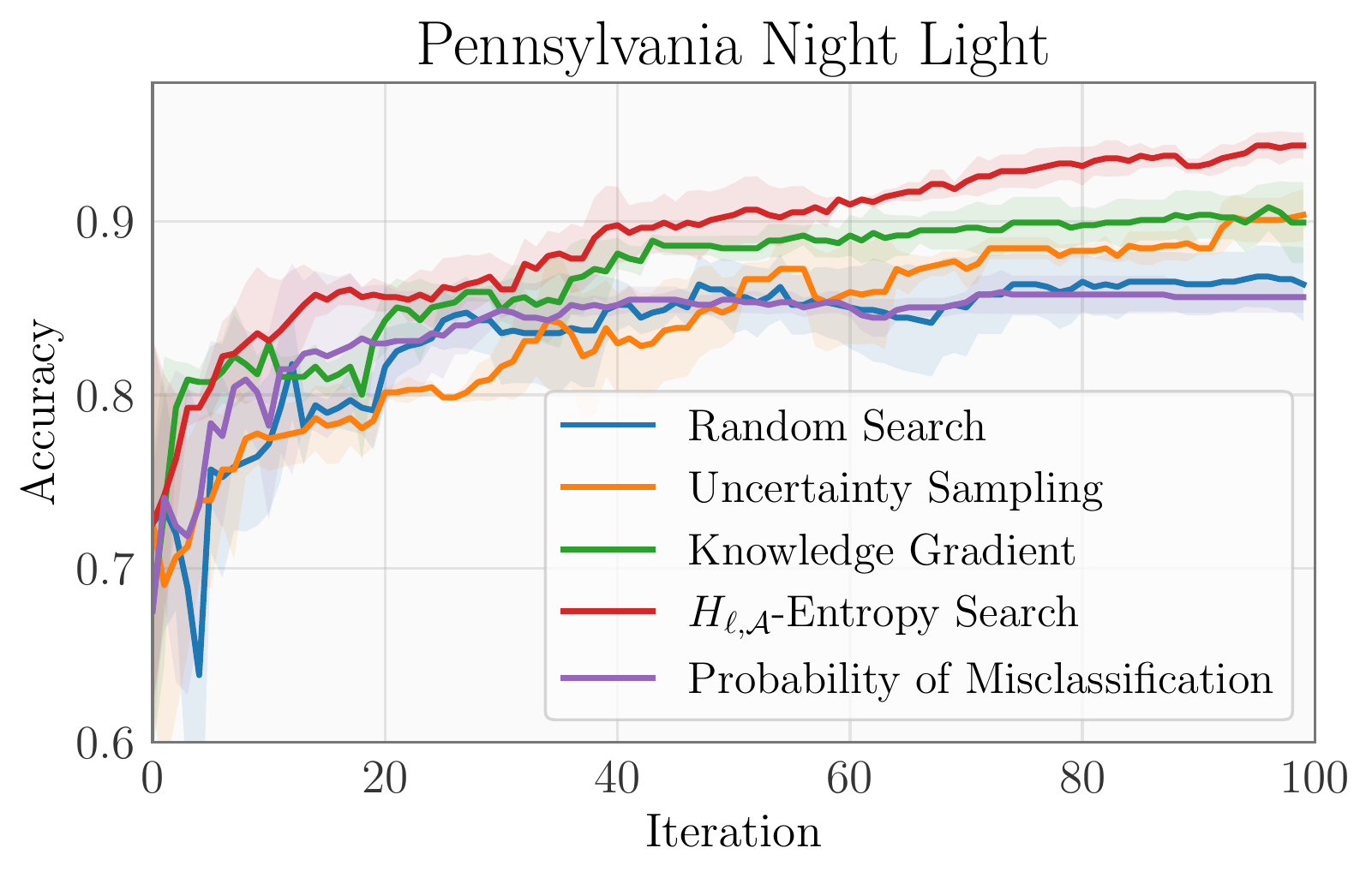}
\includegraphics[width=0.325\linewidth]{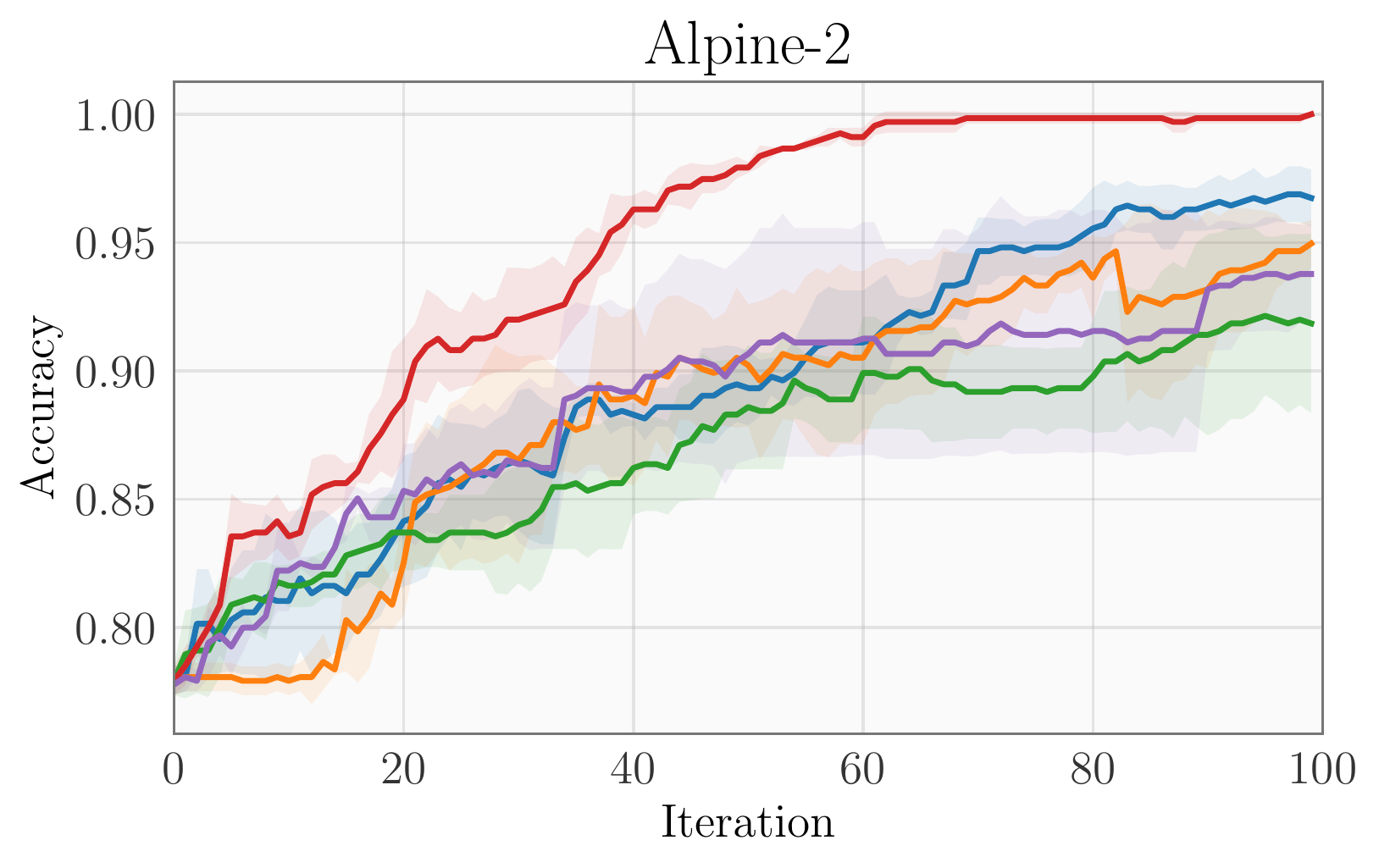}
\includegraphics[width=0.325\linewidth]{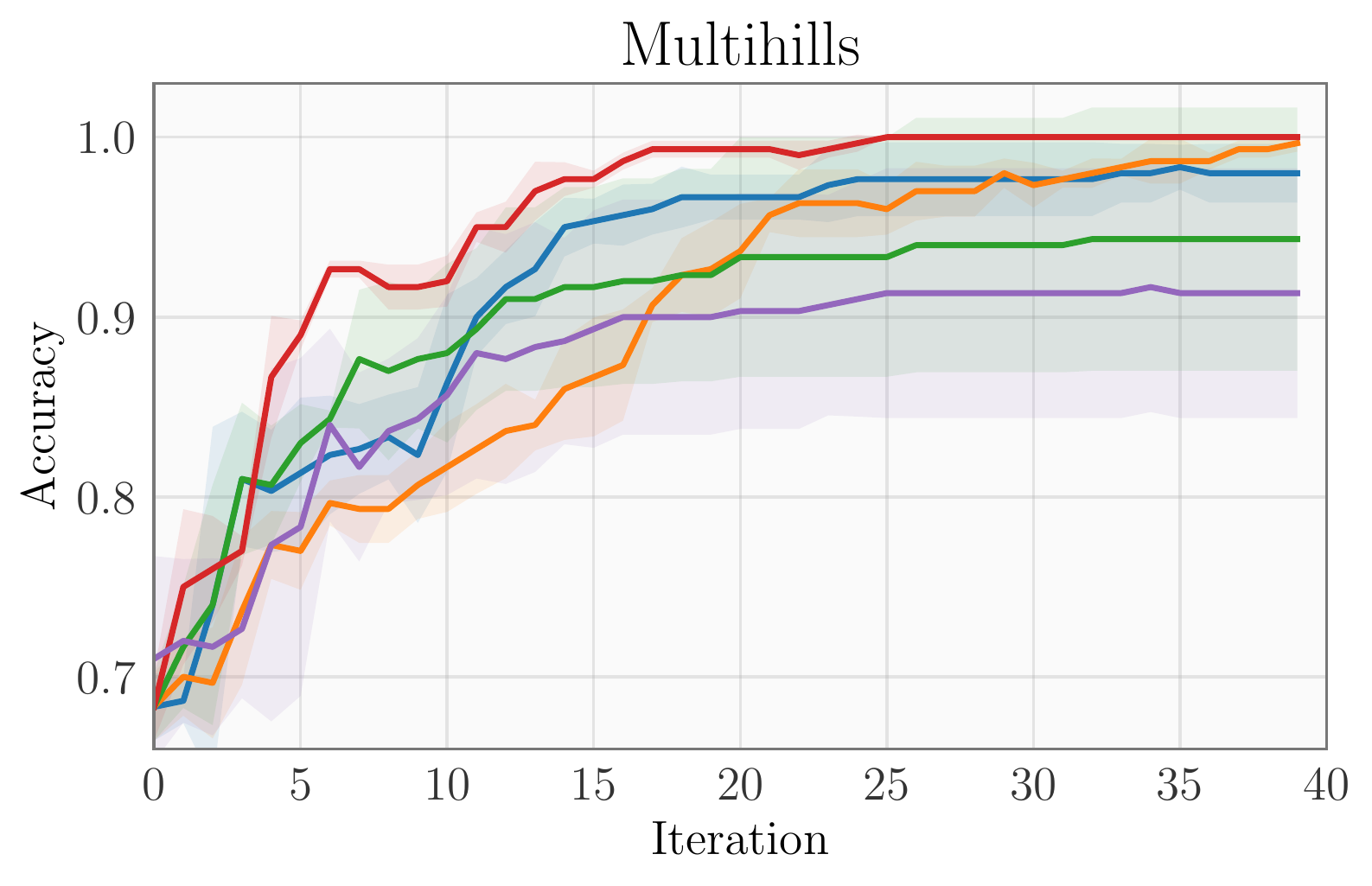}\\
\includegraphics[width=0.245\linewidth]{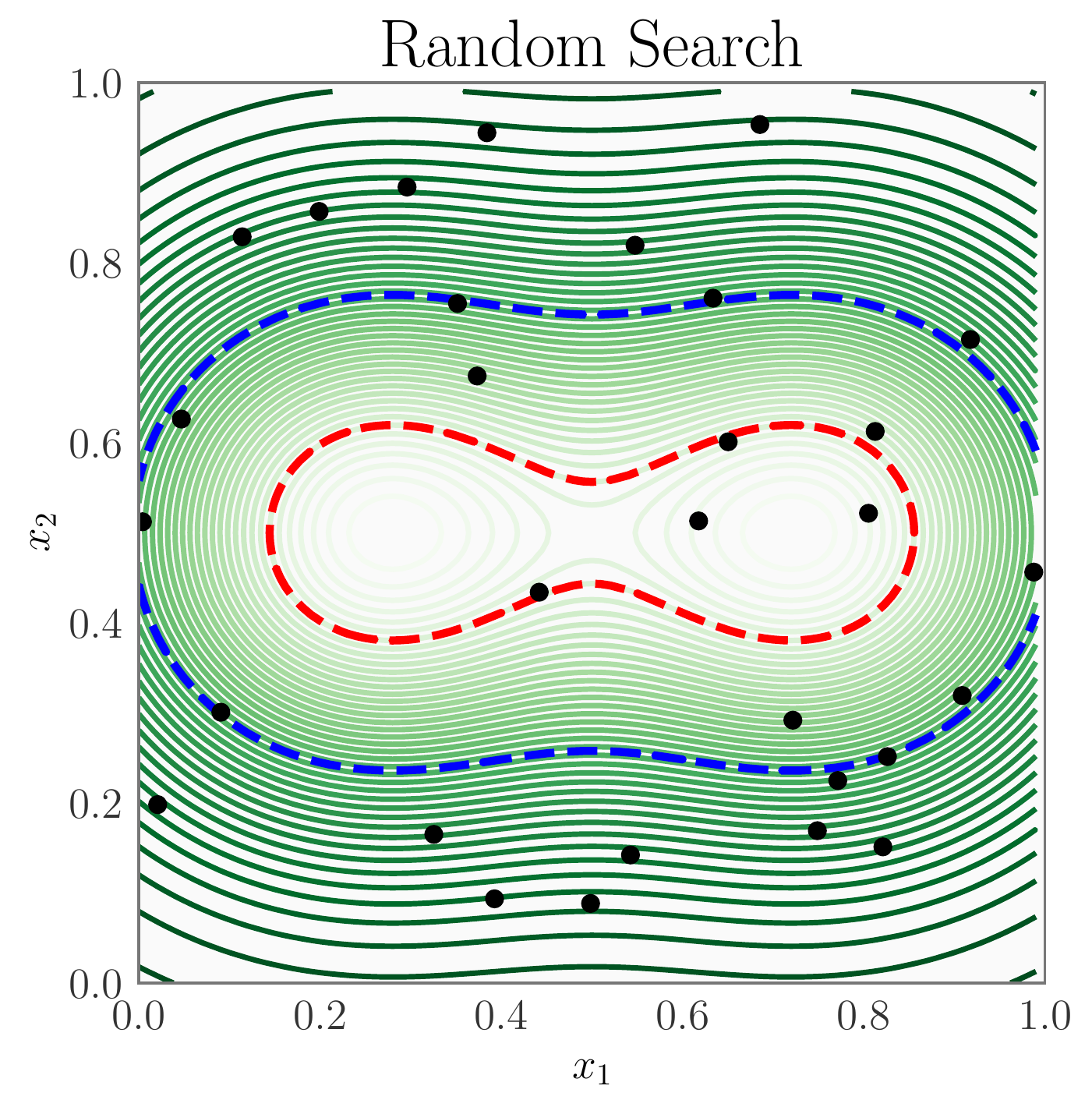}
\includegraphics[width=0.245\linewidth]{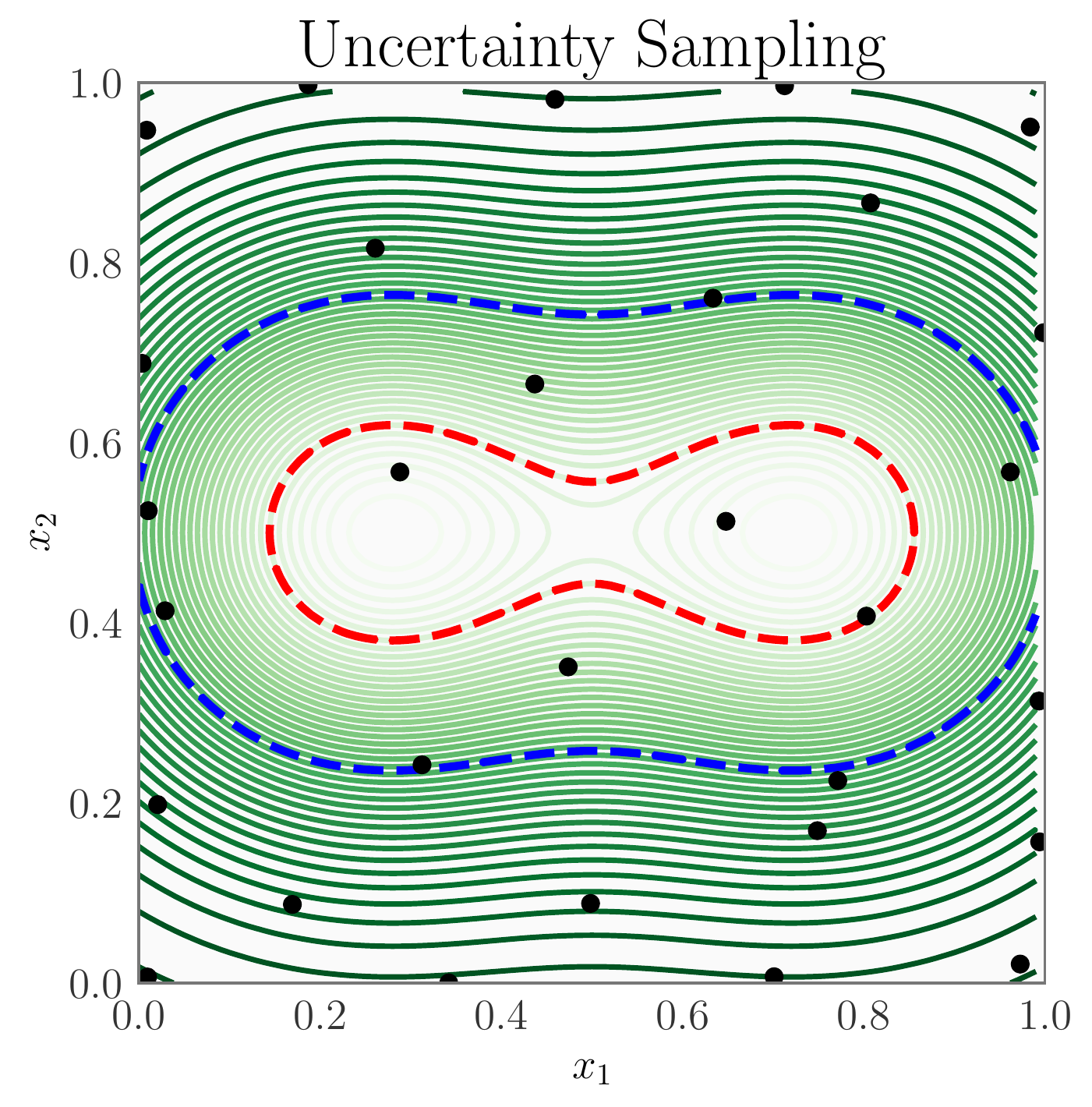}
\includegraphics[width=0.245\linewidth]{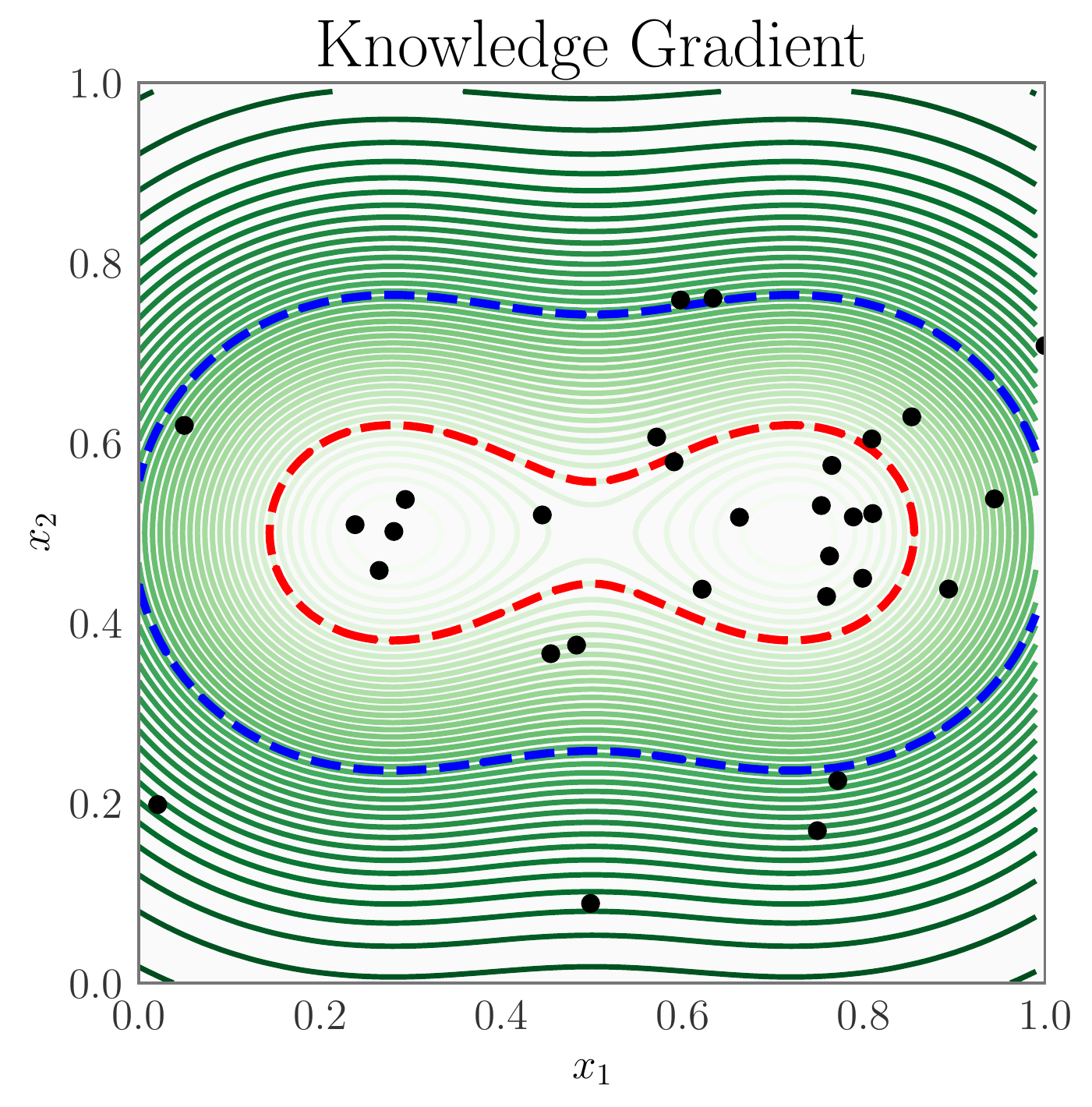}
\includegraphics[width=0.245\linewidth]{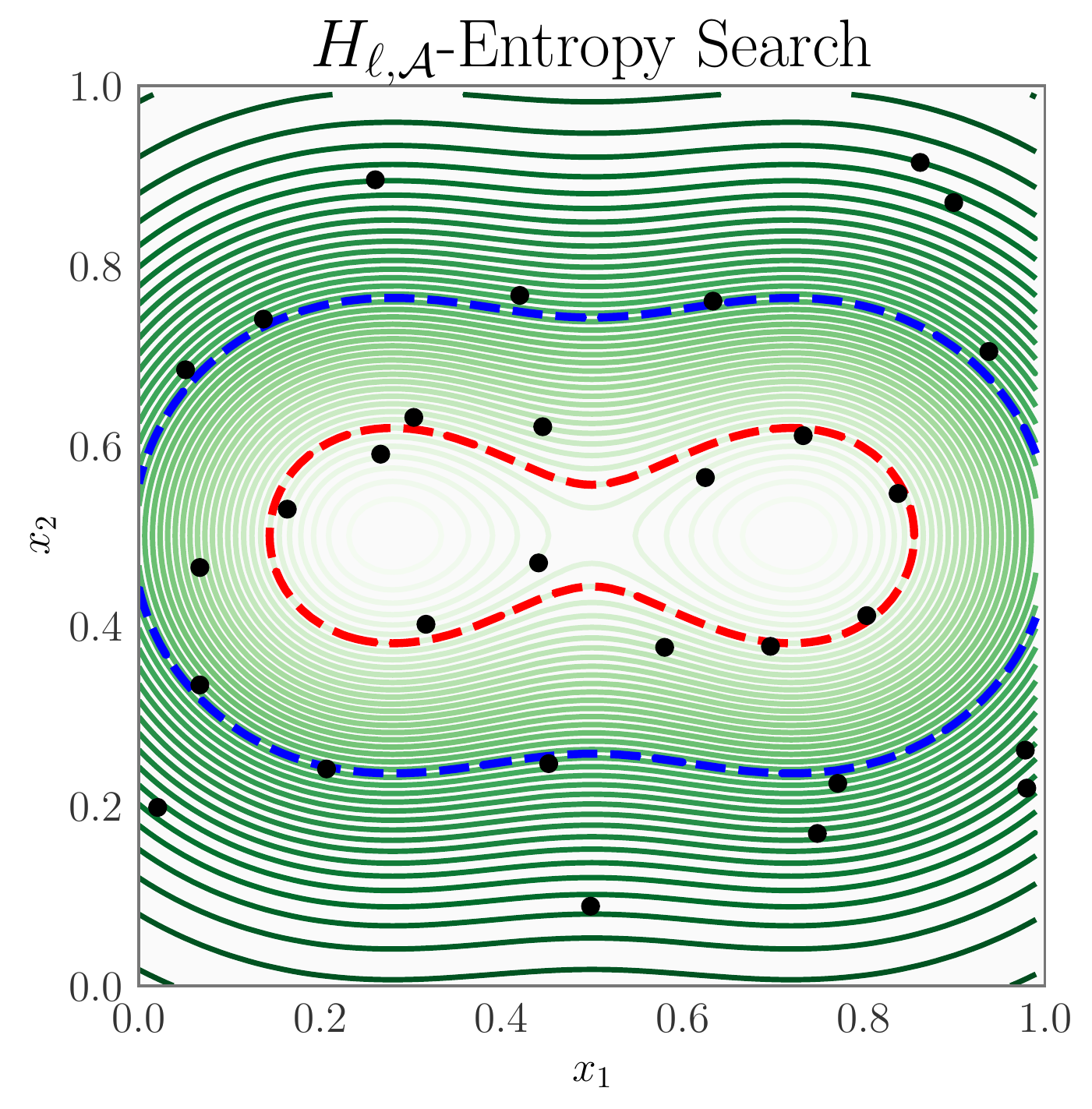}\\
\vspace{-2mm}
\caption{
\small
\textbf{Multi-level set estimation.}
\textit{Top row}: Plots of accuracy versus iteration for all methods,
where error bars represent one standard error.
\textit{Bottom row:} Visualization of methods on the \textit{Multihills}
function, showing the ground-truth level set boundaries (red and blue dashed lines)
and queries $\Dc_t$ taken (black dots).
}
\label{fig:levelset}
\vspace{-6mm}
\end{figure*}

\vspace{-3mm}
\paragraph{Multi-level Set Estimation}
In our second task, the goal is to carry out multi-level set estimation.
Here, we can assess each method using a more conventional metric:
we produce an estimate of the level for every $x \in \Xc_0$,
using the model's posterior mean (given the queries selected by a particular method),
and then record the accuracy of this estimate averaged across all level set thresholds.
Intuitively, a method will achieve a higher accuracy if it chooses queries that yield
a better estimate of the function near the threshold boundaries of the level sets.
In Figure~\ref{fig:levelset} (\textit{bottom row}), we visualize results for a two-level set task
on the \textit{Multihills} function, defined as a mixture density (details given in appendix).
We see that HES concentrates queries along both of the boundaries, which are drawn as blue and 
red dashed lines.
In the \textit{top row}, we compare the performance of all methods, showing the
accuracy vs. iteration.
Here, the \textit{Pennsylvania Night Light} 
function~\cite{nasa}
% \footnote{\url{https://earthobservatory.nasa.gov/features/NightLights}},
released by NASA (additional details in the appendix),
returns the relative level of light at a location in Pennsylvania, as queried by a satellite image.
The goal of this experiment is to determine the portion of land at which night light 
is above a specified threshold value.
In Appendix~\ref{sec:app-additionalexperiments}, we show additional experiments,
including a visualization of results on this function.
%%%%%% WRAP FIGURE STARTS HERE
\begin{wrapfigure}{r}{0.5\textwidth}
\vspace{-1mm}
\centering
\includegraphics[width=0.52\linewidth]{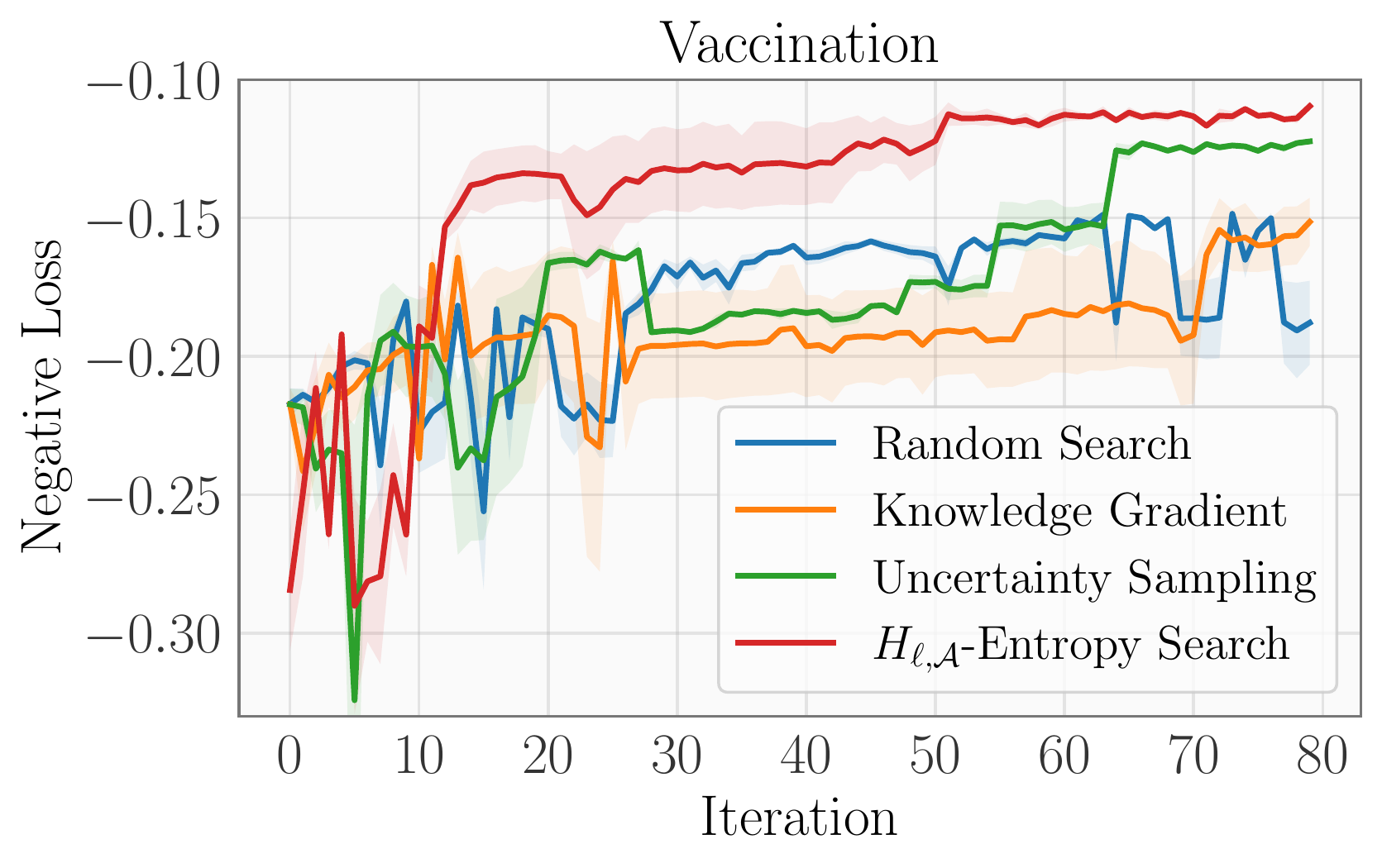}
\includegraphics[width=0.46\linewidth]{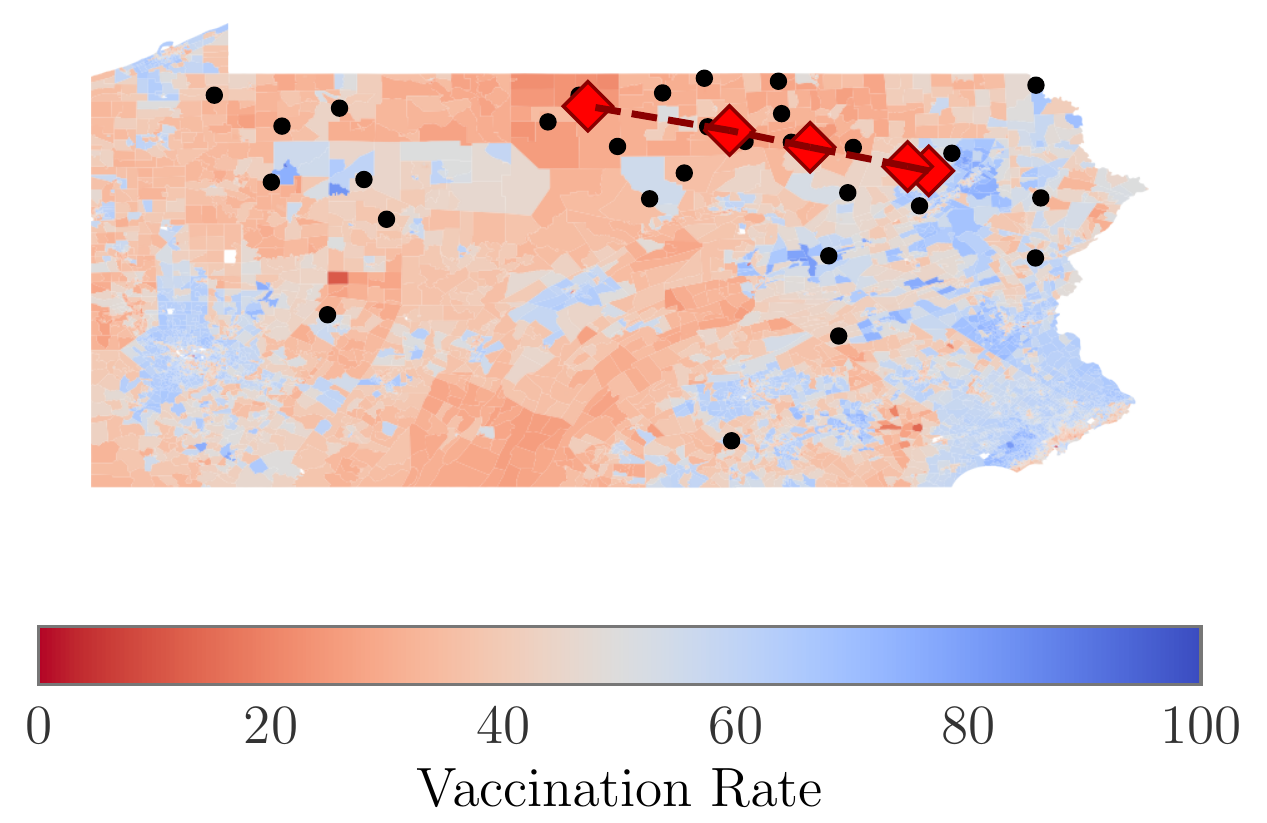}
\vspace{-3mm}
\caption{\small \textbf{Sequence search.}
\textit{Left:} Negative loss versus iteration, where error bars represent one standard error.
\textit{Right:} Visualization of the \textit{Vaccination} function, along with
the queries $\Dc_t$ taken by HES (black dots),
and the estimated sequence $(x_1^\circledast,\ldots,x_5^\circledast)$ (red diamonds),
such that $(f(x_1^\circledast),\ldots,f(x_5^\circledast))$
$=$ $(30\%, 40\%, 50\%, 60\%, 70\%)$.
}
\label{fig:mvs}
\vspace{-4mm}
\end{wrapfigure}
%%%%%% WRAP FIGURE ENDS HERE

\vspace{-5mm}
\paragraph{Sequence Search}
In our third task, the goal is to find a sequence of elements whose value under the 
black-box function matches a set of pre-specified function values
$(y_1^\circledast, \ldots, y_m^\circledast)$.
To assess each method, at each iteration we record the negative loss
$-\ell(\truefunction, a^*)$ from \eqref{eq:mvsloss}---i.e. the
negative \textit{sequence search loss} of the Bayes action
$a^* = \arg\inf_{a \in \Ac} \expecf{p(f \mid \Dc_t)}{ \ell(f, a) }$---using
the set of queries $\Dc_t$ produced by the given method.
Intuitively, if a method makes a set of queries that yield a good estimate
of a sequence of $(x_1^\circledast, \ldots, x_m^\circledast)$ such that
$(\truefunction(x_1^\circledast), \ldots, \truefunction(x_m^\circledast))
\approx (y_1^\circledast, \ldots, y_m^\circledast)$,
it will score a high value on this metric.

In Figure~\ref{fig:mvs} (\textit{right}) we visualize results on
the  \textit{Vaccination} function (described above).
Here, our goal is to find a sequence of five 
\textit{(latitude, longitude)} coordinates with vaccination rates equal to
$(y_1^\circledast, \ldots, y_m^\circledast) = (30\%, 40\%, 50\%, 60\%, 70\%)$.
Estimates of locations that match these function values can be useful when making policy decisions
involving a vaccine response or allocation.
In this case, we see that HES concentrates queries along a route from the relatively highly
vaccinated region in the East to the relatively lowly vaccinated region in the North.
The \textit{left} plots in Figure~\ref{fig:mvs} provides a quantitive
comparison of methods on the \textit{Vaccination}
function (results on additional functions are shown in the appendix),
plotting  the negative loss vs. iteration.
These again show the benefits of query selection performed by HES
relative to the comparison strategies.
\section{Conclusion}
\label{sec:conclusion}
% \vspace{-3mm}

In this paper, we take a decision making perspective on information-based acquisition functions: 
after querying is complete, we assume that we must make some decision $a^*$ and then incur a loss
$\ell(\truefunction, a^*)$. Our goal is thus to make a sequence of queries that reduce the
uncertainty of the posterior distribution $p(f \mid \Dc_t)$ in a way to best help make this
decision with low loss.
Using $H_{\ell, \Ac}$-entropy \citep{DeGroot1962-ob, rao1984convexity}, we can define an EHIG acquisition
function which carries this out directly: it selects a point that is expected to maximally
reduce the posterior expected loss of the Bayes action $a^*$.
We incorporate this acquisition function into a procedure called \textsc{$H_{\ell, \Ac}$-Entropy Search},
and show, in many cases, that we can perform efficient gradient-based optimization
of this acquisition function.

There are multiple interesting avenues for future work.
First, we hope to develop acquisition optimization methods
for additional settings, such as for non-continuous action sets
$\Ac$ or design spaces $\Xc$ \citep{daulton2022bayesian}, and for functions with multidimensional outputs \citep{hernandez2016predictive, daulton2022multi}. One interesting avenue is hybrid optimization settings,
where we can only take gradient steps with respect to either the design or action variables.
Another potential direction is to incorporate cost-aware Bayesian optimization techniques into the EHIG framework \citep{lee2020cost, xiao2021amortized, astudillo2021multi}.
We also wish to study how the proposed EHIG framework could be
applied in practice to solve various problems in the sciences, including experimental physics \citep{duris2020bayesian, miskovich2022bayesian, char2019offline}, drug discovery \citep{stanton2022accelerating, kassraie2022graph, griffiths2022gauche}, and materials design \citep{lookman2019active, tran2020methods}.
% used to improve existing BO procedures and
% provide insights on existing acquisition functions.
Finally, we wish to study in further detail how the EHIG acquisition function could be implemented for Bayesian decision making with other probabilistic models beyond Gaussian processes \citep{snoek2015scalable, chung2021beyond, chung2021uncertainty, neiswanger2019probo}.
\subsection*{Acknowledgments}

We thank the anonymous reviewers, members of the Stanford SAIL community, and members of the CMU Auton Lab for helpful feedback on this paper.
This work was supported by NSF (\#1651565), AFOSR (FA95501910024), ARO (W911NF-21-1-0125), CZ Biohub, and Sloan Fellowship.
\bibliography{refs}

\begin{thebibliography}{10}

\bibitem{nasa}
Nasa earth observatory: Earth at night.
\newblock \url{https://earthobservatory.nasa.gov/features/NightLights}.

\bibitem{abbasi2008mote}
Ali Abbasi, Ahmad Khonsari, and Navid Farri.
\newblock Mote: efficient monitoring of top-k set in sensor networks.
\newblock In {\em 2008 IEEE Symposium on Computers and Communications}, pages
  957--962. IEEE, 2008.

\bibitem{astudillo2021multi}
Raul Astudillo, Daniel Jiang, Maximilian Balandat, Eytan Bakshy, and Peter
  Frazier.
\newblock Multi-step budgeted bayesian optimization with unknown evaluation
  costs.
\newblock {\em Advances in Neural Information Processing Systems},
  34:20197--20209, 2021.

\bibitem{balandat2020botorch}
Maximilian Balandat, Brian Karrer, Daniel Jiang, Samuel Daulton, Benjamin
  Letham, Andrew~Gordon Wilson, and Eytan Bakshy.
\newblock Botorch: A framework for efficient monte-carlo bayesian optimization.
\newblock {\em Advances in Neural Information Processing Systems (NeurIPS)},
  2020.

\bibitem{boecking2020interactive}
Benedikt Boecking, Willie Neiswanger, Eric Xing, and Artur Dubrawski.
\newblock Interactive weak supervision: Learning useful heuristics for data
  labeling.
\newblock {\em arXiv preprint arXiv:2012.06046}, 2020.

\bibitem{brochu2010tutorial}
Eric Brochu, Vlad~M Cora, and Nando De~Freitas.
\newblock A tutorial on bayesian optimization of expensive cost functions, with
  application to active user modeling and hierarchical reinforcement learning.
\newblock {\em arXiv preprint arXiv:1012.2599}, 2010.

\bibitem{bryan2005active}
Brent Bryan, Jeff Schneider, Robert Nichol, Christopher~J Miller, Christopher~R
  Genovese, and Larry Wasserman.
\newblock Active learning for identifying function threshold boundaries.
\newblock In {\em NIPS}, pages 163--170. Citeseer, 2005.

\bibitem{char2019offline}
Ian Char, Youngseog Chung, Willie Neiswanger, Kirthevasan Kandasamy, Andrew~O
  Nelson, Mark Boyer, Egemen Kolemen, and Jeff Schneider.
\newblock Offline contextual bayesian optimization.
\newblock {\em Advances in Neural Information Processing Systems}, 32, 2019.

\bibitem{chung2021uncertainty}
Youngseog Chung, Ian Char, Han Guo, Jeff Schneider, and Willie Neiswanger.
\newblock Uncertainty toolbox: an open-source library for assessing,
  visualizing, and improving uncertainty quantification.
\newblock {\em arXiv preprint arXiv:2109.10254}, 2021.

\bibitem{chung2021beyond}
Youngseog Chung, Willie Neiswanger, Ian Char, and Jeff Schneider.
\newblock Beyond pinball loss: Quantile methods for calibrated uncertainty
  quantification.
\newblock {\em Advances in Neural Information Processing Systems},
  34:10971--10984, 2021.

\bibitem{daulton2022multi}
Samuel Daulton, David Eriksson, Maximilian Balandat, and Eytan Bakshy.
\newblock Multi-objective bayesian optimization over high-dimensional search
  spaces.
\newblock In {\em Uncertainty in Artificial Intelligence}, pages 507--517.
  PMLR, 2022.

\bibitem{daulton2022bayesian}
Samuel Daulton, Xingchen Wan, David Eriksson, Maximilian Balandat, Michael~A
  Osborne, and Eytan Bakshy.
\newblock Bayesian optimization over discrete and mixed spaces via
  probabilistic reparameterization.
\newblock In {\em ICML2022 Workshop on Adaptive Experimental Design and Active
  Learning in the Real World}, 2022.

\bibitem{DeGroot1962-ob}
M~H DeGroot.
\newblock Uncertainty, information, and sequential experiments.
\newblock {\em Ann. Math. Stat.}, 33(2):404--419, 1962.

\bibitem{duris2020bayesian}
Joseph Duris, Dylan Kennedy, Adi Hanuka, Jane Shtalenkova, Auralee Edelen,
  P~Baxevanis, Adam Egger, T~Cope, M~McIntire, S~Ermon, et~al.
\newblock Bayesian optimization of a free-electron laser.
\newblock {\em Physical review letters}, 124(12):124801, 2020.

\bibitem{fong2021utilization}
Anthony~Y Fong, Lenson Pellouchoud, Malcolm Davidson, Richard~C Walroth, Carena
  Church, Ekaterina Tcareva, Liheng Wu, Kyle Peterson, Bryce Meredig, and
  Christopher~J Tassone.
\newblock Utilization of machine learning to accelerate colloidal synthesis and
  discovery.
\newblock {\em The Journal of Chemical Physics}, 154(22):224201, 2021.

\bibitem{frazier2009knowledge}
Peter Frazier, Warren Powell, and Savas Dayanik.
\newblock The knowledge-gradient policy for correlated normal beliefs.
\newblock {\em INFORMS journal on Computing}, 21(4):599--613, 2009.

\bibitem{gardner2018gpytorch}
Jacob~R Gardner, Geoff Pleiss, David Bindel, Kilian~Q Weinberger, and
  Andrew~Gordon Wilson.
\newblock Gpytorch: Blackbox matrix-matrix gaussian process inference with gpu
  acceleration.
\newblock {\em arXiv preprint arXiv:1809.11165}, 2018.

\bibitem{garnett2010bayesian}
Roman Garnett, Michael~A Osborne, and Stephen~J Roberts.
\newblock Bayesian optimization for sensor set selection.
\newblock In {\em Proceedings of the 9th ACM/IEEE international conference on
  information processing in sensor networks}, pages 209--219, 2010.

\bibitem{gotovos2013active}
Alkis Gotovos, Nathalie Casati, Gregory Hitz, and Andreas Krause.
\newblock Active learning for level set estimation.
\newblock In {\em Proceedings of the Twenty-Third international joint
  conference on Artificial Intelligence}, pages 1344--1350, 2013.

\bibitem{griffiths2022gauche}
Ryan-Rhys Griffiths, Leo Klarner, Henry~B Moss, Aditya Ravuri, Sang Truong,
  Bojana Rankovic, Yuanqi Du, Arian Jamasb, Julius Schwartz, Austin Tripp,
  et~al.
\newblock Gauche: A library for gaussian processes and bayesian optimisation in
  chemistry.
\newblock 2022.

\bibitem{Grunwald2004-ai}
Peter~D Gr{\"u}nwald and A~Philip Dawid.
\newblock Game theory, maximum entropy, minimum discrepancy and robust bayesian
  decision theory.
\newblock {\em Ann. Stat.}, 32(4):1367--1433, August 2004.

\bibitem{Hennig2012-zu}
Philipp Hennig and Christian~J Schuler.
\newblock Entropy search for {Information-Efficient} global optimization.
\newblock {\em J. Mach. Learn. Res.}, 13(57):1809--1837, 2012.

\bibitem{hernandez2016predictive}
Daniel Hern{\'a}ndez-Lobato, Jose Hernandez-Lobato, Amar Shah, and Ryan Adams.
\newblock Predictive entropy search for multi-objective bayesian optimization.
\newblock In {\em International conference on machine learning}, pages
  1492--1501. PMLR, 2016.

\bibitem{hernandez2014predictive}
Jos{\'e}~Miguel Hern{\'a}ndez-Lobato, Matthew~W Hoffman, and Zoubin Ghahramani.
\newblock Predictive entropy search for efficient global optimization of
  black-box functions.
\newblock In {\em NIPS}, 2014.

\bibitem{jones1998efficient}
Donald~R Jones, Matthias Schonlau, and William~J Welch.
\newblock Efficient global optimization of expensive black-box functions.
\newblock {\em Journal of Global optimization}, 13(4):455--492, 1998.

\bibitem{kandasamy2019myopic}
Kirthevasan Kandasamy, Willie Neiswanger, Reed Zhang, Akshay Krishnamurthy,
  Jeff Schneider, and Barnabas Poczos.
\newblock Myopic posterior sampling for adaptive goal oriented design of
  experiments.
\newblock In {\em International Conference on Machine Learning}, pages
  3222--3232. PMLR, 2019.

\bibitem{kassraie2022graph}
Parnian Kassraie, Andreas Krause, and Ilija Bogunovic.
\newblock Graph neural network bandits.
\newblock {\em arXiv preprint arXiv:2207.06456}, 2022.

\bibitem{konakovic2020diversity}
Mina Konakovic~Lukovic, Yunsheng Tian, and Wojciech Matusik.
\newblock Diversity-guided multi-objective bayesian optimization with batch
  evaluations.
\newblock {\em Advances in Neural Information Processing Systems},
  33:17708--17720, 2020.

\bibitem{lee2020cost}
Eric~Hans Lee, Valerio Perrone, Cedric Archambeau, and Matthias Seeger.
\newblock Cost-aware bayesian optimization.
\newblock {\em arXiv preprint arXiv:2003.10870}, 2020.

\bibitem{liu2017materials}
Yue Liu, Tianlu Zhao, Wangwei Ju, and Siqi Shi.
\newblock Materials discovery and design using machine learning.
\newblock {\em Journal of Materiomics}, 3(3):159--177, 2017.

\bibitem{lookman2019active}
Turab Lookman, Prasanna~V Balachandran, Dezhen Xue, and Ruihao Yuan.
\newblock Active learning in materials science with emphasis on adaptive
  sampling using uncertainties for targeted design.
\newblock {\em npj Computational Materials}, 5(1):1--17, 2019.

\bibitem{miskovich2022bayesian}
Sara~A Miskovich, Willie Neiswanger, William Colocho, Claudio Emma, Jacqueline
  Garrahan, Timothy Maxwell, Christopher Mayes, Stefano Ermon, Auralee Edelen,
  and Daniel Ratner.
\newblock Bayesian algorithm execution for tuning particle accelerator
  emittance with partial measurements.
\newblock {\em arXiv preprint arXiv:2209.04587}, 2022.

\bibitem{movckus1975bayesian}
Jonas Mo{\v{c}}kus.
\newblock On bayesian methods for seeking the extremum.
\newblock In {\em Optimization techniques IFIP technical conference}, pages
  400--404. Springer, 1975.

\bibitem{neiswanger2019probo}
Willie Neiswanger, Kirthevasan Kandasamy, Barnabas Poczos, Jeff Schneider, and
  Eric Xing.
\newblock Probo: Versatile bayesian optimization using any probabilistic
  programming language.
\newblock {\em arXiv preprint arXiv:1901.11515}, 2019.

\bibitem{neiswanger2021bayesian}
Willie Neiswanger, Ke~Alexander Wang, and Stefano Ermon.
\newblock Bayesian algorithm execution: Estimating computable properties of
  black-box functions using mutual information.
\newblock In {\em International Conference on Machine Learning}, pages
  8005--8015. PMLR, 2021.

\bibitem{oh2021design}
Eric~J Oh, Alyssa Mikytuck, Vicki Lancaster, Joshua Goldstein, and Sallie
  Keller.
\newblock Design and estimation for the population prevalence of infectious
  diseases.
\newblock {\em medRxiv}, 2021.

\bibitem{picheny2013benchmark}
Victor Picheny, Tobias Wagner, and David Ginsbourger.
\newblock A benchmark of kriging-based infill criteria for noisy optimization.
\newblock {\em Structural and Multidisciplinary Optimization}, 48(3):607--626,
  2013.

\bibitem{pyzer2018bayesian}
Edward~O Pyzer-Knapp.
\newblock Bayesian optimization for accelerated drug discovery.
\newblock {\em IBM Journal of Research and Development}, 62(6):2--1, 2018.

\bibitem{rao1984convexity}
C~Radhakrishna Rao.
\newblock Convexity properties of entropy functions and analysis of diversity.
\newblock {\em Lecture Notes-Monograph Series}, pages 68--77, 1984.

\bibitem{singh2008nonparametric}
Aarti Singh.
\newblock {\em Nonparametric Set Estimation Problems in Statistical Inference
  and Learning}.
\newblock PhD thesis, University of Wisconsin--Madison, 2008.

\bibitem{snoek2012practical}
Jasper Snoek, Hugo Larochelle, and Ryan~P Adams.
\newblock Practical bayesian optimization of machine learning algorithms.
\newblock {\em Advances in neural information processing systems}, 25, 2012.

\bibitem{snoek2015scalable}
Jasper Snoek, Oren Rippel, Kevin Swersky, Ryan Kiros, Nadathur Satish,
  Narayanan Sundaram, Mostofa Patwary, Mr~Prabhat, and Ryan Adams.
\newblock Scalable bayesian optimization using deep neural networks.
\newblock In {\em International conference on machine learning}, pages
  2171--2180. PMLR, 2015.

\bibitem{stanton2022accelerating}
Samuel Stanton, Wesley Maddox, Nate Gruver, Phillip Maffettone, Emily Delaney,
  Peyton Greenside, and Andrew~Gordon Wilson.
\newblock Accelerating bayesian optimization for biological sequence design
  with denoising autoencoders.
\newblock {\em arXiv preprint arXiv:2203.12742}, 2022.

\bibitem{terayama2021black}
Kei Terayama, Masato Sumita, Ryo Tamura, and Koji Tsuda.
\newblock Black-box optimization for automated discovery.
\newblock {\em Accounts of Chemical Research}, 54(6):1334--1346, 2021.

\bibitem{tran2021computational}
Kevin Tran, Willie Neiswanger, Kirby Broderick, Eric Xing, Jeff Schneider, and
  Zachary~W Ulissi.
\newblock Computational catalyst discovery: Active classification through
  myopic multiscale sampling.
\newblock {\em The Journal of Chemical Physics}, 154(12):124118, 2021.

\bibitem{tran2020methods}
Kevin Tran, Willie Neiswanger, Junwoong Yoon, Qingyang Zhang, Eric Xing, and
  Zachary~W Ulissi.
\newblock Methods for comparing uncertainty quantifications for material
  property predictions.
\newblock {\em Machine Learning: Science and Technology}, 1(2):025006, 2020.

\bibitem{wang2017max}
Zi~Wang and Stefanie Jegelka.
\newblock Max-value entropy search for efficient bayesian optimization.
\newblock In {\em Proceedings of the 34th International Conference on Machine
  Learning-Volume 70}, pages 3627--3635, 2017.

\bibitem{wilson2018maximizing}
James~T Wilson, Frank Hutter, and Marc~Peter Deisenroth.
\newblock Maximizing acquisition functions for bayesian optimization.
\newblock {\em arXiv preprint arXiv:1805.10196}, 2018.

\bibitem{wu2016parallel}
Jian Wu and Peter Frazier.
\newblock The parallel knowledge gradient method for batch bayesian
  optimization.
\newblock {\em Advances in neural information processing systems}, 29, 2016.

\bibitem{xiao2021amortized}
Yuxin Xiao, Eric~P Xing, and Willie Neiswanger.
\newblock Amortized auto-tuning: Cost-efficient transfer optimization for
  hyperparameter recommendation.
\newblock {\em arXiv preprint arXiv:2106.09179}, 2021.

\bibitem{xie2018diversity}
Pengtao Xie.
\newblock {\em Diversity-promoting and large-scale machine learning for
  healthcare}.
\newblock PhD thesis, University of Pittsburgh Medical Center, 2018.

\bibitem{yiannoutsos2021bayesian}
Constantin~T Yiannoutsos, Paul~K Halverson, and Nir Menachemi.
\newblock Bayesian estimation of sars-cov-2 prevalence in indiana by random
  testing.
\newblock {\em Proceedings of the National Academy of Sciences}, 118(5), 2021.

\bibitem{yuan2021mobility}
Yuan Yuan, Eaman Jahani, Shengjia Zhao, Yong-Yeo Ahn, and Alex~Sandy Pentland.
\newblock Mobility network reveals the impact of geographic vaccination
  heterogeneity on covid-19.
\newblock {\em medRxiv}, 2021.

\bibitem{zhong2020accelerated}
Miao Zhong, Kevin Tran, Yimeng Min, Chuanhao Wang, Ziyun Wang, Cao-Thang Dinh,
  Phil De~Luna, Zongqian Yu, Armin~Sedighian Rasouli, Peter Brodersen, et~al.
\newblock Accelerated discovery of co 2 electrocatalysts using active machine
  learning.
\newblock {\em Nature}, 581(7807):178--183, 2020.

\end{thebibliography}
\bibliographystyle{plain}
%%%%%%%%%%%%%%%%%%%%%%%%%%%%%%%%%%%%%%%%%%%%%%%%%%%%%%%%%%%%
\section*{Checklist}

% %%% BEGIN INSTRUCTIONS %%%
% The checklist follows the references.  Please
% read the checklist guidelines carefully for information on how to answer these
% questions.  For each question, change the default \answerTODO{} to \answerYes{},
% \answerNo{}, or \answerNA{}.  You are strongly encouraged to include a {\bf
% justification to your answer}, either by referencing the appropriate section of
% your paper or providing a brief inline description.  For example:
% \begin{itemize}
%   \item Did you include the license to the code and datasets? \answerYes{See Section~\ref{gen_inst}.}
%   \item Did you include the license to the code and datasets? \answerNo{The code and the data are proprietary.}
%   \item Did you include the license to the code and datasets? \answerNA{}
% \end{itemize}
% Please do not modify the questions and only use the provided macros for your
% answers.  Note that the Checklist section does not count towards the page
% limit.  In your paper, please delete this instructions block and only keep the
% Checklist section heading above along with the questions/answers below.
% %%% END INSTRUCTIONS %%%

\begin{enumerate}

\item For all authors...
\begin{enumerate}
  \item Do the main claims made in the abstract and introduction accurately reflect the paper's contributions and scope?
    \answerYes{}
  \item Did you describe the limitations of your work?
    \answerYes{}
  \item Did you discuss any potential negative societal impacts of your work?
    \answerNA{}
  \item Have you read the ethics review guidelines and ensured that your paper conforms to them?
    \answerYes{}
\end{enumerate}

\item If you are including theoretical results...
\begin{enumerate}
  \item Did you state the full set of assumptions of all theoretical results?
    \answerYes{} See Section~\ref{sec:existingacqfunctions} and appendix Section~\ref{sec:app-proofs}.
        \item Did you include complete proofs of all theoretical results?
    \answerYes{} See appendix Section~\ref{sec:app-proofs}.
\end{enumerate}

\item If you ran experiments...
\begin{enumerate}
  \item Did you include the code, data, and instructions needed to reproduce the main experimental results (either in the supplemental material or as a URL)?
    \answerYes{} All code and instructions are included in supplementary material.
  \item Did you specify all the training details (e.g., data splits, hyperparameters, how they were chosen)?
    \answerYes{} All training details are specified in the paper and included code.
        \item Did you report error bars (e.g., with respect to the random seed after running experiments multiple times)?
    \answerYes{} See Section~\ref{sec:experiments}.
        \item Did you include the total amount of compute and the type of resources used (e.g., type of GPUs, internal cluster, or cloud provider)?
    \answerYes{} See appendix~\ref{sec:app-additionalexperiments}. 
\end{enumerate}

\item If you are using existing assets (e.g., code, data, models) or curating/releasing new assets...
\begin{enumerate}
  \item If your work uses existing assets, did you cite the creators?
    \answerYes{} All existing assets were cited.
  \item Did you mention the license of the assets?
    \answerYes{} Licences of all assets are properly attributed.
  \item Did you include any new assets either in the supplemental material or as a URL?
    \answerYes{} New assets are included in the supplementary material.
  \item Did you discuss whether and how consent was obtained from people whose data you're using/curating?
    \answerNA{}
  \item Did you discuss whether the data you are using/curating contains personally identifiable information or offensive content?
    \answerNA{}
\end{enumerate}

\item If you used crowdsourcing or conducted research with human subjects...
\begin{enumerate}
  \item Did you include the full text of instructions given to participants and screenshots, if applicable?
    \answerNA{}
  \item Did you describe any potential participant risks, with links to Institutional Review Board (IRB) approvals, if applicable?
    \answerNA{}
  \item Did you include the estimated hourly wage paid to participants and the total amount spent on participant compensation?
    \answerNA{}
\end{enumerate}

\end{enumerate}

%%%%%%%%%%%%%%%%%%%%%%%%%%%%%%%%%%%%%%%%%%%%%%%%%%%%%%%%%%%%
\newpage
\appendix

\section{Proofs}
\label{sec:app-proofs}

Here we prove the propositions stated in Section~\ref{sec:existingacqfunctions}.

\subsection{Entropy Search}
\label{sec:app-proof-es}

\textbf{Proposition~\ref{prop:equiv_es}.}
If we choose $\Ac = \Pc(\Theta)$ and $\ell(f, q) = - \log q( \theta_f )$, then the $\mathrm{EHIG}$ is equivalent to the entropy search acquisition function,
i.e. $\mathrm{EHIG}_t(x; \ell, \Ac) = \mathrm{ES}_t(x)$.

\begin{proof}[Proof of Proposition~\ref{prop:equiv_es}]
We first prove that under our definition of loss $\ell$, the $H_{\ell, \Ac}$-entropy $H[f \mid \Dc_t]$ is equivalent to the Shannon entropy of the posterior distribution over $\theta_f$ (where $\theta_f$ denotes a property of $f$ that we would like to infer---as an example, $\theta_f$ could be equal to the global maximizer $x^*$ of $f$).

Note that the $H_{\ell, \Ac}$-entropy is the expected loss of the Bayes action
\begin{align*}
q^* = {\arg\inf}_{q \in \Pc(\Xc)} \expecf{p(f|\Dc_t)}{-\log q(\theta_f)}.
\end{align*}
We want to show that $q^*$ defined above is equal to $p(\theta_f \mid \Dc_t)$. To do so, note that
\begin{align}
q^* &= {\arg\inf}_{q \in \Pc(\Xc)} \expecf{p(f|\Dc_t)}{-\log q( \theta_f |\Dc_t)} \\
&= {\arg\inf}_{q \in \Pc(\Xc)} \expecf{p( \theta_f |\Dc_t)}{-\log q( \theta_f |\Dc_t)}\\
&= p( \theta_f |\Dc_t),
\end{align}
where the first equality holds since
\begin{align} 
    E_X[f(g(X))] = E_Z[f(Z)], \text{when } Z = g(X),
\end{align}
and the second equality holds since we can view
$\expecf{p( \theta_f |\Dc_t)}{-\log q( \theta_f |\Dc_t)}$
as a cross entropy,
% $H[p( \theta_f |\Dc_t), q( \theta_f |\Dc_t)]$,
which is minimized when $q( \theta_f |\Dc_t) = p( \theta_f |\Dc_t)$.
Therefore, under this loss and action set, using the definition of the EHIG we can write
\begin{align} 
\mathrm{EHIG}_t(x; \ell, \Ac)
= H\left[ p(\theta_f \mid \Dc_t) \right]
- \Eb_{p(y_x \mid \Dc_t)} \left[ H\left[ p(\theta_f \mid \Dc_t \cup \lbrace x, y_x \rbrace) \right] \right]
= \mathrm{ES}_t(x).
\end{align}

\end{proof}

\subsection{Knowledge Gradient}
\label{sec:app-proof-kg}

\textbf{Proposition 2.}
If we choose $\Ac = \Xc$ and $\ell(f, x) = -f(x)$, then the $\mathrm{EHIG}$ is equivalent to the knowledge gradient acquisition function,
i.e. $\mathrm{EHIG}_t(x; \ell, \Ac) = \mathrm{KG}_t(x)$.

\begin{proof}[Proof of Proposition 2.]
The proof follows directly from the definition of $H_{\ell, \Ac}$-entropy and the EHIG, namely
\begin{align}
    \text{EHIG}_t(x)
    &= \inf_{a \in \Ac} \expecf{p(f \mid \Dc_t)}{ \ell(f, a) }
    -  \expecf{p(y_x | \Dc_t)}{
    \inf_{a \in \Ac} \expecf{p(f \mid \Dc_t \cup \{(x, y_x)\})}{ \ell(f, a) }
    }\\
    &= \inf_{x' \in \Xc} \expecf{p(f \mid \Dc_t)}{ -f(x') }
    -  \expecf{p(y_x | \Dc_t)}{
    \inf_{x' \in \Xc} \expecf{p(f \mid \Dc_t \cup \{(x, y_x)\})}{ -f(x') }
    }\\
    &= - \sup_{x' \in \Xc} \expecf{p(f \mid \Dc_t)}{ f(x') }
    +  \expecf{p(y_x | \Dc_t)}{
    \sup_{x' \in \Xc} \expecf{p(f \mid \Dc_t \cup \{(x, y_x)\})}{ f(x') }
    }\\
    &= \expecf{p(y_x | \Dc_t)}{ \mu_{t+1}^*(x, y_x) } - \mu_t^*\\
    &= \text{KG}_t(x) \hspace{4mm}
\end{align}

\end{proof}

\subsection{Expected Improvement}
\label{sec:app-proof-ei}

\textbf{Proposition 3.}
If we choose $\Ac_t = \{x_i\}_{i=1}^{t-1}$, where $x_i \in \Dc_t$,
and $\ell(f, x_i) = -f(x_i)$, then the $\mathrm{EHIG}$ is equal 
to the expected improvement acquisition function, i.e.
$\mathrm{EHIG}_t(x; \ell, \Ac) = \mathrm{EI}_t(x)$.

\begin{proof}[Proof of Proposition 3]
The first term of $\mathrm{EHIG}_t$ in \eqref{eq:ehig} is equal to:
\begin{align} 
H_{\ell, \Ac_t}[f \mid \Dc_t] =\inf_{a \in \Ac_t} \expecf{p(f \mid \Dc_t)}{ \ell(f, a) } = -\max_{i \leq t-1} \hat{f}(x_i) := -f^*_t
\end{align}
where $\hat{f}(x_i)$ is the posterior expected value of $f$ at $x_i$.

The second term in \eqref{eq:ehig} is:
\begin{align}
    &\expecf{p(y_x | \Dc_t)}{
        H_{\ell, \Ac_{t+1}} \left[f \mid \Dc_t \cup \{(x, y_x)\} \right]
    }\\
    =&\expecf{p(y_x | \Dc_t)}{
        \expecf{p(f \mid \Dc_t \cup \{(x, y_x)\})}{\inf_{a \in A_{t+1}} \ell(f, a) }
    }\\
    =& \expecf{p(y_x | \Dc_t)}{
     \expecf{p(f \mid \Dc_t \cup \{(x, y_x)\})}{-\max(f^*_t, f(x))}
    }\\
    =& \expecf{p(y_x | \Dc_t)}{
     -\max(f^*_t, y_x)
    }
\end{align}
Putting it together, the $\mathrm{EHIG}_t$ acquisition function in \eqref{eq:ehig} will reduce to:
\begin{align}
    \mathrm{EHIG}_t(x; \ell, \Ac)
    &= - f^*_t - \expecf{p(y_x | \Dc_t)}{
     -\max(f^*_t, y_x)
    }\\
    &=  \Eb_{p(y_x \mid \Dc_t)}[\max(0, y_x- f^*_t) ]\\
    &= \mathrm{EI}_t(x).
\end{align}

\end{proof}

\subsection{Probability of Improvement}
\label{sec:app-proof-pi}

We additionally include a result below showing that the probability of improvement (PI) acquisition function can similarly be viewed as a special case of the proposed EHIG family.

\textbf{Proposition 4.}
For some constant $\tau$, the acquisition function of PI is defined as $\mathrm{PI}_\tau(x; \mathcal{D}_t) = \mathbb{E}_{p(f|\mathcal{D}_t)} [\mathbb{I}(f(x) - \tau > 0)]$, where $\mathbb{I}(\cdot)$ is the indicator function, and typically $\tau$ is taken to be equal to $f_t^* = \max_{i \leq t-1} \hat{f}(x_i)$ for $x_i \in \mathcal{D}_t$.
If we choose $\Ac_t = \{x_{t-1}\}$, where $x_{t - 1} \in \Dc_t$,
and $\ell_\tau(f, x) = -\mathbb{I}(f(x) - \tau > 0)$,
then maximizing $\mathrm{EHIG}$ is equivalent to maximizing the probability of improvement acquisition function, i.e.
$\argmax_{x \in \mathcal{X}} \mathrm{EHIG}_t(x; \ell_\tau, \Ac) = \argmax_{x \in \mathcal{X}} \mathrm{PI}_\tau(x)$.

\begin{proof}[Proof of Proposition 4]
The first term of $\mathrm{EHIG}_t$ in \eqref{eq:ehig} is equal to:
\begin{align} 
H_{\ell, \Ac_t}[f \mid \Dc_t] =\inf_{a \in \Ac_t} \expecf{p(f \mid \Dc_t)}{ \ell(f, a) } = -\mathbb{I}(\hat{f}(x_{t-1}) - \tau > 0)
\end{align}
where $\hat{f}(x_{t-1})$ is the posterior expected value of $f$ at $x_{t-1}$. More importantly, $H_{\ell, \Ac_t}[f \mid \Dc_t]$ is a constant with respect to $x$ that we are optimizing.

The second term in \eqref{eq:ehig} is:
\begin{align}
    &\expecf{p(y_x | \Dc_t)}{
        H_{\ell, \Ac_{t+1}} \left[f \mid \Dc_t \cup \{(x, y_x)\} \right]
    }\\
    =&\expecf{p(y_x | \Dc_t)}{
        \inf_{a \in \{x\}}
        \expecf{p(f \mid \Dc_t \cup \{(x, y_x)\})}{\ell(f, a) }
    }\\
    =& \expecf{p(y_x | \Dc_t)}{
     \expecf{p(f \mid \Dc_t \cup \{(x, y_x)\})}{-\mathbb{I}(f(x) - \tau > 0)}
    }\\
    =& -\expecf{p(y_x | \Dc_t)}{
     \mathbb{I}(y_x - \tau > 0)
    }
\end{align}
Putting it together, the $\mathrm{EHIG}_t$ acquisition function in \eqref{eq:ehig} will reduce to:
\begin{align}
    \mathrm{EHIG}_t(x; \ell_\tau, \Ac)
    &= -\mathbb{I}(\hat{f}(x_{t-1}) - \tau > 0) + \expecf{p(y_x | \Dc_t)}{
     \mathbb{I}(y_x - \tau > 0)
    }\\
    &=  \expecf{p(y_x | \Dc_t)}{
     \mathbb{I}(y_x - \tau > 0)
    } + \mathrm{constant}\\
    &= \mathrm{PI}_\tau(x) + \mathrm{constant}.
\end{align}
Thus maximizing $\mathrm{EHIG}$ is equivalent to maximizing the probability of improvement acquisition function.

\end{proof}

\section{Additional Experimental Details and Results}
\label{sec:app-additionalexperiments}
\label{app:sec:dataset}

\paragraph{Details on the \textit{Alpine-$d$} function.}
The multimodal \textit{Alpine-$d$} function is defined as
$\textit{Alpine-}d(x) = \sum_{i=1}^d |x_i \sin(x_i) + 0.1 x_i |$, for $x \in \Rbb^d$.

\paragraph{Details on the \textit{Vaccination} function.}
The vaccination function is obtained by training a Multi-Layer Perceptron (MLP) network based on the data from \cite{yuan2021mobility}, which uses county-level vaccination data provided by the CDC, and uses small area estimation\footnote{\url{https://en.wikipedia.org/wiki/Small_area_estimation}} to interpolate the vaccination rate of every location. We restrict the optimization domain to be a rectangle focusing on the state of Pennsylvania.

\paragraph{Details on the \textit{Multihills} function.}
The \textit{Multihills} function is defined as a mixture density as follows.
$\textit{Multihills}(x) = \sum_{j=1}^J w_j \Nc(x \mid \mu_j, C_j)$, for $x \in \Rbb^d$,
where $\Nc$ denotes a multivariate normal density, $\{\mu_j\}$ are a set of $J$ means, 
$\{C_j\}$ are a set of J covarance matrices, and $\{w_j\}$ are a set of J weights.

% \section{More experimental details}
\paragraph{Details on the \textit{Pennsylvania Night Light} function.}
We consider the  2012  gray  scale global nightlight raster with resolution 0.1 degree per pixel. The data is downloaded from NASA Earth Observatory\footnote{\url{https://earthobservatory.nasa.gov/features/NightLights}}. We restrict the optimization domain to be a rectangle focusing on the state of Pennsylvania and normalize all
raster data before use.
Each location query gives a value proportional to the average amount of night light at that location.

\paragraph{Computational Cost.}
While using the $\textrm{EHIG}_t(x; \ell, \Ac)$ acquisition function in Bayesian optimization
(Algorithm~\ref{alg:hes}) is more expensive than simpler methods (e.g. expected improvement (EI)),
in many cases it has a comparable computational cost to methods such as knowledge gradient (KG)
or entropy search (ES) methods, when applied to the same task---in fact, our implementation has
a similar structure as one-shot knowledge gradient acquisition optimization methods.

The following timing results compare the average cost (\textit{mean wall clock time in seconds})
of acquisition optimization for a set of comparison methods, including EI as an additional method, on the \textit{Alpine-$2$} function from the first experiment in our paper:
\textbf{\textit{EHIG: 6.9s, KG: 6.6s, EI: 0.5s, US: 0.3s}}.

\newpage

\subsection{Additional Experiment Results and Visualizations.}

We show further experiment results for multi-level set estimation and sequence search (Figure 5), visualizations for multi-level set estimation (Figure 6), and an additional comparisons of classic BO acquisition functions on the initial top-$k$ optimization experiments (Figure 7).

\begin{figure*}[h!]
\centering
\includegraphics[width=0.3\linewidth]{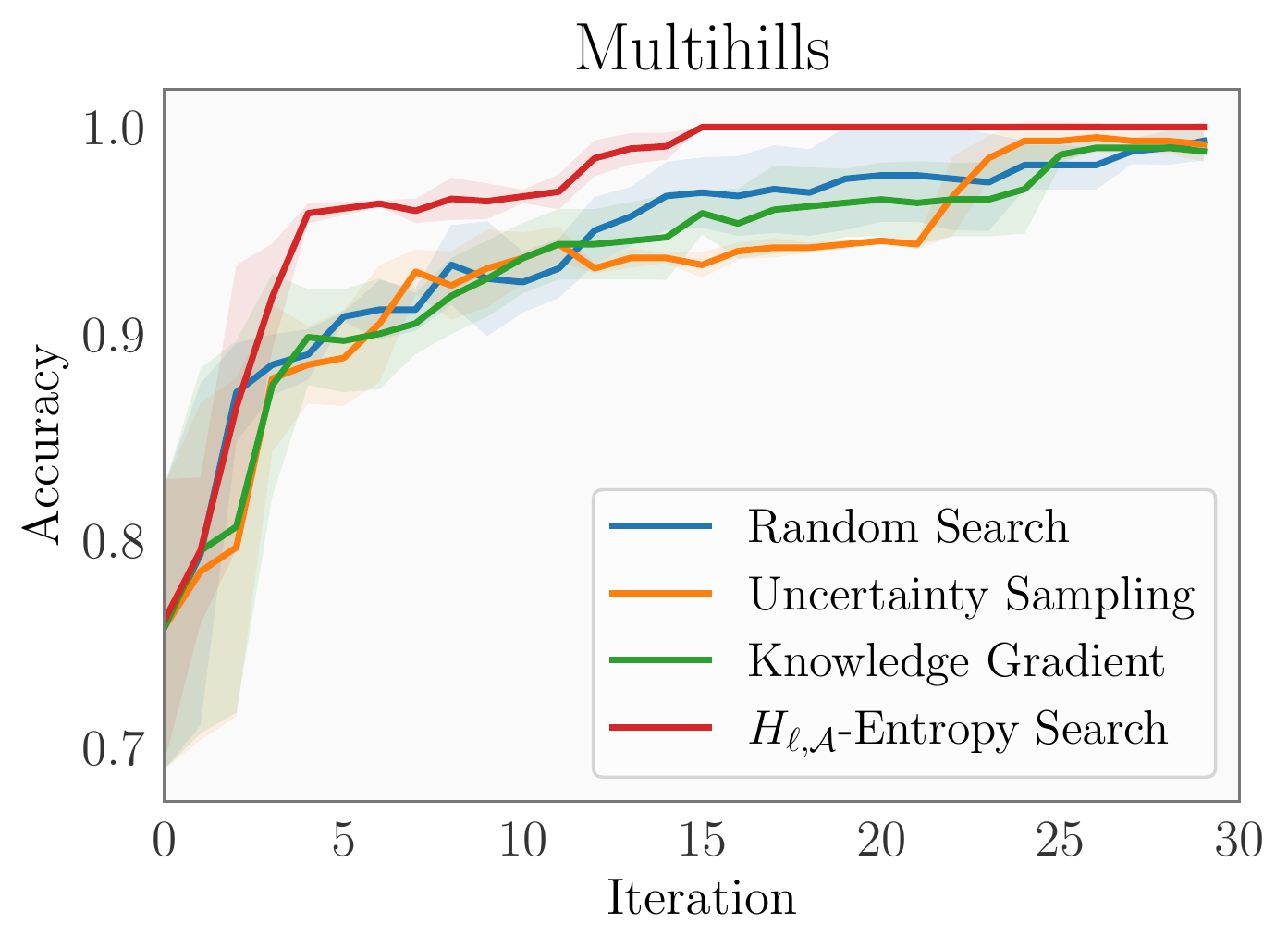}
\includegraphics[width=0.3\linewidth]{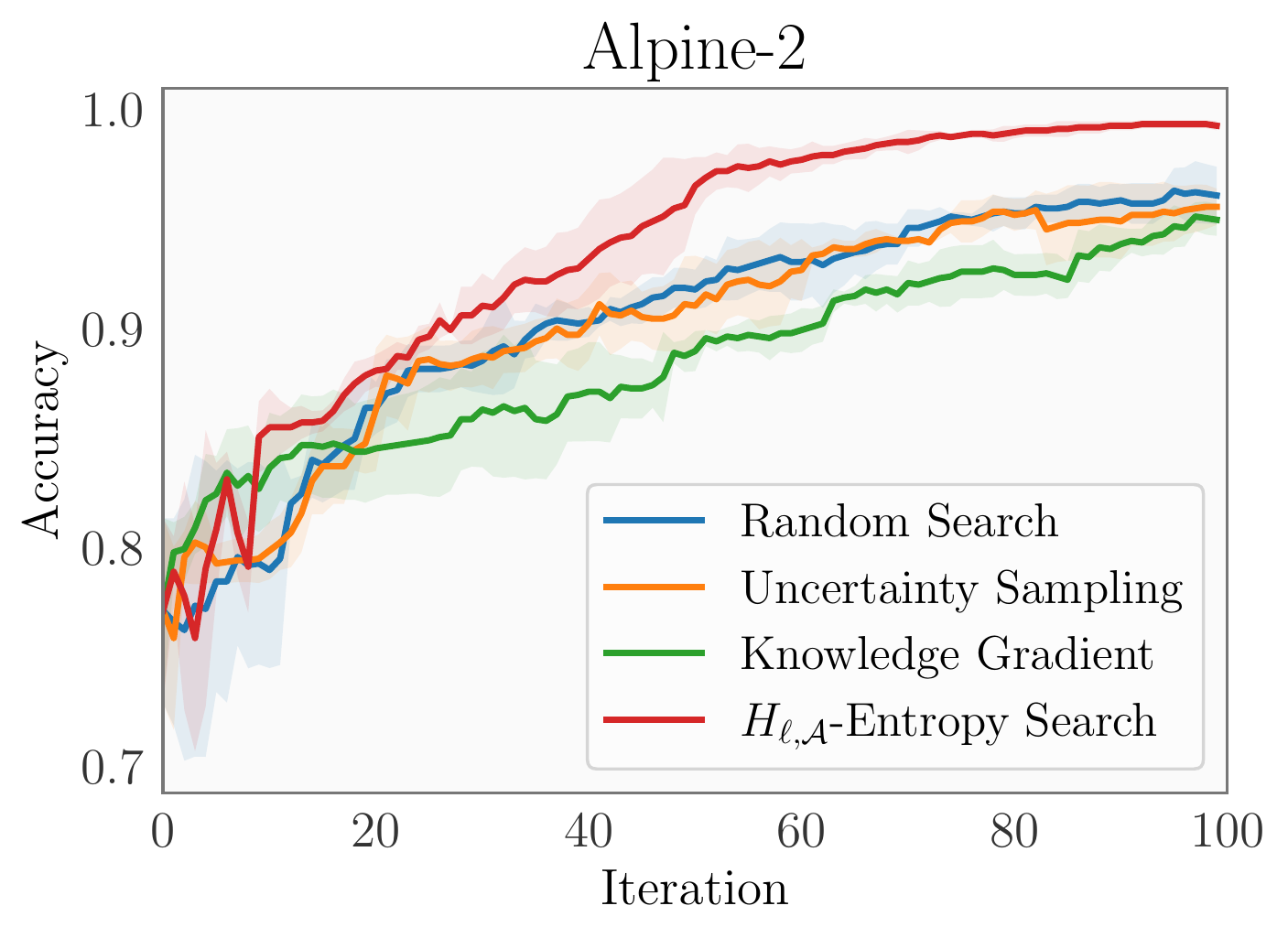}
\includegraphics[width=0.3\linewidth]{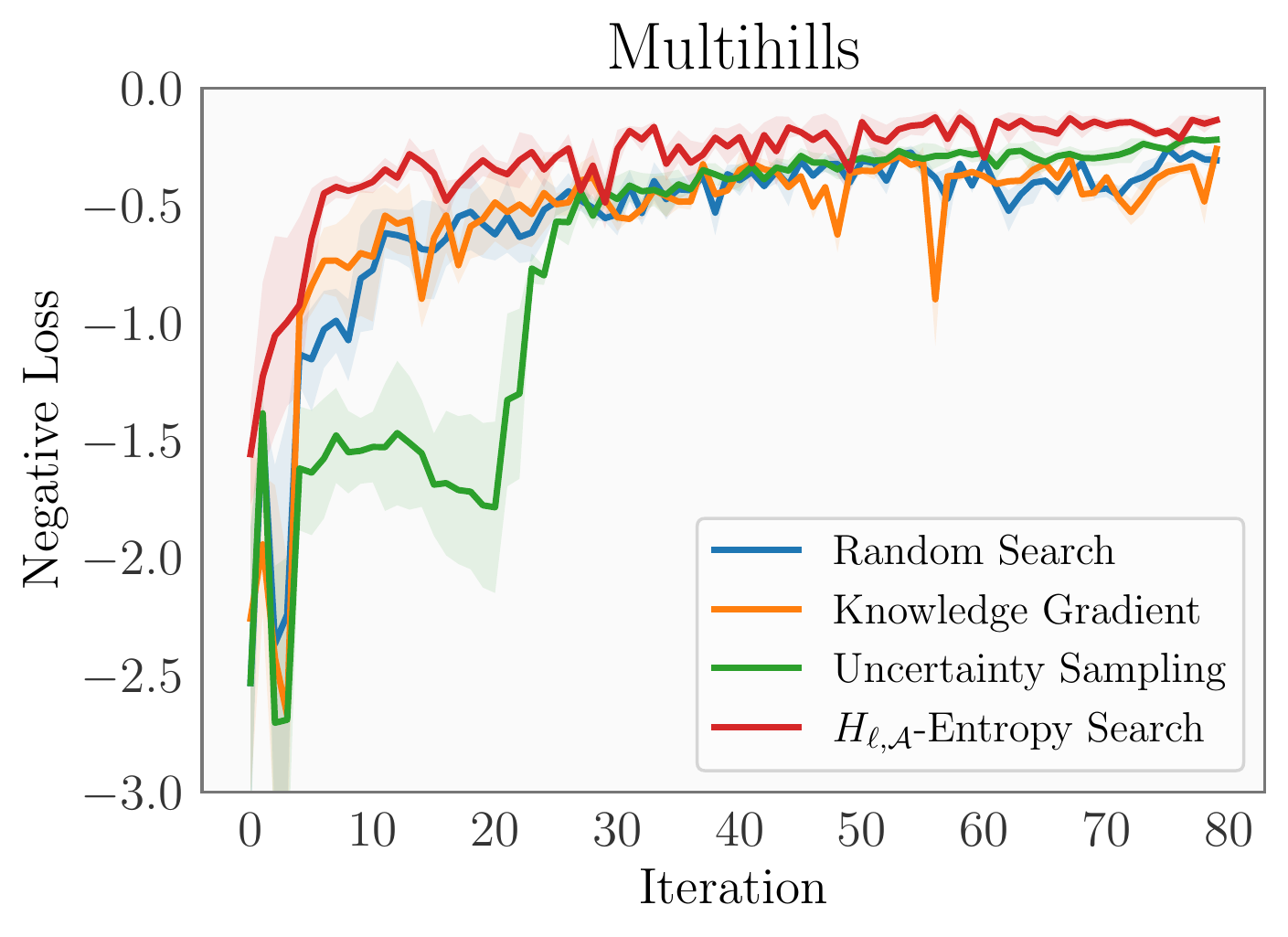}
\caption{
\small
\textbf{Multi-level set estimation and sequence search.}
\textit{Left and center}: Plots of accuracy versus iteration for the task of multi-level set estimation
(Equation~(\ref{eq:mlseloss}), $m=1$), where error bars represent one standard error.
\textit{Right}: Plot of negative loss versus iteration for the task of sequence search
(Equation~(\ref{eq:mvsloss})), where error bars represent one standard error.
}
\label{fig:app-levelset-curves}
\end{figure*}

\begin{figure}[h!]
\centering
\includegraphics[width=0.25\linewidth]{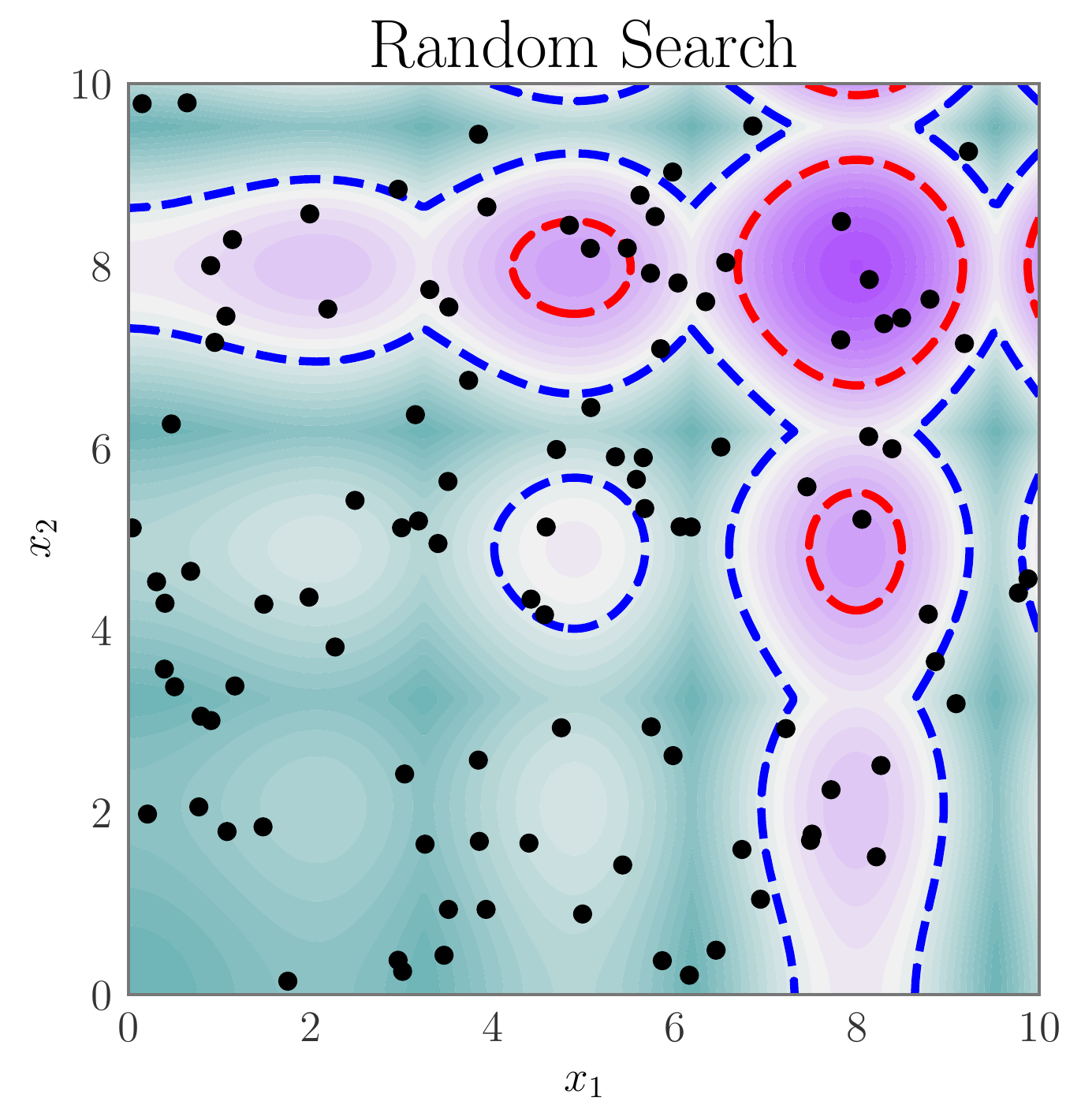}\hfill
\includegraphics[width=0.25\linewidth]{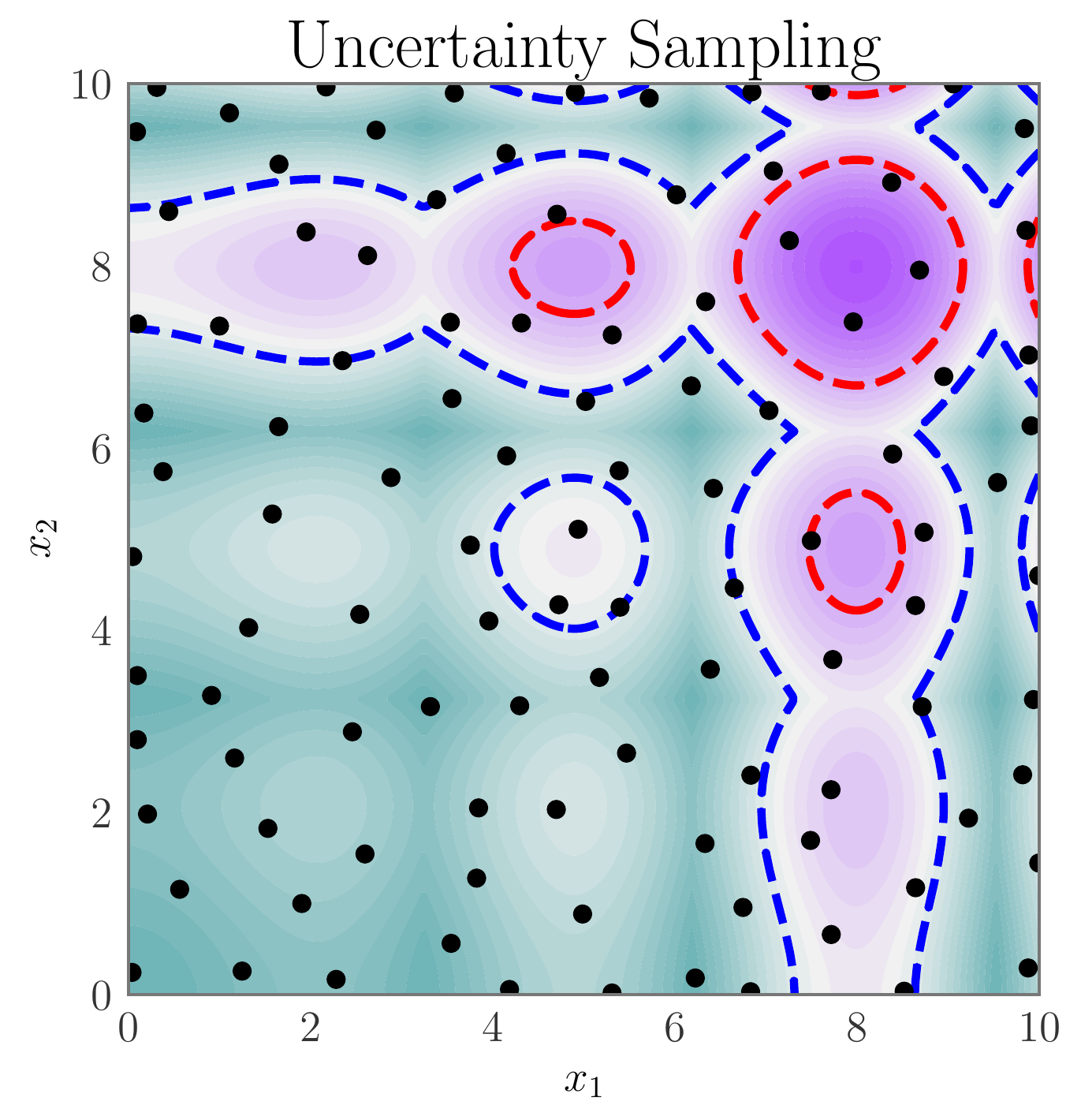}\hfill
\includegraphics[width=0.25\linewidth]{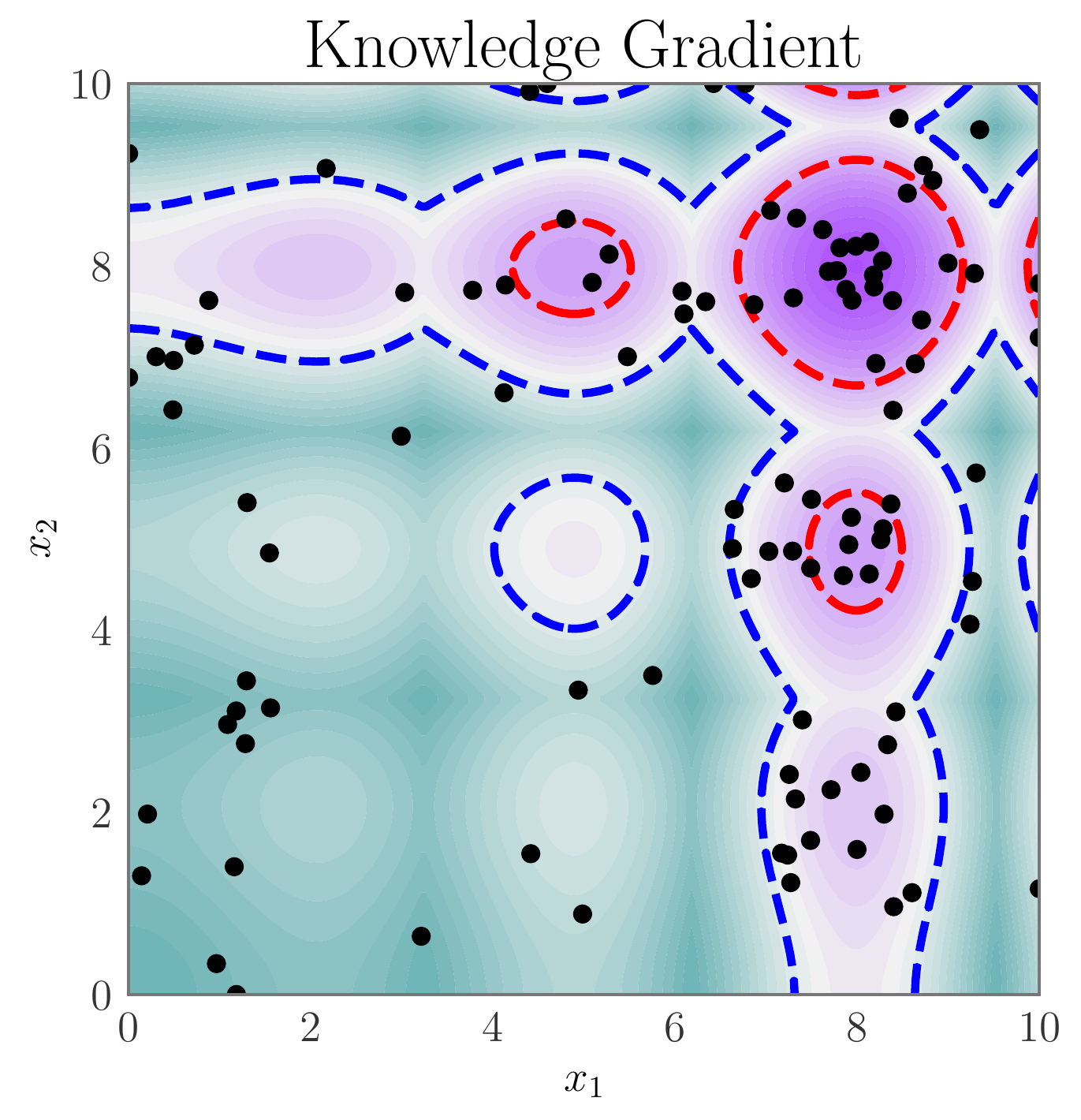}\hfill
\includegraphics[width=0.25\linewidth]{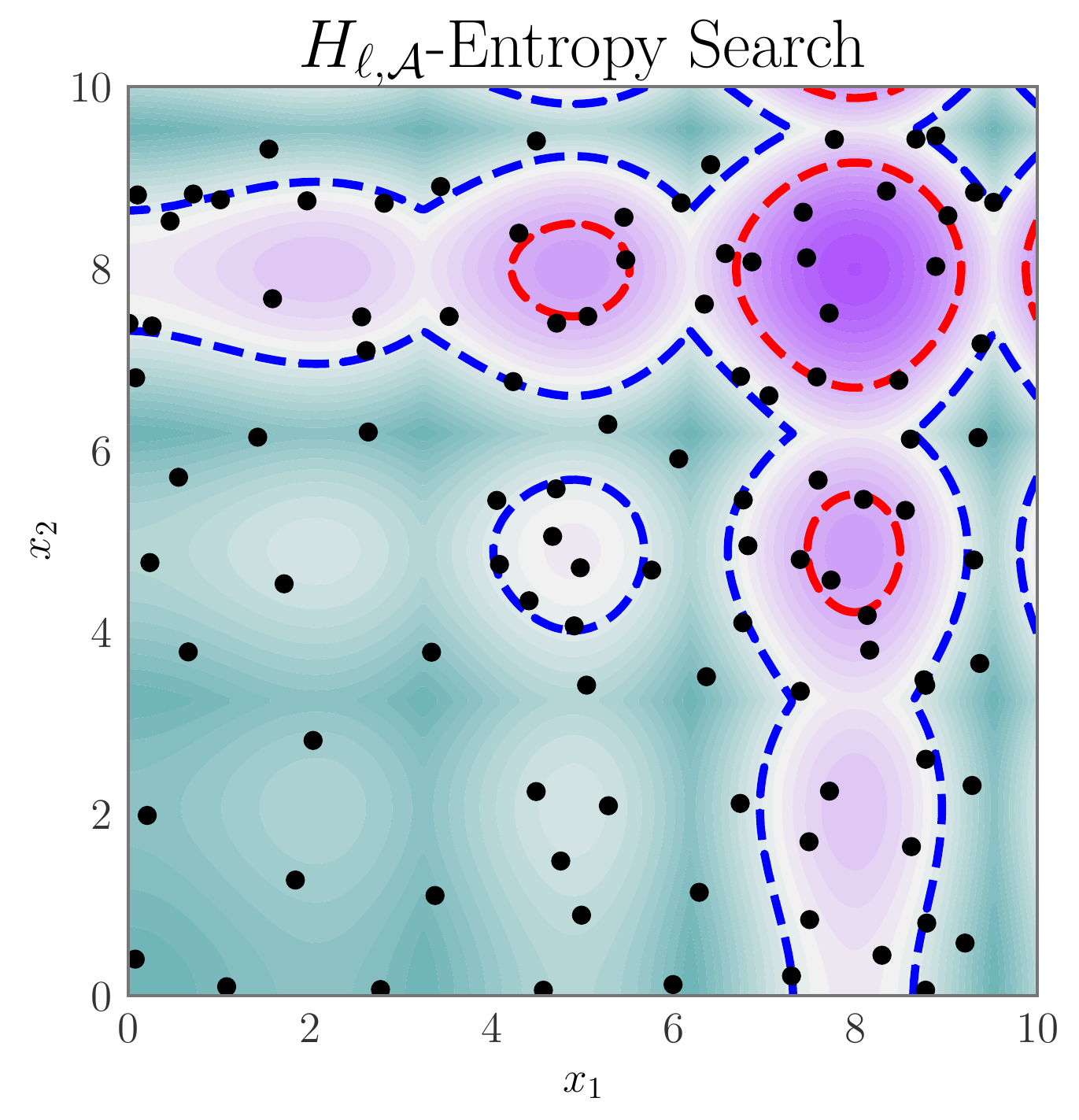}\\
\includegraphics[width=0.25\linewidth]{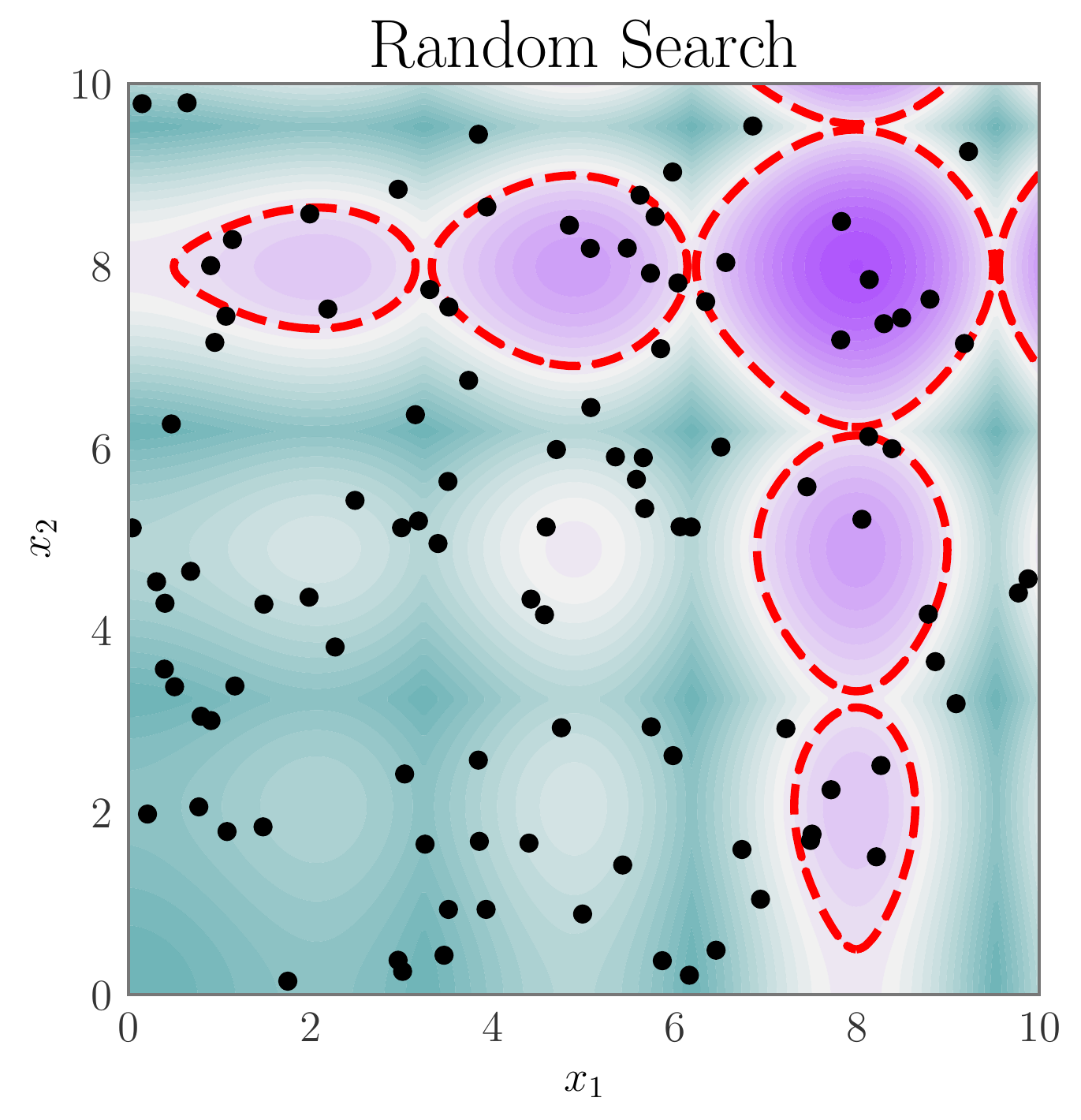}\hfill
\includegraphics[width=0.25\linewidth]{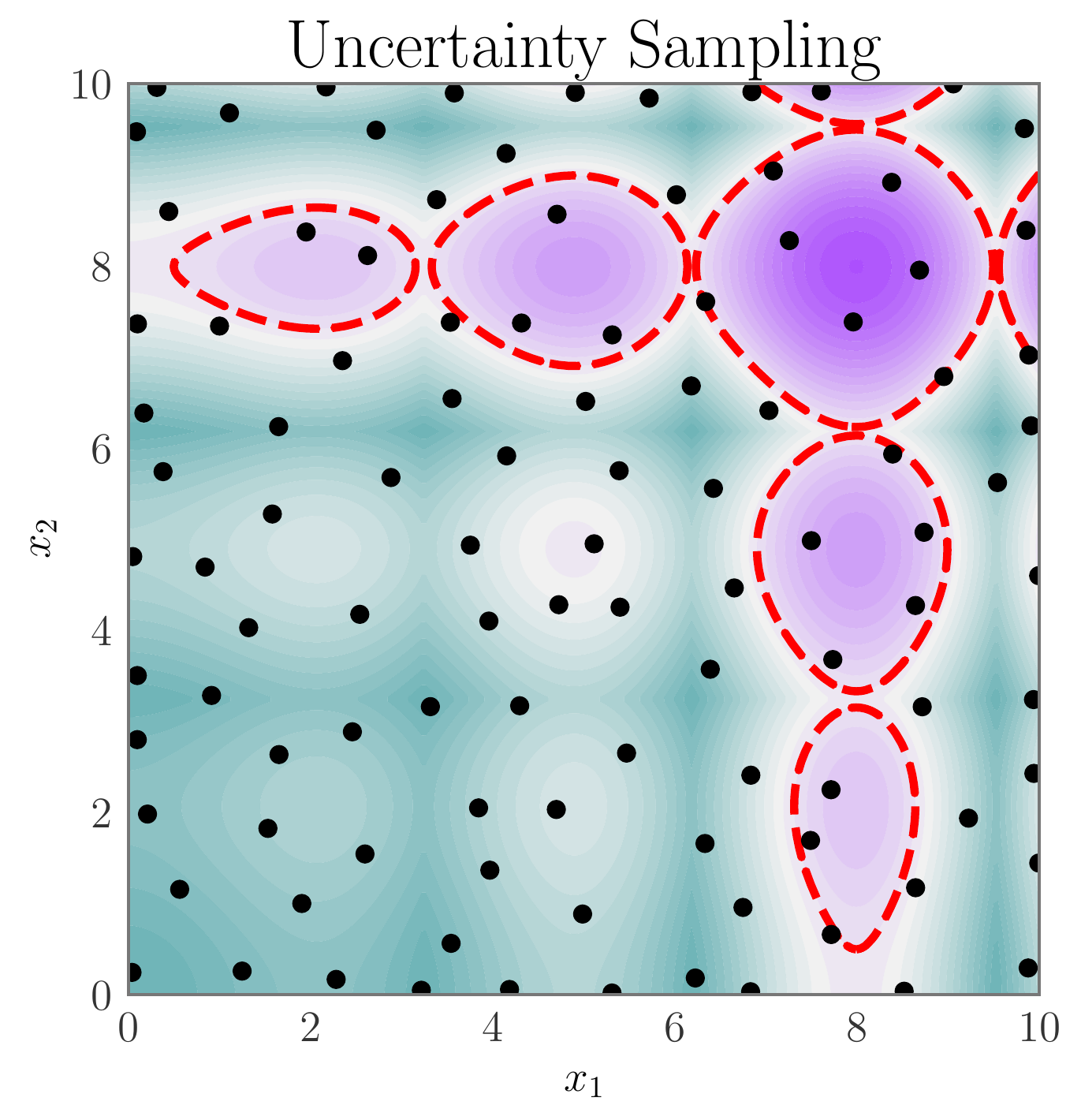}\hfill
\includegraphics[width=0.25\linewidth]{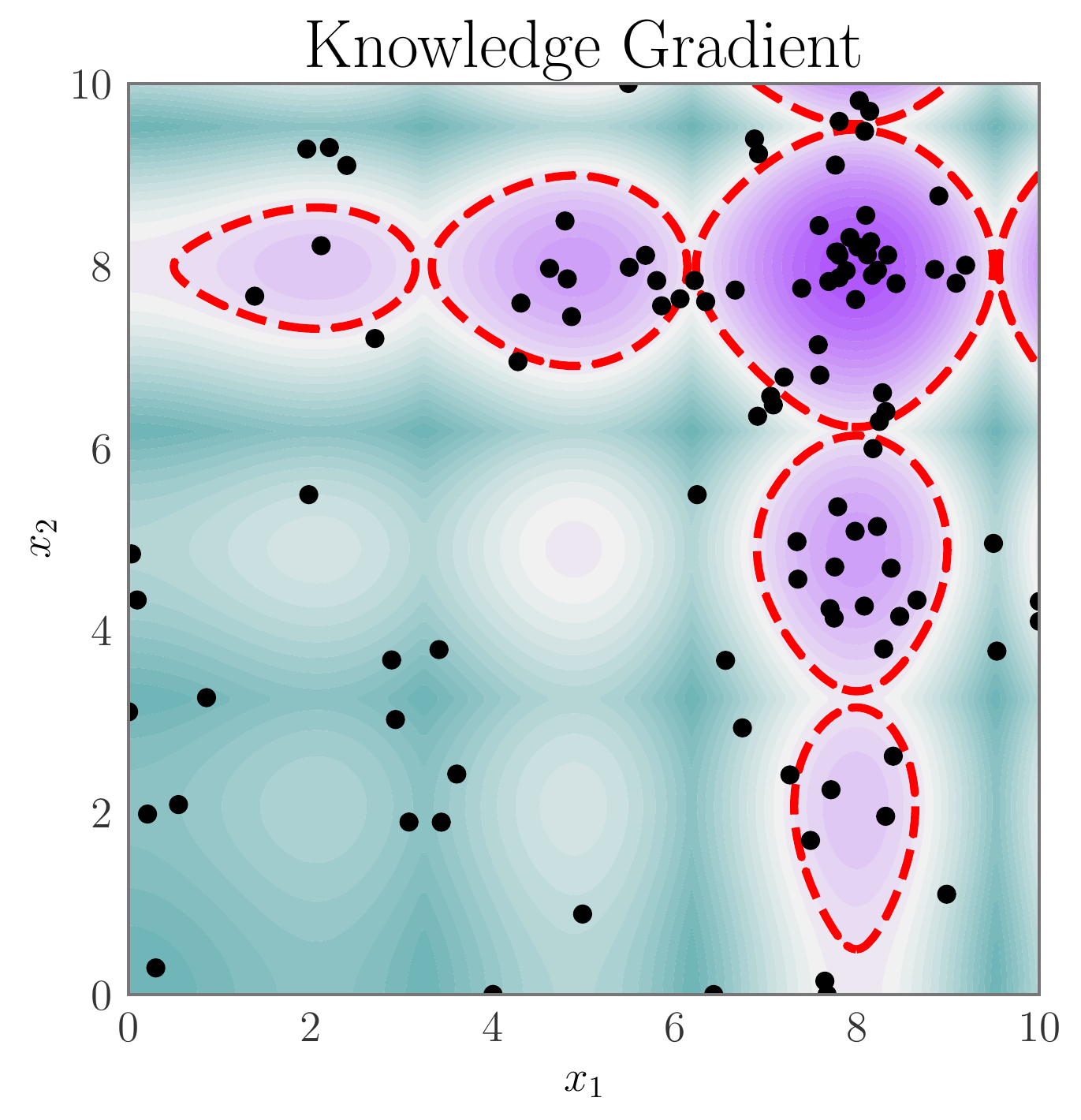}\hfill
\includegraphics[width=0.25\linewidth]{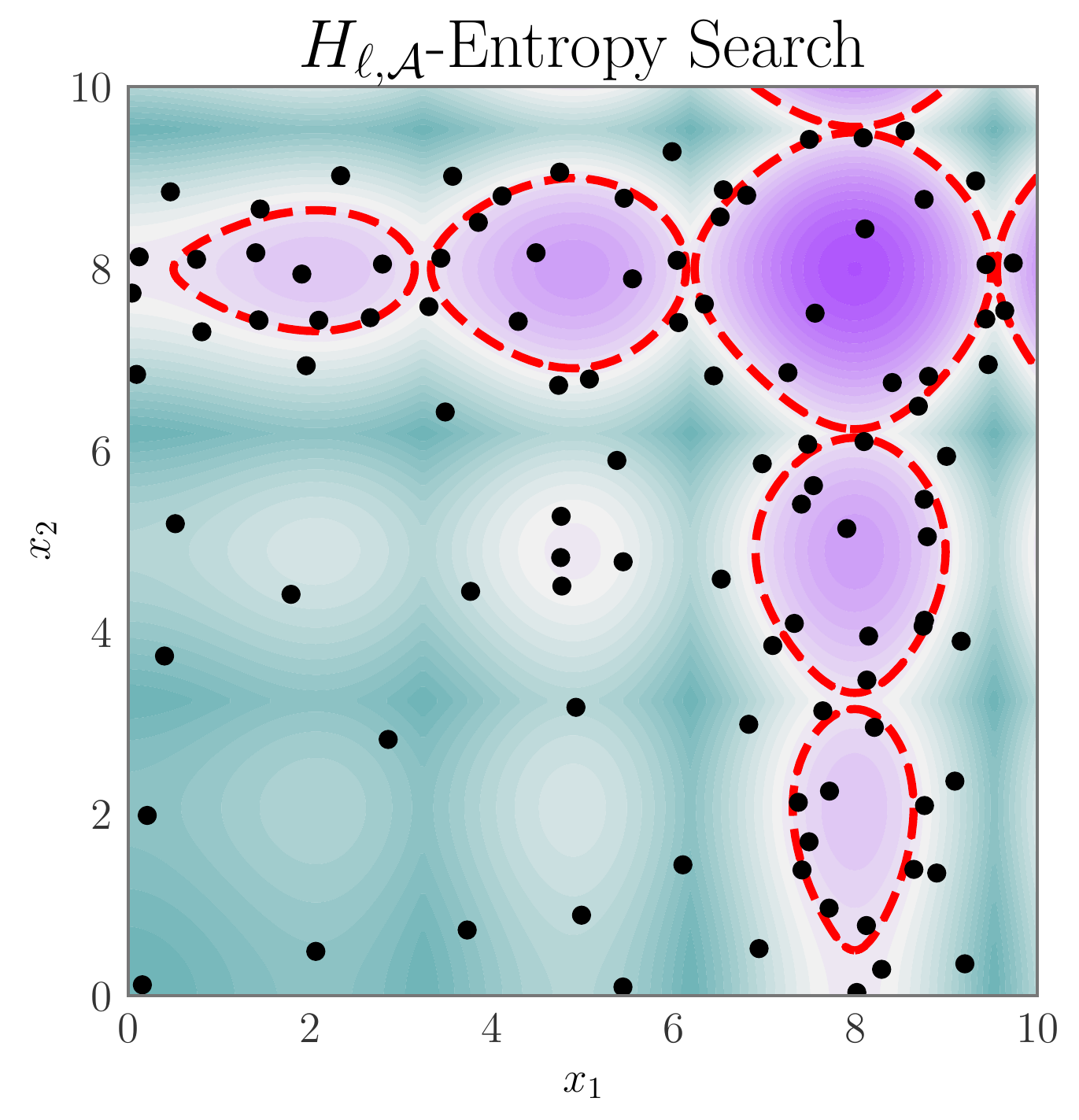}\\
\includegraphics[width=0.245\linewidth]{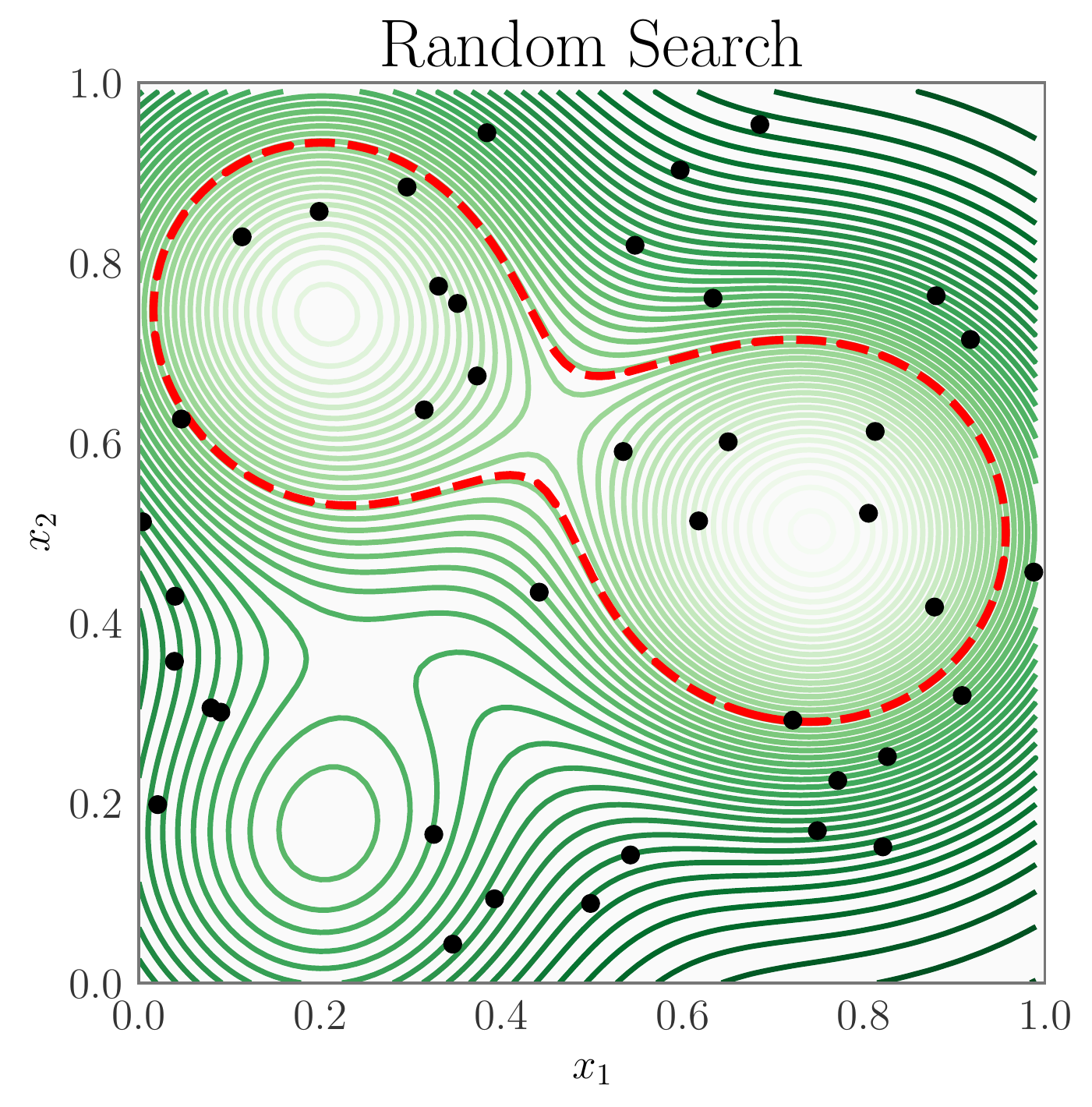}
\includegraphics[width=0.245\linewidth]{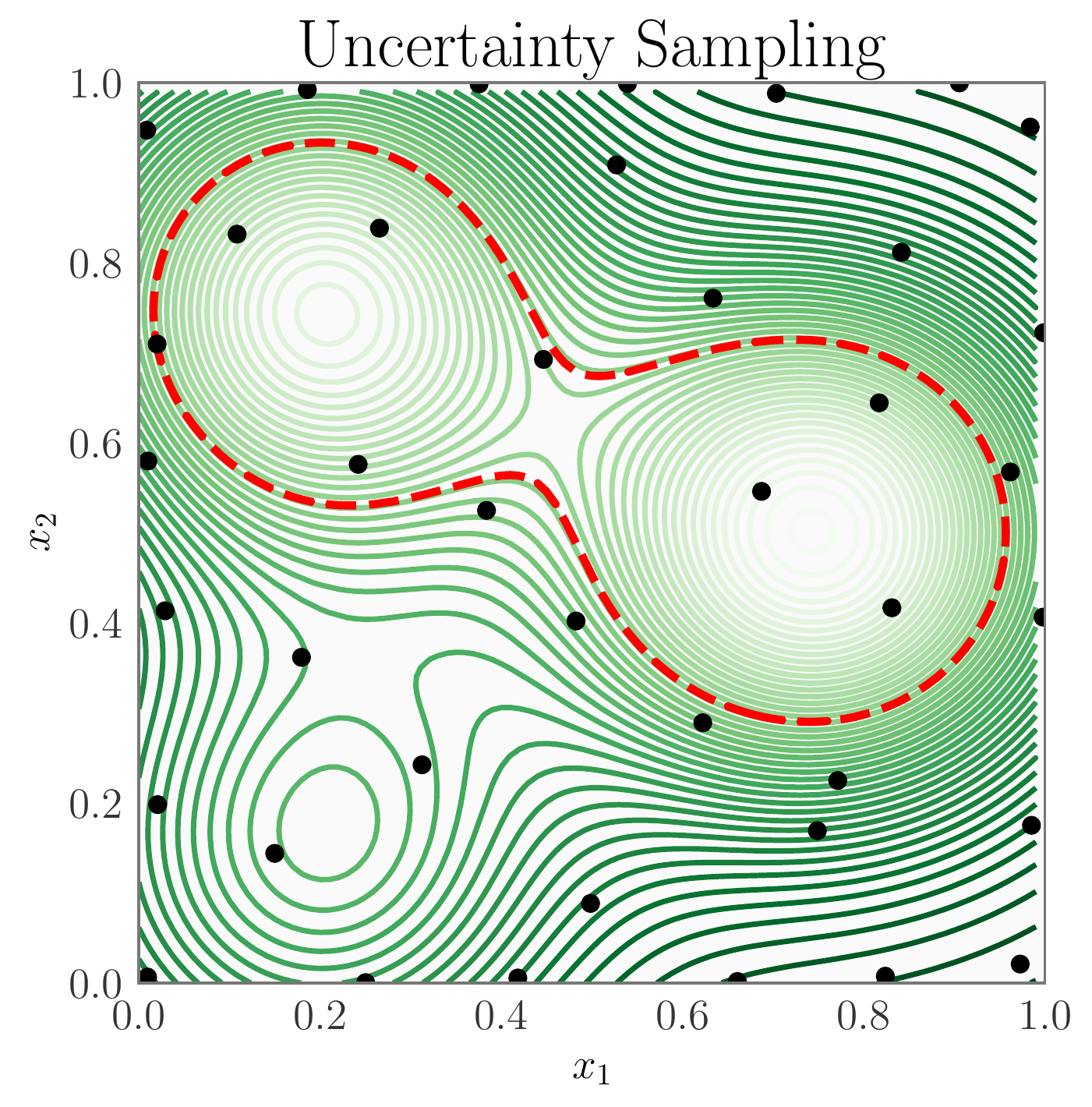}
\includegraphics[width=0.245\linewidth]{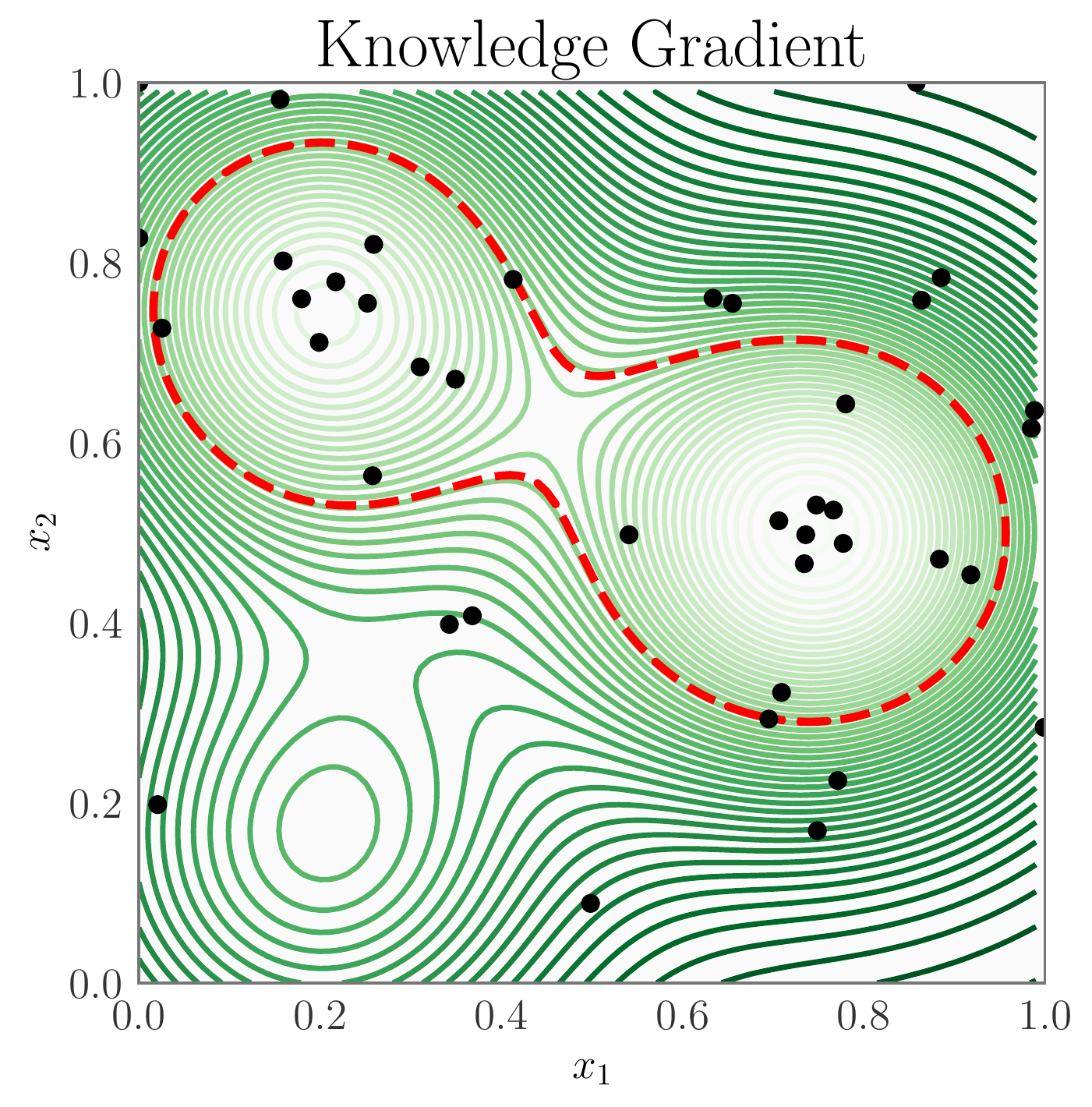}
\includegraphics[width=0.245\linewidth]{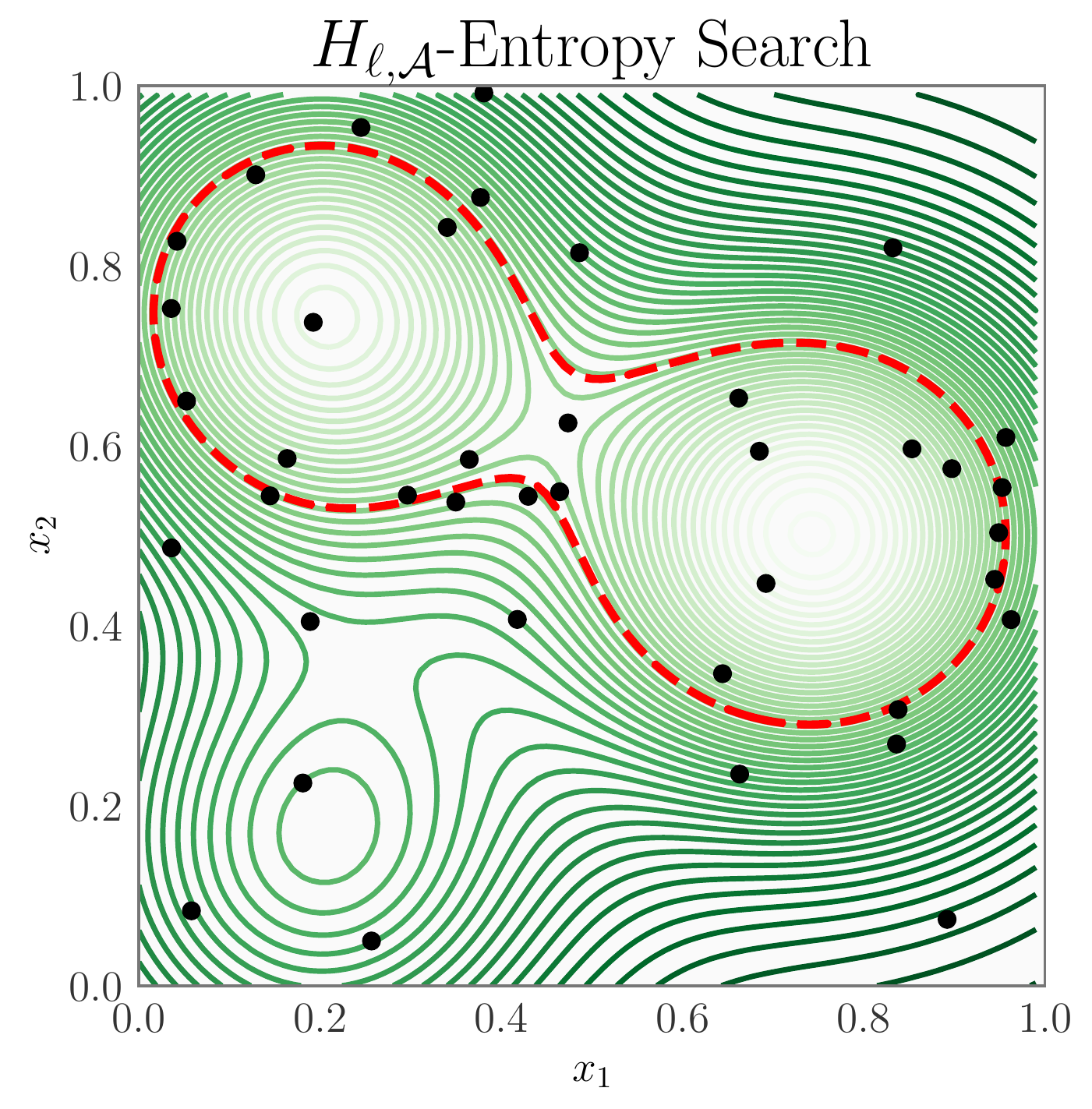}\\
\includegraphics[width=0.24\linewidth]{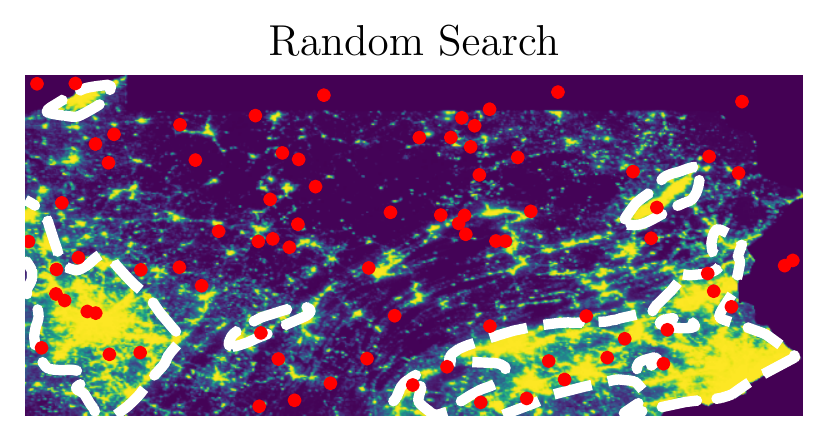}
\includegraphics[width=0.24\linewidth]{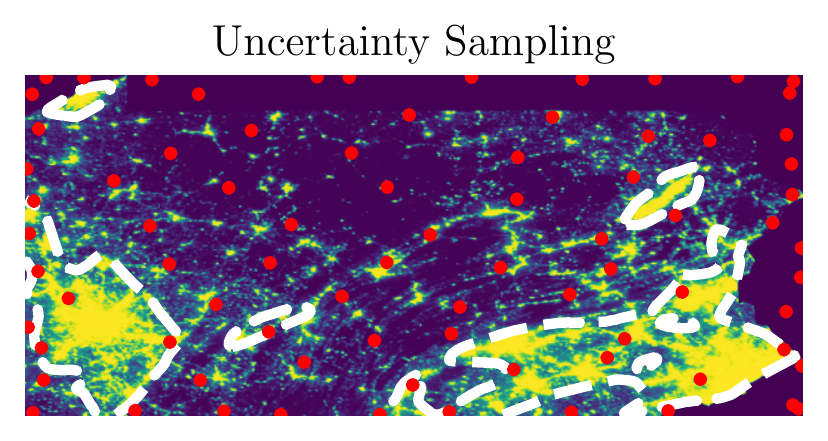}
\includegraphics[width=0.24\linewidth]{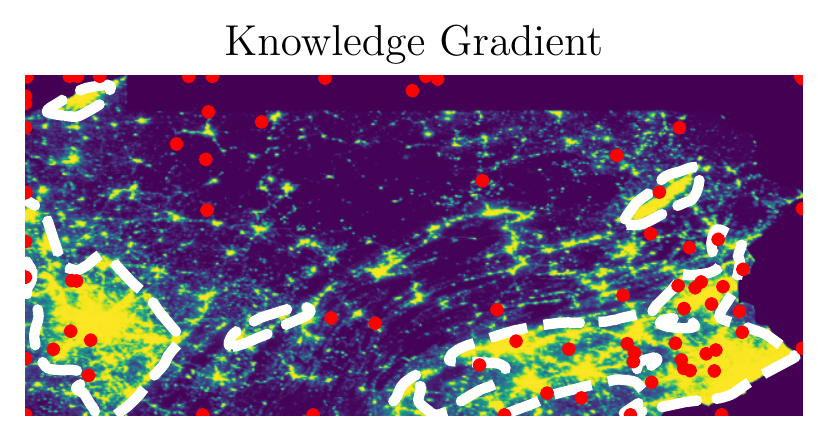}
\includegraphics[width=0.24\linewidth]{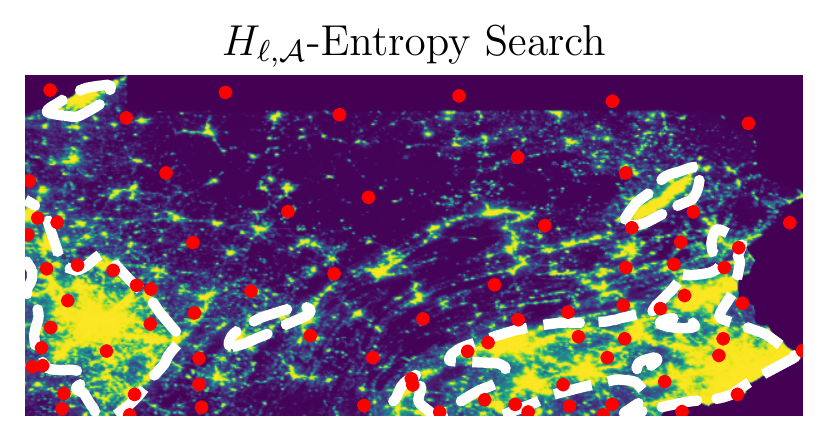}
\caption{\small
\textbf{Visualization results for multi-level set estimation.}
Visualization of multi-level set estimation for Alpine-$2$, Multihills, and
the Pennsylvania Night Light (PNL) functions. 
We show the ground-truth level set thresholds with red and blue dashed lines (for Alpine-$2$ and Multihills)
and white dashed line (for the PNL function). The queries $\Dc_t$ taken by each method are shown with black
dots (for Alpine-$2$ and Multihills) and red dots (for the PNL function). We observe that the queries taken
by $H_{\ell, \mathcal{A}}$-Entropy Search focus on level set boundaries, yielding a fine-grained estimate
near these boundary curves, while the other methods fail to do so.
}
\label{fig:app-levelset}
\end{figure}

% \begin{figure}
%   \centering
%   \fbox{\rule[-.5cm]{0cm}{4cm} \rule[-.5cm]{4cm}{0cm}}
%   \caption{Sample figure caption.}
% \end{figure}

\end{document}